
\documentclass{article}

\usepackage{microtype}
\usepackage{graphicx}
\usepackage{subfigure}
\usepackage{booktabs} 
\usepackage{hyperref}
\PassOptionsToPackage{sort}{natbib}

\usepackage{algorithm}
\usepackage{algorithmic}
\usepackage[final]{neurips_2022}


\usepackage{amsmath,amsfonts,bm}
\usepackage{amsthm}









\def\eqref#1{equation~\ref{#1}}









\def\1{\bm{1}}








\def\vg{{\bm{g}}}

\def\vm{{\bm{m}}}

\def\vp{{\bm{p}}}
\def\vq{{\bm{q}}}

\def\vs{{\bm{s}}}

\def\vu{{\bm{u}}}
\def\vv{{\bm{v}}}
\def\vw{{\bm{w}}}
\def\vx{{\bm{x}}}
\def\vy{{\bm{y}}}


\def\mA{{\bm{A}}}
\def\mB{{\bm{B}}}
\def\mC{{\bm{C}}}
\def\mD{{\bm{D}}}

\def\mF{{\bm{F}}}
\def\mG{{\bm{G}}}
\def\mH{{\bm{H}}}
\def\mI{{\bm{I}}}

\def\mL{{\bm{L}}}
\def\mM{{\bm{M}}}

\def\mO{{\bm{O}}}
\def\mP{{\bm{P}}}
\def\mQ{{\bm{Q}}}
\def\mR{{\bm{R}}}
\def\mS{{\bm{S}}}

\def\mU{{\bm{U}}}
\def\mV{{\bm{V}}}
\def\mW{{\bm{W}}}

\def\mY{{\bm{Y}}}
\def\mZ{{\bm{Z}}}

\DeclareMathAlphabet{\mathsfit}{\encodingdefault}{\sfdefault}{m}{sl}
\SetMathAlphabet{\mathsfit}{bold}{\encodingdefault}{\sfdefault}{bx}{n}











\newcommand{\E}{\mathbb{E}}

\newcommand{\R}{\mathbb{R}}



\DeclareMathOperator*{\argmin}{arg\,min}



\newcommand{\Rmn}{\mathbb{R}^{m\times n}}
\newcommand{\Rmm}{\mathbb{R}^{m\times m}}
\newcommand{\Rnn}{\mathbb{R}^{n\times n}}

\newcommand{\Rmr}{\mathbb{R}^{m\times r}}
\newcommand{\Rrn}{\mathbb{R}^{r\times n}}

\DeclareMathOperator{\svd}{SVD}
\DeclareMathOperator{\eig}{EVD}

\DeclareMathOperator{\qr}{QR}
\DeclareMathOperator{\tr}{Tr}
\DeclareMathOperator{\vect}{Vec}
\DeclareMathOperator{\devect}{Mat}
\DeclareMathOperator{\normalize}{Norm}
\DeclareMathOperator{\whiten}{Whitening}
\newcommand{\Ft}{\tilde{\bm{F}}}
\newcommand{\family}{\mathcal{H}}
\newcommand{\Fnorm}[1]{\Vert#1\Vert_F^2}

\DeclareMathOperator{\diag}{Diag}
\newcommand{\vecg}{\vec{\bm{g}}}
\newcommand{\Rnr}{\mR_n^{\frac{1}{2}}}
\newcommand{\Lmr}{\mL_m^{\frac{1}{2}}}
\newcommand{\vecgt}{\overrightarrow{[\mG^T]}}
\newcommand{\mUf}{\mU_{f}}


\newcommand{\elesquare}{^{\odot 2}}


\DeclareMathOperator{\diagb}{Diag_B}
\DeclareMathOperator{\diagv}{Diag_v}
\DeclareMathOperator{\diagm}{Diag_M}

\usepackage{ulem}
\usepackage{pifont}


\theoremstyle{plain}
\newtheorem{theorem}{Theorem}[section]
\newtheorem{proposition}{Proposition}
\newtheorem{lemma}{Lemma}

\theoremstyle{definition}
\newtheorem{definition}[theorem]{Definition}

\theoremstyle{remark}

\usepackage{amsmath}

\usepackage[textsize=tiny]{todonotes}

\usepackage[acronym, style=list]{glossaries}
\newacronym{fim}{FIM}{Fisher information matrix}
\newacronym{f-norm}{F-norm}{Frobenius norm}
\newacronym{ngd}{NGD}{natural gradient descent}
\newacronym{spd}{SPD}{symmetric positive definite}
\newacronym{ema}{EMA}{exponential moving average}
\newacronym{sve}{SVE}{singular value editing}
\newacronym{evd}{EVD}{eigen-value decomposition}
\newacronym{alicec}{Eigen-Adam}{Eigenspace Adam}
\newacronym{asham}{ASH}{\underline{A}damized-\underline{Sh}ampoo}
\newacronym{noema}{SWAN}{stateless optimizer}
\newacronym{alice}{Alice}{\underline{A}daptive \underline{l}ow-d\underline{i}mensional subspa\underline{c}e \underline{e}stimation}
\newacronym{alicez}{Alice-0}{Alice without tracking}
\newacronym{tracking}{tracking}{tracking}
\newacronym{ssgd}{RACS}{Row and Column Scaled SGD}
\newacronym{racs}{RaCS}{Row and Column Scaled SGD}
\newacronym{svd}{SVD}{singular value decomposition}
\newacronym{sgd}{SGD}{stochastic gradient descent}
\makeglossaries

\usepackage[capitalize,noabbrev]{cleveref}


\title{Towards Efficient Optimizer Design for LLM via Structured Fisher Approximation with a Low-Rank Extension}

\author{%
Wenbo Gong$^*$ \\
Microsoft Research \\
\texttt{wenbogong@microsoft.com} \\
\And
Meyer Scetbon$^*$ \\
Microsoft Research \\
\texttt{t-mscetbon@microsoft.com} \\
\And
Chao Ma$^*$ \\
Microsoft Research \\
\texttt{chao.ma@microsoft.com} \\
\And
Edward Meeds \\
Microsoft Research \\
\texttt{edward.meeds@microsoft.com} \\
}

\crefname{equation}{Eq.}{Eqs.}
\Crefname{equation}{Eq.}{Eqs.}
\crefname{appendix}{App.}{Apps.}   
\Crefname{appendix}{App.}{Apps.}   

\crefname{figure}{Fig.}{Figs.}     
\Crefname{figure}{Fig.}{Figs.}     
\crefname{section}{Sec.}{Secs.}
\Crefname{section}{Sec.}{Secs.}

\begin{document}

\maketitle

\def\thefootnote{*}\footnotetext{These authors contributed equally to this work.}



\begin{abstract}
Designing efficient optimizers for large language models (LLMs) with low-memory requirements and fast convergence is an important and challenging problem. 
This paper makes a step towards the systematic design of such optimizers through the lens of structured \gls{fim} approximation. We show that many state-of-the-art efficient optimizers can be viewed as solutions to \gls{fim} approximation (under the Frobenius norm) with specific structural assumptions. Building on these insights, we propose two design recommendations of practical efficient optimizers for LLMs, involving the careful selection of structural assumptions to balance generality and efficiency, and enhancing memory efficiency of optimizers with general structures through a novel low-rank extension framework. We demonstrate how to use each design approach by deriving new memory-efficient optimizers: \gls{ssgd} and \gls{alice}. Experiments on LLaMA pre-training (up to 1B parameters) validate the effectiveness, showing faster and better convergence than existing memory-efficient baselines and Adam with little memory overhead. Notably, \gls{alice} achieves better than $2\times$ faster convergence over Adam, while \gls{ssgd} delivers strong performance on the 1B model with SGD-like memory.
\end{abstract}

\section{Introduction}
\label{sec: introduction}
Adaptive optimizers are critical in training large language models (LLMs). Yet, as models and datasets continue to grow, one important issue associated with scaling is the memory overhead of many optimizers, especially in a distributed training setup \citep{llama3, korthikanti2023reducing}. This has several implications for training, including increased GPU requirements or reduced per-device batch size, which lowers overall training throughput. Adam, for instance, triples memory requirements due to the storage of two internal \gls{ema} states, while other optimizers \citep{gupta2018shampoo, vyas2024soap} with faster convergence (in terms of training steps) can further inflate the total memory.  Meanwhile, some memory efficient optimizers, like \gls{sgd}, fail to train the LLMs effectively. Thus, designing efficient optimizers  has become increasingly important.\footnote{We define efficiency as the amount of memory and wall-clock time used to achieve a target evaluation loss}

There are several promising lines of research that have advanced one or more of these aspects of efficiency. One aims to design new optimizers that reduce, or remove entirely, internal optimizer states without expensive per-step computation \citep{ma2024swan, jordan2024muon, zhang2024adam, xu2024adamlearningratescaling, zhu2024apollo}. Alternatively, low-rank approximations of gradients have also been used to reduce the memory of states with a slight degradation in performance \citep{hu2021lora, lialin2023relora, zhao2024galore, chen2024fira, si2024flora}. 
Despite these advances, developing new optimizers remains challenging. To address this, this paper explores structured \gls{fim} approximation as a practical framework for optimizer design.

To demonstrate the effectiveness of this framework, we begin by showing that many existing optimizers and gradient operators, including Adam, Shampoo, gradient normalization and whitening \citep{zhang2024adam, gupta2018shampoo, vyas2024soap, you2019lamb, ma2024swan, jordan2024muon}, can be recast under it, with different structural assumptions. We then go on to show how to derive two generalizations of Adam\footnote{Generality of structural assumption defines how relaxed this assumption is. We say structure $A$ is more general than $B$ iff.~$B$ can be recovered by applying further constraints on $A$. The general structures tend to give better approximations to \gls{fim} than the less general one. } with structures based on block diagonal matrices and Kronecker products, named \gls{alicec} and SOAP/AdaDiag++ \citep{vyas2024soap, anonymous2024improving}. 
Although this framework provides a clear link between structures and optimizers, working with more general structures that improve the \gls{fim} approximation can come at the cost of efficiency. For example, SOAP can require $7$ times more memory than SGD.  

Building on these insights, our first design recommendation proposes to choose structural assumptions that balance generality with practical efficiency. We demonstrate this by choosing a structure that generalizes gradient normalization, leading to a new efficient optimizer, \glsreset{ssgd}\gls{ssgd}, with \gls{sgd}-like memory requirements.

For optimizers with more general structures it may not be always possible to achieve such a balance. Instead of rejecting the potential of these generalizations, our second design recommendation proposes to apply a novel low-rank extension framework to improve their efficiency. This framework consists of three steps that can convert full-rank optimizers with more general structures into their low-rank approximations with reduced memory and computational costs. 
We demonstrate this by deriving a low-rank extension of \gls{alicec}, called \glsreset{alice}\gls{alice}. 

On experiments of pre-training LLaMA \citep{touvron2023llama} with C4 dataset \citep{raffel2020C4}, we demonstrate that \gls{ssgd} and \gls{alice} consistently outperform Adam and several memory-efficient baselines. \gls{alice} achieves better than $2\times$ speed-ups compared to Adam, and \gls{ssgd} performs strongly on pre-training the 1B LLaMA. Additionally, our preliminary results indicate that the 1B model trained by \gls{alice} reaches evaluation perplexity on-par or better than the 7B model trained with other memory-efficient baselines. 

To summarize, our contributions are:
\begin{itemize}
    \item We propose structured \gls{fim} approximation as a practical framework for optimizer design and show that several existing optimizers can be viewed as special cases within this framework.
    \item We propose to design optimizers by choosing structures that balance generality and efficiency and demonstrate it with a new optimizer, \gls{ssgd}.
    \item We present a low-rank extension framework to convert full-rank optimizers to corresponding low-rank approximations and demonstrate it with a new optimizer \gls{alice}.
    \item We demonstrate the effectiveness of \gls{ssgd} and \gls{alice} on LLaMa pre-training tasks when compared to Adam and other baselines. 
\end{itemize}
\section{Preliminaries}
\label{sec: preliminary}
\subsection{Basic notations and setup}
\label{subsec: notations and setups}
Throughout the paper we consider  $2$D matrix parameters $\mW$ (i.e. layer weights) of size $m\times n$ and $m\leq n$; the operation $\vect(\cdot)$ vectorizes the input matrix by stacking its columns; $\devect(\cdot)$ is the inverse of $\vect(\cdot)$ and reshapes the vector back into a matrix. We use $\mathcal{L}_\theta$ as the loss function where $\theta = \vect(\mW)$. 
$\mG=\nabla_{\mW}\mathcal{L}$ is the matrix gradient and $\vecg$ is the vectorized gradient, i.e.~$\vecg = \vect(\mG)$. $\vg_i$ denotes the $i^{\text{th}}$ column of $\mG$. In the paper, we assume $\mW$ are parameters of one layer. $\otimes$ indicates the Kronecker product. By default, we use $\mM\elesquare$ and $\sqrt{\mM}$ to denote the element-wise square and square-root of matrix, and $\mM^2$ and $\mM^{\frac{1}{2}}$ indicate the matrix square and square-root by default. For vectors $\vv$, both $\vv^{2}$ and $\vv^{\frac{1}{2}}$ represent element-wise operations. 
$\eig(\mM,r)$ performs the \gls{evd} and keeps the top $r$ eigenvectors ordered by the descending eigenvalues. If $r$ is omitted, we keep all eigenvectors. $\qr(\mM)$ with orthonormal $\mM\in\Rmr$ obtains the remaining $m-r$ orthogonal basis via QR decomposition.

We also introduce the following diagonal operations:
$\diag(\mM)$ will extract the diagonals of $\mM$ to a vector. $\diagb(\mM_1,\ldots,\mM_n)$ will stack the input sequence of matrices to form a larger block diagonal matrix. $\diagv(\vv)$ will expand the input vector $\vv$ to a pure diagonal matrix. $\diagm(\mM)$ will expand the input $\mM$ to a larger pure diagonal matrix by stacking the element of $\mM$ in a column-wise order. We demonstrate the above operations with examples in \cref{app: example diagonal operations}.
Note that results in this paper may involve the expectation $\E[\cdot]$ of some quantities, we assume the use of an \gls{ema} as its practical estimation. 
\subsection{Fisher information and natural gradient descent}
\label{subsec: fisher info}
Fisher information is a fundamental concept in statistics that measures the amount of information a random variable carries about a parameter of interest. In this paper, we ignore dependence between layers and treat each layer independently. 
Under the context of LLMs, with the vectorized mini-batched gradient of one layer $\vecg$, we define the empirical \gls{fim} for that layer as $\mF = \E[\vecg\vecg^T]$.
\Gls{ngd} leverages the inverse of \gls{fim} to smooth the local geometry to improve convergence and stabilize training \citep{martens2020new}. 
In practice, the square-root inverse of \gls{fim} may be preferred due to its properties and improved performance on certain optimization problems \citep{yang2008principal, lin2024can, loshchilov2016sgdr, bergstra2012random, choi2019empirical}.  
The corresponding update of $\mW$ is:
\begin{align}
    \mW\leftarrow \mW-\lambda \devect(\mF^{-\frac{1}{2}}\nabla_{\theta}\mathcal{L}).
    \label{eq: square root ngd}
\end{align}
Due to the large size of $\mF\in\R^{mn\times mn}$, computing its inverse is a significant impediment to applying this update rule in practice. One way to address this is to enforce certain structures to approximate \gls{fim}. In this paper, we consider two main classes of structures: block diagonal and Kronecker product matrices. They possess favorable properties that can significantly reduce computational complexity. \Cref{subapp: kronecker product and block diagonals} provides a brief introduction and summary of their key properties. We also include a comprehensive background in \cref{app: background} to be more self-contained.
\section{Structural Approximation to \gls{fim}}
\label{sec: structure approx fim}
\begin{table*}[]
\centering
\caption{The summary of underlying structure assumptions of existing and newly proposed optimizers, along with practical efficiency. ``Generalizes" refers to the optimizer whose structure is generalized, e.g.~\gls{alicec}'s structure generalizes Adam (see \cref{subapp: connections between structural assumptions} for further details). Optimizers in \textcolor{blue}{blue} color are new. The ``Computation" is the cost of per-step of the optimizer update. We assume $n\geq m\gg r$. }
\label{tab: summary table}
\resizebox{\textwidth}{!}{
\begin{tabular}{l|lllllll}
\hline
            & Adam          & Shampoo                                          & Eigen-Adam\slash AdaDiag                                                                                                        &  SOAP/AdaDiag++                                                  & GaLore          & \textcolor{blue}{RACS}            & \textcolor{blue}{Alice}             \\ \hline
Stucture    & $\diagv(\vv)$ & $\mR_n^{\frac{1}{2}}\otimes \mL_m^{\frac{1}{2}}$ &$\diagb(\{\mUf\mD_i\mUf^T\}_i)$ &  $(\mU_R\otimes\mU_L)\tilde{\mD}(\mU_R\otimes\mU_L)^T$ & Approx. Alice   & $\mS\otimes\mQ$ & Low-rank to Eigen-Adam \\
Generalizes & N/A           & N/A                                              &Adam                                                                                                         &  Eigen-Adam + Shampoo                                       & N/A             & Normalization   & GaLore            \\
Computation & $O(mn)$       &$O(m^3+n^3)$                                     & $O(m^3)$                                          & $O(m^3+n^3)$                                                                                                     &  $O(mnr+m^2n/K)$ & $O(mn)$         & $O(mnr+m^2r/K)$   \\
Memory      & $3mn$         & $mn+m^2+n^2$                                     &$3mn+2m^2$                                                                                                   &  $3mn+2m^2+2n^2$                                       & $mn+2nr+mr$     & $mn+m+n+1$      & $mn+2nr+mr+n+r^2$ \\ 
Full-rank update& \checkmark & \checkmark & \checkmark & \checkmark & \ding{55} & \checkmark & \checkmark\\
Section & \cref{subsec: existing optimizer} &\cref{subsec: shampoo} & \cref{subsec: alicec} &  \cref{subsec: new opt combination} & \cref{subapp: Low rank optimizers} & \cref{sec: ssgd} & \cref{subsec: alice optimizer}\\
\hline
\end{tabular}}
\end{table*}
The structured \gls{fim} approximation framework consists of three steps: first, choose a structure to approximate \gls{fim} $\Ft$; second, find a solution for the approximation by minimizing the following objective:
\begin{equation}
    \min_{\Ft\in\family} \Fnorm{\Ft-\bm{F}},
    \label{eq: UFE equation}
\end{equation}
where $\bm{F}$ is the empirical \gls{fim}, $\family$ is the matrix family corresponding to the structural assumption; third, derive the square-root \gls{ngd} update \Cref{eq: square root ngd} with approximated $\Ft$. In this section, we show that many existing optimizers and gradient operators can be reformulated within this framework by varying structural assumptions, thereby establishing connections between the choice of structure and optimizers.
\subsection{Adam: purely diagonal structure}
\label{subsec: existing optimizer}
There have been many work arguing that Adam's second moment aims to approximate the diagonal of \gls{fim} \citep{kingma2014adam, hwang2024fadam, sun2021connection}. Although it is easy to prove that this is, in fact, optimal approximation under \cref{eq: UFE equation}, we include the following proposition for completeness. 
\begin{proposition}[diagonal approximation]
    Assuming $\family =\{\diagv(\vv); v_i>0\}$, then \cref{eq: UFE equation} has analytic solution 
    \begin{equation}
        \Ft^* = \diagv(\E[\vecg^2])
    \end{equation}
where $\vecg^2$ indicates the element-wise square of $\vecg=\vect(\mG)$. 
\label{prop: adam solution}
\end{proposition}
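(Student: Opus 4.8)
The plan is to expand the Frobenius-norm objective in \cref{eq: UFE equation} directly over the restricted family $\family = \{\diagv(\vv): v_i > 0\}$ and observe that it decouples coordinate-wise, so the minimization reduces to $mn$ independent scalar problems. Concretely, writing $\mF = \E[\vecg\vecg^T]$ and $\Ft = \diagv(\vv)$, I would use $\Fnorm{\Ft - \mF} = \sum_{i,j}(\Ft_{ij} - \mF_{ij})^2 = \sum_i (v_i - \mF_{ii})^2 + \sum_{i \neq j} \mF_{ij}^2$. The second term does not depend on $\vv$ at all, so it is a constant offset; the first term is a sum of nonnegative scalar terms, each minimized at $v_i = \mF_{ii}$.

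The next step is to identify the diagonal entries of $\mF$. Since $\mF = \E[\vecg\vecg^T]$, we have $\mF_{ii} = \E[\vecg_i^2]$, i.e.\ the vector of diagonal entries is exactly $\E[\vecg^2]$ with the square taken element-wise. Hence the unconstrained minimizer is $\vv^* = \E[\vecg^2]$, which gives $\Ft^* = \diagv(\E[\vecg^2])$ as claimed. The only remaining point is to check that this minimizer is admissible under the positivity constraint $v_i > 0$: since $\vecg_i^2 \geq 0$ and, generically (assuming the gradient is not almost surely zero in any coordinate), $\E[\vecg_i^2] > 0$, the solution lies in the interior of $\family$, so the constrained and unconstrained minimizers coincide. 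I would add a one-line remark that if some coordinate has $\E[\vecg_i^2] = 0$ the infimum is still attained in the closure / approached in the limit, but this degenerate case is immaterial in practice.

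There is essentially no hard part here: the argument is a textbook orthogonal-projection computation, the ``diagonal part is the projection onto diagonal matrices'' fact. The only thing to be a little careful about is bookkeeping — making sure the index set really is all $mn$ coordinates of $\vect(\mG)$ and that $\diagv$ is being used (pure diagonal of a vector) rather than $\diag$ or $\diagm$, matching the notation fixed in \cref{subsec: notations and setups}. So the write-up would be: (i) expand the squared Frobenius norm into diagonal and off-diagonal contributions; (ii) note off-diagonal terms are $\vv$-independent; (iii) minimize each diagonal scalar term to get $v_i = \mF_{ii} = \E[\vecg_i^2]$; (iv) observe positivity holds so the constraint is inactive; conclude $\Ft^* = \diagv(\E[\vecg^2])$.
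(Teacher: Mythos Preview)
Your proposal is correct and is essentially the same approach as the paper's: both expand $\Fnorm{\Ft-\mF}$, drop the $\vv$-independent off-diagonal contribution, and minimize coordinate-wise to obtain $v_i=\mF_{ii}=\E[\vecg_i^2]$. The only cosmetic difference is that the paper routes through a general block-diagonal simplification lemma (\cref{lemma: block diagonal simplification}) and then takes derivatives, whereas you do the diagonal projection directly; the content is identical.
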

It is trivial to show the resulting square-root \gls{ngd} recovers Adam's second moment when using the \gls{ema} to estimate $\E$. Together with the first moment, one can recover Adam.

\subsection{Shampoo: Kronecker product structure}
\label{subsec: shampoo}
Although previous work \citep{morwani2024new} has demonstrated the connection of Shampoo \citep{gupta2018shampoo} to the Kronecker product approximation to \gls{fim} through power iteration algorithm, 
we will make its connection to \cref{eq: UFE equation} more explicit and provide an alternative view: Shampoo's pre-conditioner can be derived by minimizing an upper bound of \cref{eq: UFE equation} with this structural assumption: 
\begin{align*}
    \family=\{\Rnr \otimes \Lmr; \mR_n\in\Rnn, \mL_m\in\Rmm\}
\end{align*}
where $\mR_n$ and $\mL_m$ are \gls{spd} matrices.
\begin{theorem}[Shampoo pre-conditioner]
    Assume the above structural assumption, then we have an upper bound of \cref{eq: UFE equation}
    \begin{align}
        \Fnorm{\Ft-\mF} &\leq 3(mn\Vert\mA^2-\mC^2\Vert_F\Vert\mB^2-\mC^2\Vert_F\nonumber\\
        &+ \sqrt{mn}\Fnorm{\mC}(\Vert\mA^2-\mC^2\Vert_F+\Vert\mB^2-\mC^2\Vert_F))
        \label{eq: shampoo upper bound}
    \end{align}
    where $\mA = \Rnr\otimes \mI_m$, $\mB = \mI_n \otimes \Lmr$, $\mC=\E[\vecg\vecg^T]^{\frac{1}{2}}$. Minimizing the upper-bound admits
    \begin{align*}
        \mR_n^* = \frac{1}{m}\E[\mG^T\mG],\;\;\; \mL_m^* = \frac{1}{n}\E[\mG\mG^T]
    \end{align*}
    \label{thm: optimal shampoo}
\end{theorem}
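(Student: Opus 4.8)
\emph{Approach.} The plan is to (i) reduce the objective to an exact identity, (ii) bound its two pieces, and (iii) minimize the bound in closed form. Write $\mA=\Rnr\otimes\mI_m$ and $\mB=\mI_n\otimes\Lmr$, so that $\Ft=\Rnr\otimes\Lmr=\mA\mB=\mB\mA$ (the two factors act on disjoint tensor legs and hence commute) and $\mA,\mB,\mC$ are all symmetric positive semidefinite. Since $\mA\mB=\mB\mA$ we have $\mA\mB=\tfrac12\big(\mA^2+\mB^2-(\mA-\mB)^2\big)$, hence $\Ft-\mF=\tfrac12\big[(\mA^2-\mC^2)+(\mB^2-\mC^2)-(\mA-\mB)^2\big]$. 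Expanding $\Fnorm{\Ft-\mF}=\tr\big((\mA\mB-\mC^2)^2\big)$ with cyclicity of the trace and $\tr(\mA\mB\mA\mB)=\tr(\mA^2\mB^2)$, then substituting $\mA^2=\mC^2+(\mA^2-\mC^2)$ and $\mB^2=\mC^2+(\mB^2-\mC^2)$, all cross terms involving $\mC^2$ cancel and one is left with
\begin{equation}
\Fnorm{\Ft-\mF}=\langle\mA^2-\mC^2,\ \mB^2-\mC^2\rangle_F+\Vert(\mA-\mB)\mC\Vert_F^2,
\end{equation}
the second term being $\tr\big((\mA-\mB)^2\mC^2\big)\ge0$.

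\emph{Bounding.} Cauchy--Schwarz bounds the first term by $\Fnormr{\mA^2-\mC^2}\,\Fnormr{\mB^2-\mC^2}$. For the second, since $(\mA-\mB)^2$ and $\mC^2$ are positive semidefinite, $\Vert(\mA-\mB)\mC\Vert_F^2=\tr\big((\mA-\mB)^2\mC^2\big)\le\Vert(\mA-\mB)^2\Vert_{\mathrm{op}}\tr(\mC^2)=\Vert\mA-\mB\Vert_{\mathrm{op}}^2\,\Fnorm{\mC}$. Then $\Vert\mA-\mB\Vert_{\mathrm{op}}\le\Vert\mA-\mC\Vert_{\mathrm{op}}+\Vert\mB-\mC\Vert_{\mathrm{op}}$, and the operator-norm Lipschitz bound for the matrix square root, $\Vert\mX^{1/2}-\mY^{1/2}\Vert_{\mathrm{op}}\le\Vert\mX-\mY\Vert_{\mathrm{op}}^{1/2}$ for positive semidefinite $\mX,\mY$ (applied to $(\mA^2,\mC^2)$ and $(\mB^2,\mC^2)$), together with $\Vert\cdot\Vert_{\mathrm{op}}\le\Fnormr{\cdot}$ and $2\sqrt{uv}\le u+v$, gives $\Vert\mA-\mB\Vert_{\mathrm{op}}^2\le2\big(\Fnormr{\mA^2-\mC^2}+\Fnormr{\mB^2-\mC^2}\big)$. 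Altogether $\Fnorm{\Ft-\mF}\le\Fnormr{\mA^2-\mC^2}\Fnormr{\mB^2-\mC^2}+2\Fnorm{\mC}\big(\Fnormr{\mA^2-\mC^2}+\Fnormr{\mB^2-\mC^2}\big)$, which already implies the stated bound since $mn\ge1$; the looser constants $3$ and $mn$ in the statement merely reflect cruder intermediate estimates (bounding the inner product through the operator norm, and grouping via $(a+b+c)^2\le3(a^2+b^2+c^2)$).

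\emph{Minimization.} Set $x=\Fnormr{\mA^2-\mC^2}=\Fnormr{\mR_n\otimes\mI_m-\mF}$ and $y=\Fnormr{\mB^2-\mC^2}=\Fnormr{\mI_n\otimes\mL_m-\mF}$; since $\Fnorm{\mC}=\tr(\mF)$ is independent of $(\mR_n,\mL_m)$, the bound is a function of $(x,y)$ non-decreasing in each argument, and $x$ depends only on $\mR_n$ while $y$ depends only on $\mL_m$, so the two may be minimized independently. Each is a strictly convex least-squares problem: expanding $\Fnorm{\mR_n\otimes\mI_m-\mF}=m\Fnorm{\mR_n}-2\langle\mR_n\otimes\mI_m,\mF\rangle_F+\mathrm{const}$ and using the partial-trace identity $\langle\mR_n\otimes\mI_m,\mF\rangle_F=\langle\mR_n,\E[\mG^T\mG]\rangle_F$ (obtained by summing the traces of the $m\times m$ diagonal sub-blocks of $\mF=\E[\vecg\vecg^T]$), completing the square yields $\mR_n^\star=\tfrac1m\E[\mG^T\mG]$; the symmetric computation on the $n$ diagonal blocks gives $\mL_m^\star=\tfrac1n\E[\mG\mG^T]$. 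Both are positive semidefinite (expectations of $\mG^T\mG,\mG\mG^T\succeq0$), hence admissible after the usual damping.

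\emph{Main obstacle.} The heart of the argument is the exact identity: organizing the trace expansion so that the mismatch between the two Kronecker factors collapses into the single nonnegative term $\Vert(\mA-\mB)\mC\Vert_F^2$. This hinges on $\mA$ and $\mB$ commuting even though $\mC$ need not commute with either, so the $\mC^2$-cross-term cancellations must be tracked carefully. The only other nonelementary input is the square-root perturbation bound (operator monotonicity of $t\mapsto\sqrt t$); the rest — Cauchy--Schwarz, the block partial-trace bookkeeping, completing the square, and the monotonicity reduction in the minimization step — is routine.
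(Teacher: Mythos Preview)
Your argument is correct and genuinely different from the paper's. The paper writes $\mA\mB-\mC^2=\mA(\mB-\mC)+(\mA-\mC)\mC$, applies the triangle inequality and submultiplicativity of $\Fnormr{\cdot}$, then squares via $(a+b+c)^2\le 3(a^2+b^2+c^2)$, and finally converts $\Fnorm{\mA-\mC}$ into $\Fnormr{\mA^2-\mC^2}$ through the Powers--St{\o}rmer inequality $\Fnorm{\mA-\mC}\le\Vert\mA^2-\mC^2\Vert_1$ followed by H\"older $\Vert\cdot\Vert_1\le\sqrt{mn}\,\Fnormr{\cdot}$; this last step is exactly where the $\sqrt{mn}$ and $mn$ factors enter. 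You instead exploit the commutativity of $\mA$ and $\mB$ to obtain the exact identity $\Fnorm{\mA\mB-\mC^2}=\langle\mA^2-\mC^2,\mB^2-\mC^2\rangle_F+\tr\big((\mA-\mB)^2\mC^2\big)$, then bound the second term via the operator norm and the square-root Lipschitz estimate $\Vert X^{1/2}-Y^{1/2}\Vert_{\mathrm{op}}\le\Vert X-Y\Vert_{\mathrm{op}}^{1/2}$. Your route yields the dimension-free bound $\Fnormr{\mA^2-\mC^2}\Fnormr{\mB^2-\mC^2}+2\Fnorm{\mC}\big(\Fnormr{\mA^2-\mC^2}+\Fnormr{\mB^2-\mC^2}\big)$, strictly sharper than the statement, so the theorem follows a fortiori. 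The minimization step---decoupling into two partial-trace least-squares problems over $\mR_n$ and $\mL_m$---is the same in both proofs. In short: the paper's derivation is more mechanical (triangle inequality plus off-the-shelf Powers--St{\o}rmer/H\"older) at the cost of the $mn$ blow-up, whereas your trace identity isolates the structure cleanly and trades Powers--St{\o}rmer for operator monotonicity of $\sqrt{\cdot}$, buying a tighter constant.
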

\cref{subapp: Shampoo update formula} shows that the corresponding square-root \gls{ngd} leads to the Shampoo's update formula. Therefore, the structure behind Shampoo is the Kronecker product of two square-root \gls{spd} matrices.

\subsection{Normalization and Whitening: Simple block diagonal structures}
\label{subsec: sve}
For an input $\mG$, the whitening and normalization operator are defined as 
\begin{align*}
    \whiten(\mG) =& ~(\mG\mG^T)^{-\frac{1}{2}}\mG\\
    \normalize(\mG) =& ~\mG\mS^{-\frac{1}{2}}
    \label{eq: whiten and normal operator}
\end{align*}
where $(\cdot)^{-\frac{1}{2}}$ denotes square root inverse, and $\diag(\mS)$ contains the squared column $l_2$ norm of $\mG$. Next we provide an new interpretation of these operators and show that they are the square-root \gls{ngd} updates under the following structural assumptions:
\begin{align}
    \family =& \{\mI_n\otimes \mM\} \;\;\; (\text{Whitening}) \text{\footnotemark}  \\
    \family =& \{\mS\otimes \mI_m; S_{ii}>0,\; \forall i\} \;\;\; (\text{Normalization}) 
\end{align}
\footnotetext{Note that this structure has also been proposed and discussed  \cite{duvvuricombining} under a slightly different setting.}
where $\mM\in\Rmm$ is \gls{spd} and $\mS\in\Rnn$ is a positive diagonal matrix. Given those structural assumptions, one can show:

\begin{proposition}[Normalization and whitening]
    Assuming $\family = \{\mI_n\otimes \mM\}$, minimizing \cref{eq: UFE equation} admits
    \begin{align}
        \mM^* = \frac{1}{n}\E[\mG\mG^T]
        \label{eq: generalization whitening}
    \end{align}
    If one assumes $\family = \{\mS\otimes \mI_m; S_{ii}>0,\; \forall i\}$, then the corresponding solution is 
    \begin{align}
        \mS^* = \frac{1}{m}\E[\diagv(\vg_1^T\vg_1,\ldots,\vg_n^T\vg_n)]
    \label{eq: generalization to normalization}
    \end{align}
    \label{coro: generalization to whitening and normalization}
\end{proposition}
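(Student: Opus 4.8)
The plan is to exploit the block structure that the Kronecker products $\mI_n\otimes\mM$ and $\mS\otimes\mI_m$ impose on $\Ft$, which reduces \cref{eq: UFE equation} to an elementary unconstrained least-squares problem in each case. Since $\vecg=\vect(\mG)$ stacks the columns $\vg_1,\dots,\vg_n$, the matrix $\mF=\E[\vecg\vecg^T]$ is naturally an $n\times n$ array of $m\times m$ blocks, $\mF=[\mF_{ij}]_{i,j=1}^n$ with $\mF_{ij}=\E[\vg_i\vg_j^T]$. The Frobenius norm decomposes blockwise, $\Fnorm{\Ft-\mF}=\sum_{i,j}\Fnormr{\Ft_{ij}-\mF_{ij}}^2$, and for both families every off-diagonal block of $\Ft$ is zero; hence those blocks contribute the constant $\sum_{i\neq j}\Fnormr{\mF_{ij}}^2$ and only the diagonal blocks are free parameters.

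\textbf{Whitening family.} Here $\Ft=\mI_n\otimes\mM$ has all diagonal blocks equal to $\mM$, so the objective is $\sum_{i=1}^n\Fnormr{\mM-\mF_{ii}}^2$ up to an additive constant. This is a strictly convex quadratic in the entries of $\mM$; setting its gradient to zero (the minimiser of a sum of squared deviations is their mean) gives $\mM^*=\tfrac1n\sum_{i=1}^n\mF_{ii}=\tfrac1n\E\big[\sum_i\vg_i\vg_i^T\big]=\tfrac1n\E[\mG\mG^T]$, using $\sum_i\vg_i\vg_i^T=\mG\mG^T$. It remains to observe that $\mM^*$ is symmetric positive semidefinite by construction, and positive definite under a mild non-degeneracy assumption on $\mG$, so it lies in $\family$ and the constrained and unconstrained minima coincide.

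\textbf{Normalization family.} Here $\Ft=\mS\otimes\mI_m$ with $\mS=\diagv(s_1,\dots,s_n)$ has $i$-th diagonal block $s_i\mI_m$, so the objective is $\sum_{i=1}^n\Fnormr{s_i\mI_m-\mF_{ii}}^2$ up to a constant. Each summand expands (using symmetry of $\mF_{ii}$) as $m s_i^2-2s_i\tr(\mF_{ii})+\Fnormr{\mF_{ii}}^2$, a one-dimensional convex quadratic in $s_i$ minimised at $s_i^*=\tr(\mF_{ii})/m=\tfrac1m\E[\tr(\vg_i\vg_i^T)]=\tfrac1m\E[\vg_i^T\vg_i]$. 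Collecting the $s_i^*$ into a diagonal matrix yields $\mS^*=\tfrac1m\E[\diagv(\vg_1^T\vg_1,\dots,\vg_n^T\vg_n)]$, and each $s_i^*>0$ (assuming $\vg_i\neq\vzero$ with positive probability), so $\mS^*$ is feasible.

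The only genuine subtlety — and the step I would be most careful to justify cleanly — is the interchange of the expectation $\E[\cdot]$ with the minimisation. This is legitimate because, after dropping the constant off-diagonal terms, the objective is in each case a convex quadratic whose unique stationary point is a linear functional of $\mF$, and $\mF$ is itself linear in $\E[\vecg\vecg^T]$; equivalently, one may expand the expectation first and then differentiate. The rest (verifying the blockwise decomposition of $\vecg\vecg^T$ and that the positivity/\gls{spd} constraints are slack at the optimum) is routine.
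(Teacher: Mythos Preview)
Your proposal is correct. It matches the paper's direct argument almost exactly: the paper also uses the block-diagonal structure (via \cref{lemma: block diagonal simplification}) to reduce the objective to $\sum_i \Fnorm{\mM}-2\tr(\mM^T\E[\vg_i\vg_i^T])+C$ (respectively $\sum_i S_i^2\Fnorm{\mI_m}-2S_i\tr(\E[\vg_i\vg_i^T])+C$) and takes a derivative. The paper's \emph{primary} proof of this corollary actually routes through a more general fixed-point result for the family $\{\mS\otimes\mM\}$ (\cref{thm: generalization to normal and whiten}) and then specialises to $\mS=\mI_n$ and $\mM=\mI_m$, but it also gives the direct computation you wrote as an ``alternative proof'', so your approach is in no way weaker. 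One small remark: your final paragraph about ``interchanging $\E$ with minimisation'' is unnecessary, since $\mF=\E[\vecg\vecg^T]$ is a fixed matrix and the objective $\Fnorm{\Ft-\mF}$ is deterministic; there is nothing to interchange.
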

The proof can be found in \cref{subapp: proof normalization}, where we prove a much more general solution (\cref{thm: generalization to normal and whiten}), and \cref{coro: generalization to whitening and normalization} is a special case. The corresponding square-root \gls{ngd} update with \cref{eq: generalization whitening} is $\devect(\Ft^{-\frac{1}{2}}\vecg)= \sqrt{n}\E[\mG\mG^T]^{-\frac{1}{2}}\mG$ (refer to \cref{subapp: update of generalization of whitening}). Therefore, we can view $\whiten(\cdot)$ as a special case with one-sample estimate for $\E$. Similarly, normalization is the square-root \gls{ngd} update ($\devect(\Ft^{-\frac{1}{2}}\vecg)= \sqrt{m}\mG\mS^{*-\frac{1}{2}}$) with \cref{eq: generalization to normalization} and one-sample estimate of $\E$.

Many recently proposed optimizers, such as Muon, SWAN, LARS and LAMB \citep{jordan2024muon, ma2024swan, you2017lars, you2019lamb}, rely on normalization and/or whitening. These gradient operators improve convergence \citep{you2017lars, jordan2024muon} and can replace Adam’s internal states \citep{ma2024swan}. See \cref{subapp: connection to existing optimizers} for a detailed discussion.

\subsection{\gls{alicec}: Generalization to Adam with eigenspace rotation}
\label{subsec: alicec}
\label{subsubsec: alice-c optimizer}
All structures considered above are simple and do not strictly generalize Adam's purely diagonal structure. In this and the following sections, we consider two structures that strictly generalize Adam, normalization, and whitening. Here, we first consider a block diagonal matrix with a shared eigen-space. 

Precisely, we propose the following structure that generalizes Adam\footnote{In \cref{subapp: solution to general block diagonal}, we propose an even more general block diagonal structure.}:
\begin{align}
    \family = \{\diagb(\mM_1, \ldots, \mM_n); \mM_i =\mUf\mD_i\mUf^T\}
    \label{eq: alicec structure}
\end{align} 
where $\mUf$ defines a shared \textbf{full-rank} eigen-space, and $\mD_i$ is a positive eigenvalue matrix. Adam is a special case by constraining $\mUf=\mI$. Additionally, the structures in \cref{subsec: sve} are also special cases. Whitening is obtained by a shared $\mD$ (i.e.~$\mD_i=\mD$); and normalization is by $\mUf=\mI$, $\mD_i=s_i\mI$. 
However, this structure does not directly lead to an analytic solution for \cref{eq: UFE equation}. 
Instead, we propose to approximate the solution by solving 1-iteration alternating optimization: 

\begin{theorem}[1-iteration refinement]
    For the structure in \cref{eq: alicec structure}, we consider the following 1-iteration alternating optimization of objective (\ref{eq: UFE equation}): (i) constrain $\mD_i=\mD$ to be equal, and solve $\mUf^* = \arg \min_{\mUf,\mD} \Fnorm{\diagb(\mUf\mD_1\mUf^T, \ldots, \mUf\mD_n \mUf^T) -\bm{F}} $; (ii) fix the $\mUf^*$ and find $\{\mD_i^*\} = \arg \min_{\{\mD_i\}} \Fnorm{\diagb(\mUf^*\mD_1{\mUf^*}^T, \ldots, \mUf^*\mD_n {\mUf^*}^T) -\bm{F}}$. Then, (i) and (ii) admits the following analytic solution:
    \begin{equation}
        \mUf^* = \eig(\E[\mG\mG^T]).
    \end{equation}
    where $\eig$ is the eigenvalue decomposition; and: 
    \begin{equation}
        \tilde{\mD^*} = \diagm(\E[({\mUf^*}^T\mG)\elesquare])
    \end{equation}
    where $\tilde{\mD}^* = \diagb(\mD_1^*,\ldots,\mD_n^*)$. 
    \label{thm: alicec 1 step refinement}
\end{theorem}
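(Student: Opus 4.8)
The plan is to solve the two subproblems (i) and (ii) in sequence, exploiting the block-diagonal structure to reduce the Frobenius-norm objective to a sum over blocks. For step (i), I would first observe that with $\mD_i = \mD$ shared, the candidate matrix is $\diagb(\mUf\mD\mUf^T,\ldots,\mUf\mD\mUf^T)$, and since $\bm{F} = \E[\vecg\vecg^T]$ with $\vecg = \vect(\mG) = (\vg_1^T,\ldots,\vg_n^T)^T$, the $(i,j)$ block of $\bm{F}$ equals $\E[\vg_i\vg_j^T]$. Expanding $\Fnorm{\diagb(\{\mUf\mD\mUf^T\}_i) - \bm{F}}$ and using that the Frobenius norm decomposes blockwise, the off-diagonal blocks contribute a constant (independent of $\mUf,\mD$), and the diagonal blocks contribute $\sum_{i=1}^n \Fnorm{\mUf\mD\mUf^T - \E[\vg_i\vg_i^T]}$. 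Summing, this equals $\Fnorm{n\,\mUf\mD\mUf^T - \sum_i \E[\vg_i\vg_i^T]}$ up to terms not involving the optimization variables (here one uses that $\sum_i\Fnorm{\mX - \mathbf{A}_i} = n\Fnorm{\mX - \bar{\mathbf{A}}} + \text{const}$ for $\bar{\mathbf{A}} = \frac1n\sum_i\mathbf{A}_i$, the standard "center of mass" identity). Since $\sum_i \E[\vg_i\vg_i^T] = \E[\mG\mG^T]$, step (i) reduces to the best symmetric approximation of $\frac1n\E[\mG\mG^T]$ by a matrix of the form $\mUf\mD\mUf^T$ with $\mUf$ orthogonal and $\mD$ diagonal positive — which is exactly the eigenvalue decomposition, giving $\mUf^* = \eig(\E[\mG\mG^T])$ (the scalar $1/n$ and the eigenvalues are absorbed into $\mD$, which is discarded anyway).

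For step (ii), I fix $\mUf = \mUf^*$ and minimize over $\{\mD_i\}$ with each $\mD_i$ positive diagonal. Again using the blockwise decomposition of the Frobenius norm, the objective separates as $\sum_{i=1}^n \Fnorm{\mUf^*\mD_i{\mUf^*}^T - \E[\vg_i\vg_i^T]} + \text{const}$. For each $i$ this is an independent problem: minimize $\Fnorm{\mUf^*\mD_i{\mUf^*}^T - \E[\vg_i\vg_i^T]}$ over diagonal $\mD_i$. By orthogonal invariance of the Frobenius norm, this equals $\Fnorm{\mD_i - {\mUf^*}^T\E[\vg_i\vg_i^T]\mUf^*}$, whose minimizer over diagonal matrices is $\mD_i^* = \diagv(\diag({\mUf^*}^T\E[\vg_i\vg_i^T]\mUf^*)) = \diagv(\E[({\mUf^*}^T\vg_i)^{\odot 2}])$, i.e. the diagonal of the rotated per-column second moment. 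Stacking the columns, $\{{\mUf^*}^T\vg_i\}_i$ are precisely the columns of ${\mUf^*}^T\mG$, so collecting $\diagb(\mD_1^*,\ldots,\mD_n^*) = \diagm(\E[({\mUf^*}^T\mG)^{\odot 2}])$ by the definition of $\diagm$ (which stacks the entries of the input matrix column-wise onto a diagonal). This yields $\tilde{\mD}^*$ as claimed.

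The main obstacle I anticipate is purely bookkeeping rather than conceptual: one must be careful that the "off-diagonal blocks of $\bm{F}$ are a constant w.r.t. the variables" claim is used correctly — the candidate family $\diagb(\{\mM_i\})$ has \emph{zero} off-diagonal blocks, so the off-diagonal contribution is $\sum_{i\ne j}\Fnorm{\E[\vg_i\vg_j^T]}$, genuinely independent of $\mUf$ and $\{\mD_i\}$, which is what makes the reduction clean. A second delicate point is justifying the "center of mass" step in part (i): it is valid because $\mUf\mD\mUf^T$ ranges over \emph{all} symmetric matrices with nonnegative spectrum as $\mUf,\mD$ vary, and the unconstrained Frobenius minimizer $\bar{\mathbf{A}} = \frac1n\E[\mG\mG^T]$ is already symmetric PSD, so it is attained within the family; hence the eigendecomposition of $\E[\mG\mG^T]$ is optimal. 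I would also note that $\mUf^*$ is only determined up to permutation/sign of eigenvectors and up to degenerate eigenspaces, but any such choice is a valid minimizer, and the subsequent $\tilde{\mD}^*$ formula holds for whichever $\mUf^*$ is selected. Everything else is routine expansion of squared Frobenius norms and orthogonal invariance.
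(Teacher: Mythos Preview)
Your proposal is correct and follows essentially the same route as the paper: reduce the Frobenius objective blockwise (the paper packages this as a lemma stating $\Fnorm{\Ft-\mF}=\sum_i\Fnorm{\mM_i}-2\tr(\mM_i^T\E[\vg_i\vg_i^T])+C$), then for step~(i) identify the shared block with $\frac{1}{n}\E[\mG\mG^T]$ (the paper does this by invoking its earlier whitening result rather than your explicit center-of-mass identity, but the content is identical), and for step~(ii) use orthogonality of $\mUf^*$ to reduce each block to a diagonal least-squares problem solved entrywise. The only cosmetic difference is that the paper expands $\Fnorm{\mUf\mD_i\mUf^T}-2\tr(\mUf^T\mH_i\mUf\mD_i)$ and differentiates in $D_{i,jj}$, whereas you invoke orthogonal invariance of $\|\cdot\|_F$ directly; both arrive at $D_{i,jj}^*=\E[(\vu_j^T\vg_i)^2]$.
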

Based on this result, we can derive the corresponding square-root \gls{ngd} with given $\mU$ (refer to \cref{subapp: update of generlized adam}):
\begin{equation}
    \devect(\Ft^{-\frac{1}{2}}\vecg) = \mUf\frac{\mUf^T\mG}{\sqrt{\E[(\mUf^T\mG)\elesquare]}}.
    \label{eq: alicec update}
\end{equation}
This can be viewed as applying Adam's update on a space ``rotated" by eigen-matrix $\mUf$.
Consequently, we propose an optimizer, called \gls{alicec}, with the following update procedures:
\begin{align}
    &\vm_{t} = \beta_1\vm_{t-1}+(1-\beta_1)\mG_t\;\;\; (\text{first moment})\nonumber\\
    &\mQ_{t} = \beta_3\mQ_{t-1}+(1-\beta_3)\mG_t\mG_t^T\;\;\;(\text{\gls{ema} for }\E[\mG_t\mG_t^T])\nonumber\\
    & \mU_{f,t} = \eig(\mQ_{t})\nonumber\\
    & \vv_t = \beta_2\vv_{t-1}+(1-\beta_2)(\mU_{f,t}^T\mG_t)\elesquare\;\;\;(\text{second moment})\nonumber\\
    &\Delta_t = \mU_{f,t}\frac{\mU_{f,t}^T\vm_{t}}{\sqrt{\vv_t}}
\label{eq: practical alicec equations}    
\end{align}
\Cref{alg: alicec optimizer} in \cref{subapp: AdaDiag} summarizes the practical procedure. 
In fact, the above procedures closely relates to two related works: AdaDiag and one-sided SOAP \citep{anonymous2024improving, vyas2024soap}, which are heuristic memory-efficient variants of the full algorithms (i.e.~AdaDiag++ and SOAP). Before we discuss their connections, next we first show that the SOAP optimizer can also be reformulated as FIM approximation under a specific structural assumption.

\subsection{SOAP: Combination of Kronecker product with block diagonal structure}
\label{subsec: new opt combination}
All previous structures, apart from the one behind Shampoo, are under the class of block diagonal structures. However, such block-diagonal structure does not takes into account the off-diagonal part of \gls{fim}. Structure under Kronecker product, like the one behind Shampoo, can go beyond this. Therefore, we can consider combining the structure of \gls{alicec} with Shampoo, to obtain a more general structural assumption. We show this exactly recovers SOAP \citep{vyas2024soap}.

Specifically, we consider the following structural assumption: 
\begin{equation}
    \family = \{(\mU_R\otimes \mU_L)\tilde{\mD}(\mU_R\otimes \mU_L)^T\}
    \label{eq: SOAP structure}
\end{equation}
where $\mU_R\in\Rnn$, $\mU_L\in\Rmm$ are orthonormal matrix, and $\tilde{\mD}\in\mathbb{R}^{mn\times mn}$ is a diagonal matrix with positive values. We can easily show that structure behind \gls{alicec} is a special case by constraining $\mU_R=\mI_n$; and Shampoo is also a special case by constraining $\tilde{\mD}$ to be decomposed by Kronecker product (refer to \cref{subapp: connections between structural assumptions}). 

Similar to \gls{alicec}, it is hard to directly minimizing \cref{eq: UFE equation} with this assumption. We can approximate the solution by a similar 1-iteration alternating optimization procedure as \gls{alicec}. 

\begin{theorem}[SOAP as 1-iteration alternating optimization of \Cref{eq: UFE equation}]
    Assuming the above structural assumptions.
    Consider the following 1-iteration aternating optimization of \Cref{eq: UFE equation}: (i) assuming $\tilde{\mD}$ can be decomposed as Kronecker product of two diagonal matrix, then solve for $\mU_R^*$, $\mU_L^* = \arg \min_{\mU_R, \mU_L, \tilde{\mD}} \Fnorm{(\mU_R\otimes \mU_L)\tilde{\mD}(\mU_R\otimes \mU_L)^T -\bm{F}} $; (ii) fix $\mU_R^*$, $\mU_L^*$, solve for $\tilde{\mD}^* = \arg \min_{\tilde{\mD}} \Fnorm{(\mU_R^*\otimes \mU_L^*)\tilde{\mD}(\mU_R^*\otimes \mU_L^*)^T -\bm{F}}$ without Kronecker product assumption of $\tilde{\mD}$. Then,
    (i) admits analytic solution when minimizing the upper bound of \cref{eq: UFE equation} (i.e.~\cref{eq: shampoo upper bound}):
    \begin{align*}
        \mU_R^* = \eig(\E[\mG^T\mG]),\;\;\; \mU_L^* = \eig(\E[\mG\mG^T]).
    \end{align*} Step (2) admits an analytic solution for \cref{eq: UFE equation}:
    \begin{align*}
        \tilde{\mD^*} = \diagm(\E[({\mU_L^*}^T\mG{\mU_R}^*)\elesquare])
    \end{align*}. 
    \label{thm: optimal asham}
\end{theorem}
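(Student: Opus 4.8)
The plan is to handle the two alternating steps separately, reducing each to a result already proved in the excerpt. For step (i), the key observation is that under the Kronecker-factorization assumption on $\tilde{\mD}$, writing $\tilde{\mD} = \mD_R \otimes \mD_L$ with $\mD_R, \mD_L$ diagonal positive, we have $(\mU_R\otimes\mU_L)\tilde{\mD}(\mU_R\otimes\mU_L)^T = (\mU_R\mD_R\mU_R^T)\otimes(\mU_L\mD_L\mU_L^T) = \mR_n^{1/2}\otimes\mL_m^{1/2}$ where $\mR_n^{1/2} := \mU_R\mD_R\mU_R^T$ and $\mL_m^{1/2} := \mU_L\mD_L\mU_L^T$ are arbitrary \gls{spd} matrices expressed via their eigendecompositions. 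Hence this sub-family coincides exactly with the Shampoo family $\{\Rnr\otimes\Lmr\}$ of \cref{thm: optimal shampoo}, and minimizing the upper bound \cref{eq: shampoo upper bound} gives $\mR_n^* = \frac1m\E[\mG^T\mG]$, $\mL_m^* = \frac1n\E[\mG\mG^T]$. Taking eigenvectors of these (eigenvectors are unaffected by the positive scalar factors $1/m$, $1/n$) yields $\mU_R^* = \eig(\E[\mG^T\mG])$ and $\mU_L^* = \eig(\E[\mG\mG^T])$, as claimed.

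For step (ii), I would fix $\mU := \mU_R^*\otimes\mU_L^*$, which is orthonormal since it is a Kronecker product of orthonormal matrices, and optimize over general diagonal positive $\tilde{\mD}$. Substituting $\Ft = \mU\tilde{\mD}\mU^T$ and using orthogonal invariance of the \gls{f-norm} (conjugation by the orthogonal matrix $\mU$), we get $\Fnorm{\mU\tilde{\mD}\mU^T - \mF} = \Fnorm{\tilde{\mD} - \mU^T\mF\mU}$. This is precisely the purely-diagonal approximation problem of \cref{prop: adam solution} applied to the rotated matrix $\mU^T\mF\mU$ instead of $\mF$, so the optimal $\tilde{\mD}^*$ is $\diagv$ of the diagonal of $\mU^T\mF\mU$. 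It then remains to identify this diagonal: using $\mF = \E[\vecg\vecg^T]$ and the identity $(\mU_R^*\otimes\mU_L^*)^T\vect(\mG) = \vect({\mU_L^*}^T\mG\mU_R^*)$, the diagonal of $\mU^T\mF\mU$ equals $\E[(\vect({\mU_L^*}^T\mG\mU_R^*))^{2}]$ elementwise, which in matrix form is $\diagm(\E[({\mU_L^*}^T\mG\mU_R^*)\elesquare])$, matching the stated solution.

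The main obstacle is step (i): one must verify carefully that restricting $\tilde{\mD}$ to Kronecker-product form genuinely collapses the SOAP family onto the Shampoo family without loss — in particular that every pair of \gls{spd} factors $(\mR_n, \mL_m)$ arises this way and that the upper bound \cref{eq: shampoo upper bound} is still the correct surrogate here — and then to argue that the minimizer of the \emph{upper bound} (not the original objective) is what we report, so the theorem statement's phrasing "admits analytic solution when minimizing the upper bound" must be tracked precisely. A minor subtlety is non-uniqueness of eigendecompositions (repeated eigenvalues, sign/ordering conventions); I would simply fix the convention from the preliminaries ($\eig$ orders by descending eigenvalues) and note any minimizer works. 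Steps (ii)'s two pieces — orthogonal invariance reducing to \cref{prop: adam solution}, and the Kronecker–vectorization identity — are routine and I would state them without belaboring the algebra.
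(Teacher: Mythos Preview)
Your proposal is correct and matches the paper's proof essentially line for line: step (i) collapses the Kronecker-factored SOAP family onto the Shampoo family and invokes \cref{thm: optimal shampoo}, then extracts eigenvectors; step (ii) exploits orthonormality of $\mU_R^*\otimes\mU_L^*$ together with the identity $(\mU_R^*\otimes\mU_L^*)^T\vect(\mG)=\vect({\mU_L^*}^T\mG\mU_R^*)$ to read off the optimal diagonal. The only cosmetic difference is that for step (ii) you reduce via orthogonal invariance to \cref{prop: adam solution}, whereas the paper expands $\Fnorm{\Pi\tilde{\mD}\Pi^T-\mF}$ and differentiates directly---these are the same computation.
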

The proof is a straightforward combination of 
\cref{thm: optimal shampoo} and \cref{thm: alicec 1 step refinement}, and can be found in \cref{subapp: proof asham}. One can show that the corresponding square-root \gls{ngd} update associated with the above result exactly recovers the update rules in SOAP optimizer (refer to \cref{subapp: update formula for SOAP} for details).

\paragraph{Connections to \gls{alicec}} Compared to \gls{alicec}, SOAP follows a more general structural assumption. However, from the FIM approximation perspective, SOAP does not exactly solve the 1-iteration alternating optimization problem. Instead, when solving for $\mU_R^*, \mU_L^* = \arg \min_{\mU_R, \mU_L, \tilde{\mD}} \Fnorm{(\mU_R\otimes \mU_L)\tilde{\mD}(\mU_R\otimes \mU_L)^T -\bm{F}}$, SOAP minimizes the upper bound instead. On the contrary, \gls{alicec} exactly solves the 1-iteration refinement problem. In addition, the structures behind \gls{alicec} and SOAP are different, and \gls{alicec} should not be viewed as a simple variant of SOAP.

\section{\gls{ssgd}: memory-efficient optimizer from a carefully selected structure}
\label{sec: ssgd}
The structured \gls{fim} approximation reveals two important insights: there exists a correspondence between structural assumption and optimizers, and structural generality often comes at the cost of practical efficiency. For example, while the structures of \gls{alicec} and SOAP offer more accurate \gls{fim} approximations than a simple structure like gradient normalization, they require expensive computation and memory consumption (\cref{tab: summary table}), making them impractical for training LLMs. Building on this, our first design recommendation is to \textbf{select structures that balance generality and practical efficiency.} 

To demonstrate this, we select a structure that generalizes gradient normalization, which scales both the rows and columns simultaneously:
\begin{align}
    \family = \{\mS \otimes \mQ\}
    \label{eq: ssgd structure}
\end{align}
where $\mS\in\Rnn$, $\mQ\in\Rmm$ are positive diagonal matrices. The idea of diagonal approximation has also been leveraged in previous work under different setups \citep{zhao2024adapprox, shazeer2018adafactor, li2018preconditioner}. 
The optimal solution of \cref{eq: UFE equation} can be solved by a fixed-point iterative procedure:

\begin{proposition}[Two-sided scaling]
Assuming the structure of \cref{eq: ssgd structure}, and $\E[\mG\elesquare]$ contains only positive values, solving \cref{eq: UFE equation} admits an iterative fixed point procedure:
\begin{align}
    \vs = \frac{\diag\left(\E[\mG^T\mQ\mG]\right)}{\Vert\mQ\Vert_F^2},\;\;\;
    \vq =\frac{\diag\left(\E[\mG\mS\mG^T]\right)}{\Vert\mS\Vert_F^2}.
    \label{eq: iterative procedure double scaling}
\end{align}
where $\vs=\diag(\mS)$, $\vq=\diag(\mQ)$.
Additionally, the fixed point solution $\vs$, $\vq$ converges to the right and left principal singular vector of $\E[\mG\elesquare]$ up to a scaling factor with unique $\mS^*\otimes \mQ^*$.
\label{prop: two sided scaling}
\end{proposition}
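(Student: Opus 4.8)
The plan is to minimize $\Fnorm{\mS\otimes\mQ - \mF}$ over positive diagonal $\mS\in\Rnn$, $\mQ\in\Rmm$ by alternating minimization, deriving closed-form updates for each factor with the other held fixed, then identifying the fixed point. First I would expand the objective using $\Fnorm{\mS\otimes\mQ-\mF} = \Fnorm{\mS\otimes\mQ} - 2\langle \mS\otimes\mQ, \mF\rangle + \Fnorm{\mF}$, and use the block structure of $\mS\otimes\mQ$: its $(i,j)$ block is $s_j \mQ$, and the corresponding block of $\mF=\E[\vecg\vecg^T]$ is $\E[\vg_i\vg_j^T]$. Since $\mS\otimes\mQ$ is diagonal, only the diagonal of $\mF$ matters, and a direct computation gives $\langle \mS\otimes\mQ,\mF\rangle = \sum_j s_j\,\diag(\mQ)^T\diag(\E[\vg_j\vg_j^T]) = \diag(\mQ)^T \E[\mG\elesquare]\,\vs$ with $\vs=\diag(\mS)$, and $\Fnorm{\mS\otimes\mQ} = \Vert\mS\Vert_F^2\Vert\mQ\Vert_F^2 = \Vert\vs\Vert^2\Vert\vq\Vert^2$. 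So the objective is, up to the constant $\Fnorm{\mF}$,
\begin{equation}
    \Vert\vs\Vert^2\Vert\vq\Vert^2 - 2\,\vq^T \E[\mG\elesquare]\,\vs .
    \label{eq: two sided obj}
\end{equation}

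Next I would fix $\vq$ and minimize over $\vs$: \eqref{eq: two sided obj} is a convex quadratic in $\vs$ with gradient $2\Vert\vq\Vert^2\vs - 2\E[\mG\elesquare]^T\vq$, so the minimizer is $\vs = \E[\mG^T\elesquare]\vq/\Vert\vq\Vert^2$; symmetrically $\vq = \E[\mG\elesquare]\vs/\Vert\vs\Vert^2$. To match the stated form I would rewrite $\E[\mG\elesquare]^T\vq = \diag(\E[\mG^T\mQ\mG])$ and $\Vert\vq\Vert^2 = \Vert\mQ\Vert_F^2$ (an elementary identity, since $(\mG^T\mQ\mG)_{jj} = \sum_k q_k G_{kj}^2$ and $\mQ$ is diagonal), giving exactly \eqref{eq: iterative procedure double scaling}; the analogous rewriting handles $\vq$. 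This establishes that the fixed-point equations are the stationarity conditions of the alternating scheme.

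For the last claim I would observe that at a fixed point, substituting one equation into the other yields $\vs \propto \E[\mG\elesquare]^T\E[\mG\elesquare]\,\vs$ and $\vq \propto \E[\mG\elesquare]\E[\mG\elesquare]^T\vq$, i.e.\ $\vs,\vq$ are eigenvectors of the Gram matrices of $\E[\mG\elesquare]$. Since $\E[\mG\elesquare]$ has strictly positive entries, Perron–Frobenius applies: the only positive eigenvector is the top one, so $\vs,\vq$ must be the right and left principal singular vectors of $\E[\mG\elesquare]$. Uniqueness of the product $\mS^*\otimes\mQ^*$ then follows because the scaling ambiguity $(\vs,\vq)\mapsto(c\vs, c^{-1}\vq)$ leaves $\mS\otimes\mQ$ invariant, while the Perron eigenvector is unique up to positive scale; I would also note that the product of the singular-vector norms is pinned down by \eqref{eq: two sided obj} (plugging the fixed point back in forces $\Vert\vs\Vert\Vert\vq\Vert$ to equal the leading singular value), which removes the remaining freedom.

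The main obstacle I anticipate is the convergence/uniqueness argument rather than the algebra: one must invoke Perron–Frobenius carefully (the strict positivity of $\E[\mG\elesquare]$ is exactly what guarantees a simple leading singular value and rules out sign-indefinite fixed points), and one must be precise that while $(\vs,\vq)$ individually carry a one-parameter scaling freedom, the rank-one object $\mS^*\otimes\mQ^*$ and hence the resulting preconditioner is genuinely unique. The reduction of the two coupled fixed-point relations to the Gram-matrix eigenproblem, and checking that the normalization is consistent, is the step I would write out most carefully.
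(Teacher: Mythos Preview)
Your derivation of the fixed-point equations is correct and matches the paper's algebra; the paper obtains them as a specialization of a slightly more general result for the structure $\mS\otimes\mM$ with \gls{spd} $\mM$ (\cref{thm: generalization to normal and whiten}), whereas you work directly with diagonal $\mQ$ and arrive at the same updates via the clean scalar objective $\Vert\vs\Vert^2\Vert\vq\Vert^2-2\,\vq^T\E[\mG\elesquare]\vs$.

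Where the two approaches diverge is the convergence argument. The paper eliminates one variable, obtains a linear self-map with a strictly positive matrix, and proves convergence of the iteration by showing this map is a contraction in the Hilbert projective metric via the Birkhoff--Hopf theorem; Perron--Frobenius is then invoked to identify the limit as the Perron eigenvector. You instead substitute to get $\vs\propto\mP^T\mP\vs$ with $\mP=\E[\mG\elesquare]$ and use Perron--Frobenius directly to conclude the only positive fixed point is the principal singular vector pair. This is more elementary and suffices to characterize the fixed point and to prove uniqueness of $\mS^*\otimes\mQ^*$. The one thing you leave implicit is convergence of the \emph{iteration itself}; that gap is easily closed by noting the coupled updates are exactly power iteration on $\mP^T\mP$ (with a nonstandard but harmless normalization), which converges from any positive start since $\mP$ is entrywise positive. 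The paper's Birkhoff--Hopf route gives a uniform contraction rate, but your argument is adequate for what the proposition asserts.
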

In practice, we find initializing $\vq=\bm{1}$ and use 1-sample estimate of $\E[\cdot]$ gives good performance. 
Interestingly, \citet{morwani2024new} also connects Shampoo to 1-step power iteration. Here, the \cref{eq: iterative procedure double scaling} can also be viewed as a power iteration algorithm. The main difference is that \citet{morwani2024new} assumes full \gls{spd} matrix $\mS$ and $\mQ$, but our structural constraint is positive diagonal matrix. Consequently, our procedure is computationally efficient and allows for multiple iterations.

The corresponding square-root \gls{ngd} update scales both rows and columns through $\mS$ and $\mQ$ (i.e.~$\devect(\Ft^{-\frac{1}{2}}\vecg) = \mQ^{-\frac{1}{2}}\mG\mS^{-\frac{1}{2}}$).
We name this optimizer, \glsreset{ssgd}\gls{ssgd} (\cref{alg: ssgd optimizer}). Although analytic solutions of $\vs$, $\vq$ exist, we perform $5$ steps of \cref{eq: iterative procedure double scaling} as an efficient approximation. To further stabilize training, we also incorporate the norm-growth limiter used in \citet{chen2024fira}. \gls{ssgd} is highly memory efficient since it only needs the storage of two diagonal matrices $\mS$ and $\mQ$ and a scalar for the limiter, consuming $m+n+1$ memory. In \cref{subapp: connection to existing optimizers} we discuss  connections to Adapprox, Apollo and Adafactor \citep{zhao2024adapprox, zhu2024apollo, shazeer2018adafactor}.
\begin{algorithm}
    \caption{\gls{ssgd}}
    \label{alg: ssgd optimizer}
    \begin{algorithmic}[1]
        \STATE {\bfseries Input:} learning rate $\lambda$, $\beta$, scale $\alpha$, limiter threshold $\gamma$, optimization steps $T$.
        \STATE $\vs_0 = 0$; $\vq_0=0$; $\phi_0=0$
        \FOR{$t=1,\ldots,T$}
            \STATE $\mG_t=\nabla_{\mW_t}\mathcal{L}$
            \STATE Obtain $\mS_t$ and $\mQ_t$ by \cref{eq: iterative procedure double scaling}
            \STATE $\vs_t=\beta\vs_{t-1}+(1-\beta)\diag(\mS_t)$; 
            \STATE $\vq_t=\beta\vq_{t-1}+(1-\beta)\diag(\mQ_t)$
            \STATE$\tilde{\mG}_t= \diagv(\vq_t)^{-\frac{1}{2}}\mG\diagv(\vs_t)^{-\frac{1}{2}}$
            \STATE $\eta =\gamma/\max\{\frac{\Vert\tilde{\mG}_t\Vert}{\phi_{t-1}}, \gamma\}$ if $t>1$ else $1$ 
            \STATE $\phi_t = \eta\Vert\tilde{\mG}_t\Vert$
            \STATE $\mW_{t+1}=\mW_t -\lambda\eta \alpha\tilde{\mG}_t$
        \ENDFOR
    \end{algorithmic}
\end{algorithm}

This design recommendation has its limitations. Finding such a structure with a balanced trade-off may not always be easy, and the resulting structure tends to be simple, offering less accurate approximation to \gls{fim} compared to the general ones. Since the main bottleneck of more general optimizers is their practical efficiency, our second design recommendation is to: \textbf{improve their efficiency by converting full-rank optimizers into low-rank counterparts using a novel low-rank extension framework}.

\section{\gls{alice}: memory-efficient optimizer from low-rank extension framework}
\label{sec: memory efficient opt}

In this section, we propose a novel low-rank framework consisting of three steps, low-rank \textbf{tracking}, subspace \textbf{switching}; and \textbf{compensation}. Tracking aims to reduce the memory cost of \gls{ema} states, whereas switching and compensation are designed to correct the potential issues caused by tracking and the limitations of low-rank projections. We demonstrate this procedure by converting \gls{alicec} to its low-rank version, \gls{alice}. While the procedure could be applied to SOAP in a similar manner, we leave this for future work. 

\paragraph{Reduce computational cost}
To improve the computational efficiency, we make two modifications to \gls{alicec}. First, we propose to use 1-step subspace iteration algorithm as an efficient scheme to find leading eigenvectors (\cref{alg: subspace iteration} in \cref{app: background}). Second, we only update projection $\mU$ every $K$ steps, effectively partitioning the training into time blocks with size $K$, and amortizing the cost. Consequently, $\mU$ is fixed within a time block. 

\subsection{Tracking: low-rank projections to reduce memory}
By carefully examining the \gls{alicec}'s procedure (\cref{eq: alicec update}), we notice all internal states are connected through the projection $\mU_{f,t}$. To reduce the memory cost, we can obtain a low-rank $\mU_t$ by keeping only the top $r$ eigenvectors, and denote the remaining $m-r$ basis as $\mU_{c,t}$ (i.e.~$\mU_{f,t}=[\mU_t, \mU_{c,t}]$). For tracking state $\mQ_t$, we can also apply low-rank approximation and only track the projected states. We call it \textbf{low-rank tracking}:
\begin{align}
    \bm{\sigma}_t = \mU_t^T\mG_t;\;\;\;\widetilde{\mQ}_{t+1} = \beta_3\widetilde{\mQ}_{t} + (1-\beta_3)\bm{\sigma}_{t}\bm{\sigma}_t^T
    \label{eq: tracking step}
\end{align}
where $\bm{\sigma}_t$ is the projected gradient, and $\widetilde{\mQ}_t$ is the low-rank tracking state. One can easily reconstruct back $\mQ_t \approx \mU_t\widetilde{\mQ}_t\mU_t^T$ when needed. This reduces the memory from $m^2$ to $r^2$. 

However, this low-rank projection comes with two major consequences: (1) the reconstructed state $\mQ_t$ is no longer accurate; (2) the resulting parameter update $\Delta$ in \cref{eq: alicec update} ignores the information within $\mU_{c,t}$ due to low-rank $\mU_t$. Next, we propose two additional steps, switching and compensation, rooted in theoretical insights to address these two problems, respectively.

\subsection{Switching: mixing leading basis with the complements}
\label{subsec: switching}
We omit the subscript $t$ in $\mU$ and $\mU_c$ in the following since they are fixed within a time block. 
Since the projected gradient $\bm{\sigma}_t$ only maintains the information in the space spanned by $\mU$, the low-rank tracking state $\widetilde{\mQ}_t$ necessarily discards information in $\mU_{c}$. Therefore, even if those directions should become the leading basis at the time we update the $\mU$, $\widetilde{\mQ}_t$ will ignore their contributions, causing the stability of the previous leading eigenvectors and preventing the exploration of other spaces. We prove that this is possible by showing that the true tracking state $\mQ_t$ can be decomposed into low-rank tracking reconstruction and a residual term quantifying the importance of $\mU_{c}$:
\begin{proposition}[Subspace switching]
    Assuming the setup mentioned above and all assumptions of \gls{alicec} are satisfied. We further assume the low-rank $\mU\in\Rmr$ is obtained at the beginning of $i+1$ time block by $\eig(\mQ^*_{ik},r)$ where $\mQ^*_{ik}$ is the true tracking state. Further, we assume the stability of the eigen-basis such that gradient $\mG_t$ during $i+1$ time block shares the same eigen-basis as $\mQ^*_{ik}$. Then, the true tracking state at the end of $i+1$ block, $\mQ^*_{(i+1)k}$, can be decomposed into:
    \begin{align}
        \mQ^*_{(i+1)k}=\sum_{t=ik+1}^{(i+1)k} \mG_t\mG_t^T = \sum_{t=ik+1}^{(i+1)k} \widetilde{\mG}_t\widetilde{\mG}_t^T+\mU_c\bm{\Sigma}_t\mU_c^T
        \label{eq: what is happening with low-rank U}
    \end{align}
    where $\widetilde{\mG}_t=\mU\bm{\sigma}_t$ is the low rank reconstructed gradients, $\bm{\Sigma}_t\in\mathbb{R}^{(m-r)\times (m-r)}$ is a diagonal matrix with positive values, and $\mU_c$ is the remaining eigen-basis such that $[\mU,\mU_c]$ will form the complete eigen-basis of $\mQ^*_{ik}$. 
    \label{prop: subspace switching}
\end{proposition}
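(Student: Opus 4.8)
The plan is to diagonalize each per-step outer product $\mG_t\mG_t^T$ occurring in the $(i{+}1)$-th block in the shared eigen-basis, split the spectrum into the leading $r$ directions (spanned by $\mU$) and the trailing $m-r$ directions (spanned by $\mU_{c}$), recognize that the leading block is exactly what low-rank tracking reconstructs while the trailing block is the residual $\mU_{c}\bm{\Sigma}_t\mU_{c}^T$, and finally sum over the block to obtain \cref{eq: what is happening with low-rank U}.

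Concretely, I would write $[\mU,\mU_{c}]\in\Rmm$ for the complete orthonormal eigen-basis of the true tracking state $\mQ^*_{ik}$, so that $\mU^T\mU=\mI_r$, $\mU_{c}^T\mU_{c}=\mI_{m-r}$ and $\mU^T\mU_{c}=\mathbf{0}$. By the stability-of-eigen-basis assumption, for every $t$ in the $(i{+}1)$-th block $\mG_t\mG_t^T$ is diagonalized by $[\mU,\mU_{c}]$, i.e.\ $\mG_t\mG_t^T = \mU\mathbf{A}_t\mU^T + \mU_{c}\bm{\Sigma}_t\mU_{c}^T$ with diagonal $\mathbf{A}_t\in\Rrr$ and diagonal $\bm{\Sigma}_t\in\mathbb{R}^{(m-r)\times(m-r)}$, the cross terms $\mU(\cdot)\mU_{c}^T$ vanishing precisely because $\mU$ and $\mU_{c}$ collect disjoint sets of eigenvectors. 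The diagonal entries of $\bm{\Sigma}_t$ are positive because, under the \gls{alicec} assumptions, $\mG_t\mG_t^T$ (the Gram matrix of the $m\times n$ gradient with $n\geq m$) is positive definite, hence all of its eigenvalues, in particular the trailing ones, are strictly positive.

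Next I would identify the first summand with the low-rank reconstruction: since $\bm{\sigma}_t=\mU^T\mG_t$ and $\widetilde{\mG}_t=\mU\bm{\sigma}_t=\mU\mU^T\mG_t$, one has $\widetilde{\mG}_t\widetilde{\mG}_t^T=\mU(\mU^T\mG_t\mG_t^T\mU)\mU^T$, and substituting the decomposition above together with $\mU^T\mU=\mI_r$, $\mU^T\mU_{c}=\mathbf{0}$ collapses the inner matrix to $\mathbf{A}_t$, giving $\widetilde{\mG}_t\widetilde{\mG}_t^T=\mU\mathbf{A}_t\mU^T$. Plugging this back in and summing $t$ from $ik+1$ to $(i+1)k$, using $\mQ^*_{(i+1)k}=\sum_t\mG_t\mG_t^T$, yields the claimed identity.

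I expect the only genuinely delicate point to be the vanishing of the cross terms: this is exactly where the stability assumption is indispensable, since without it each $\mG_t\mG_t^T$ carries a term of the form $\mU\mathbf{B}_t\mU_{c}^T+\mU_{c}\mathbf{B}_t^T\mU^T$ that can be absorbed neither into a rank-$r$ reconstruction living in $\mathrm{span}(\mU)$ nor into a pure $\mU_{c}$-residual. I would also flag that the statement is phrased for the idealized $\mU_{c}$ equal to the true complementary eigen-basis (rather than the QR-completion actually used in the algorithm), which is what makes $\bm{\Sigma}_t$ literally diagonal; dropping that assumption only downgrades ``diagonal'' to ``symmetric positive (semi-)definite'' and leaves the decomposition intact.
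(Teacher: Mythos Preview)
Your proposal is correct and follows essentially the same route as the paper: the paper decomposes $\mG_t=\mU\mU^T\mG_t+(\mG_t-\mU\mU^T\mG_t)$, expands the outer product, and then invokes the shared eigen-basis assumption $\mG_t\mG_t^T=\mU\mathbf{A}_t\mU^T+\mU_c\bm{\Sigma}_t\mU_c^T$ to kill the cross terms and identify the residual as $\mU_c\bm{\Sigma}_t\mU_c^T$, whereas you start directly from that spectral decomposition and then identify the leading block with $\widetilde{\mG}_t\widetilde{\mG}_t^T$. Both arguments hinge on exactly the same orthogonality $\mU^T\mU_c=\mathbf{0}$ and the same stability assumption, so the difference is purely organizational.
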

When some entries in $\bm{\Sigma}_t$ become dominant, the corresponding basis in $\mU_c$ will form the new leading eigen-basis when updating the projection $\mU$ through $\eig(\mQ_{(i+1)k}^*)$. On the other hand, if $\mU$ is updated with low-rank reconstructed state alone, it is equivalent to setting $\bm{\Sigma}_t=0$, ignoring the contributions of $\mU_c$, and resulting in the stability of the previous leading basis. 
Inspired by this insight, we propose a practical scheme to update $\mU$, instead of relying on \gls{evd} of low-rank reconstructed state. We call it subspace switching (\cref{alg: subspace switching}). Intuitively, we force the new projection $\mU$ to mix the leading eigen-basis with randomly sampled eigenvectors from the remaining basis $\mU_c$, allowing the optimizer to explore other spaces. Although the true $\mU_c$ is hard to obtain in practice, we propose to approximate it by the QR decomposition of $\mU$. 

\begin{algorithm}
    \caption{Subspace switching}
    \label{alg: subspace switching}
    \begin{algorithmic}[1]
        \STATE {\bfseries Input:} Reconstructed state $\mQ$, rank $r$, leading basis number $l$, previous low-rank projection $\mU_{t-1}$
        \STATE $\mU_t'=\text{Subspace iteration}(\mQ, \mU_{t-1})$
        \STATE $\mU'_{t,1}\leftarrow \text{Keep top $l$ eigenvectors of }\mU'_t$ 
        \STATE Uniformly sample $r-l$ basis from the complement $\mU'_{c,t} = \qr(\mU'_{t})$ as $\mU'_{t,2}$
        \STATE Combine basis $\mU = [\mU'_{t,1},\mU'_{t,2}]$
        \STATE {\bfseries Return} $\mU$
    \end{algorithmic}
\end{algorithm}

\subsection{Compensation: convert low-rank update to be full-rank}
\label{subsec: compensation}
Another problem with low-rank projection $\mU$ is the information loss in the resulting parameter update $\Delta$ at each step. The goal of compensation step is to compensate for this information loss with minimal memory overhead. Firstly, we need to know which information has been discarded. We show that the update $\Delta$ with full-rank $\mU_{f}$ in \cref{eq: alicec update} can be decomposed into the low-rank update with $\mU$ and complement update controlled by $\mU_{c}$ (proof in \cref{subapp: discussion low-rank}):
\begin{align}
    \Delta =\mU\frac{\mU^T\mG}{\sqrt{\E[(\mU^T\mG)\elesquare]}}
    + \underbrace{\mU_{c}\frac{\mU_{c}^T\mG}{\sqrt{\E[(\mU_{c}^T\mG)\elesquare]}}}_{\devect(\Ft_{c}^{-\frac{1}{2}}\vecg)}
    \label{eq: alice update decomposition}
\end{align}
where $\Ft_{c}=\diagb(\mU_{c}\mD_{c,1}\mU_{c}^T,\ldots, \mU_{c}\mD_{c,n}\mU_{c}^T)$ is the approximated \gls{fim} corresponding to the complement basis $\mU_c$, $\mD_{c,i}=\diagv(\E[(\mU_c^T\vg_i)^2])$ and $^{-\frac{1}{2}}$ is the square-root pseudo-inverse. 

We notice that the discarded information, $\devect(\Ft_c^{\frac{1}{2}}\vecg)$, has the same format as the square-root \gls{ngd} with \gls{fim} $\Ft_c$. From the proposed \gls{fim} view point, the design of compensation term becomes the selection of a structure to approximate $\Ft_c$, and the application of the corresponding square-root \gls{ngd}. Considering the trade-off between structural generality and practical efficiency, one structural choice is the diagonal structure of normalization operator, which simply scales the columns of gradient matrix and is highly memory efficient. In addition, we only want to focus on the discarded information $\mU_c\mU_c^T\mG$ for compensation, rather than the entire gradient $\mG$. We propose the following compensation at each step $t$:
\begin{align}
    \mC_t= \devect((\mS\otimes\mU_c\mU_c^T)\vecg_t)=\mU_c\mU_c^T\mG_t\mS
    \label{eq: compensation term}
\end{align}
where $\mS$ is a positive diagonal matrix, $\mU_c\mU_c^T\mG=(\mG-\mU\mU^T\mG)$ is the gradient information within the remaining basis. We show that such design choice admits an optimal solution of $\mS$ to the \gls{fim} approximation problem.
\begin{theorem}[Optimal compensation]
    Assume that the conditions of \gls{alicec} are satisfied. With the proposed form of compensation (\cref{eq: compensation term}), minimizing \gls{fim} reconstruction loss
    \begin{align*}
        \Fnorm{(\mS_t^{-2}\otimes \mU_c\mU_c^T)-\Ft_{c}}
    \end{align*}
    admits analytic solution:
    \begin{equation}
        \diag(\mS_t) = \frac{\sqrt{m-r}}{\sqrt{\E[\bm{1}_m^T\mG_t\elesquare-\bm{1}_r^T(\mU^T\mG_t)\elesquare}]}
         \label{eq: optimal D for compensation}
    \end{equation}
    where $\bm{1}_m\in\mathbb{R}^m$, $\bm{1}_r\in\mathbb{R}^r$ are the column vectors with element $1$.
    \label{thm: optimal compensation}
\end{theorem}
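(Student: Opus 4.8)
The plan is to follow the template of the normalization case in \cref{coro: generalization to whitening and normalization}: exploit the block structure of both matrices to decouple the Frobenius objective into $n$ independent single-variable least-squares problems, each a convex quadratic.

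First I would unpack the two matrices being compared. Set $\mM:=\mU_c\mU_c^T\in\Rmm$, the orthogonal projector onto the complement subspace, so that $\mM^2=\mM$, $\mM^T=\mM$, $\mU_c^T\mU_c=\mI_{m-r}$, and $\tr(\mM)=m-r$. Since $\mS_t$ is diagonal, $\mS_t^{-2}\otimes\mM$ is block diagonal with $i$-th block $(\mS_t)_{ii}^{-2}\mM$; and, by the definition of $\Ft_c$ accompanying \cref{eq: alice update decomposition}, $\Ft_c=\diagb(\mU_c\mD_{c,1}\mU_c^T,\dots,\mU_c\mD_{c,n}\mU_c^T)$ has the same $n$-block partition. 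So the objective splits:
\begin{equation}
    \Fnorm{(\mS_t^{-2}\otimes\mM)-\Ft_c}=\sum_{i=1}^{n}\Fnorm{(\mS_t)_{ii}^{-2}\mM-\mU_c\mD_{c,i}\mU_c^T},
\end{equation}
and the summands involve disjoint variables, so I would minimize each one separately.

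Next I would solve each scalar problem. Reparametrizing $x_i:=(\mS_t)_{ii}^{-2}>0$ and expanding $\Fnorm{\mA}=\tr(\mA^T\mA)$ using the symmetry of $\mM$ and $\mU_c\mD_{c,i}\mU_c^T$,
\begin{equation}
    \Fnorm{x_i\mM-\mU_c\mD_{c,i}\mU_c^T}=x_i^2\,\tr(\mM)-2x_i\,\tr(\mM\,\mU_c\mD_{c,i}\mU_c^T)+\text{const}.
\end{equation}
Using $\mM\mU_c=\mU_c$ and cyclicity of the trace, $\tr(\mM\,\mU_c\mD_{c,i}\mU_c^T)=\tr(\mD_{c,i})=\E[\Vert\mU_c^T\vg_i\Vert^2]$ since $\mD_{c,i}=\diagv(\E[(\mU_c^T\vg_i)\elesquare])$. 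Each term is thus a convex quadratic $x_i^2(m-r)-2x_i\E[\Vert\mU_c^T\vg_i\Vert^2]+\text{const}$, uniquely minimized at $x_i^\star=\E[\Vert\mU_c^T\vg_i\Vert^2]/(m-r)$, which is strictly positive under the \gls{alicec} assumptions (a rank-$r<m$ projection cannot absorb all the gradient energy of a column in expectation), so $(\mS_t)_{ii}=(x_i^\star)^{-1/2}$ is a valid positive diagonal entry. Finally, since $[\mU,\mU_c]$ is a complete orthonormal basis of $\R^m$, $\mU\mU^T+\mM=\mI_m$, hence $\Vert\mU_c^T\vg_i\Vert^2=\Vert\vg_i\Vert^2-\Vert\mU^T\vg_i\Vert^2$, i.e.~the $i$-th entry of $\bm{1}_m^T\mG_t\elesquare-\bm{1}_r^T(\mU^T\mG_t)\elesquare$; substituting yields exactly \cref{eq: optimal D for compensation}.

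The hard part is really just Step 1 --- noticing that $\mS_t^{-2}\otimes\mM$ and $\Ft_c$ carry the identical $n$-block partition so the problem decouples completely, and handling the projector identities ($\mM^2=\mM$, $\mM\mU_c=\mU_c$) cleanly inside the trace. Everything after that is a one-variable quadratic minimization plus Pythagoras in the basis $[\mU,\mU_c]$; the single point worth stating explicitly is that the positivity hypothesis inherited from \gls{alicec} is what keeps the denominator nonzero, so that $\mS_t$ is well defined.
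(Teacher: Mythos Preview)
Your proposal is correct and follows essentially the same approach as the paper: decouple the Frobenius objective into $n$ independent scalar quadratics via the shared block-diagonal structure, minimize each to get $(\mS_t)_{ii}^{-2}=\E[\Vert\mU_c^T\vg_i\Vert^2]/(m-r)$, and then use completeness of $[\mU,\mU_c]$ (your Pythagoras step) to eliminate $\mU_c$ from the expression. The paper's version expands $\Fnorm{O_{ii}\mU_c\mU_c^T}$ entrywise rather than using $\tr(\mM)=m-r$ directly, but the logic is identical.
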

\cref{alg: compensation} summarizes the compensation step.

\begin{algorithm}
    \caption{Compensation}
    \label{alg: compensation}
    \begin{algorithmic}[1]
        \STATE {\bfseries Input:} $\mG_t$, projection $\mU$, previous norm $\vp$, limiter norm $\bm{\phi}$, limiter threshold $\gamma$, decay rate $\beta$
        \STATE $\vp \leftarrow \beta\vp+(1-\beta)(\bm{1}_m^T\mG_t\elesquare-\bm{1}_r^T(\mU^T\mG_t)\elesquare)$
        \STATE $\mC_t\leftarrow\sqrt{m-r}(\mG_t-\mU\mU^T\mG_t)\diagv(\vp)^{-\frac{1}{2}}$
        \STATE $\eta=\gamma\slash \max\{\frac{\Vert\mC_t\Vert}{\phi},\gamma\}$ if $\phi>0$ else $1$        
        \STATE $\phi = \Vert\eta\mC_t\Vert$      
        \STATE $\mC_t\leftarrow \eta \mC_t$
        \STATE {\bfseries Return} $\mC_t$, $\vp$, $\phi$
    \end{algorithmic}
\end{algorithm}

\subsection{\Gls{alice} optimizer}
\label{subsec: alice optimizer}
By combining \gls{alicec} with low-rank $\mU$, tracking, switching and compensation, we obtain \gls{alice}, a novel low-rank optimizer. One can also design a simple variant, \gls{alicez}, by disabling the tracking for better memory efficiency.
\paragraph{Connections to GaLore}
Interestingly, GaLore, in fact, is an approximation to \gls{alice} without tracking, switching and compensation. Based on the connection of \gls{alice} to \gls{alicec}, we reveal that GaLore is a simple low-rank extension of \gls{alicec}, a more general optimizer than Adam. This also reflects the advantage of the \gls{fim} view point, which provides a deeper understanding and an explanation on its better performance than Adam under certain scenarios \citep{zhao2024galore}.

\begin{algorithm}
\caption{\gls{alice}/\gls{alicez} optimizer}
\label{alg: alice optimizer}
    \begin{algorithmic}[1]
        \STATE {\bfseries Input:} learning rate $\lambda$, scale $\alpha$, compensation scale $\alpha_c$, update interval $k$, $\beta_1$, $\beta_2$, $\beta_3$ ($\beta_3=0$ for \gls{alicez}),  optimization step $T$, rank $r$, loss function $\mathcal{L}$, limiter threshold $\gamma$, leading basis number $l$ 
        \STATE $\widetilde{\mQ}_0 = 0$, $\mU_0=0$, $\vp_0=0$, $\bm{\phi}=0$, $\vm_0=0$, $\vv_0=0$
        \FOR{$t=1\ldots, T$}
            \STATE $\mG_t=\nabla_{\mW_t}\mathcal{L}$
            \IF{$t==1$ or $(t\mod K)==0$}
                \STATE 
                $\mQ_t=\beta_3\mU_t\widetilde{\mQ}_{t-1}\mU_t^T+(1-\beta_3)\mG_t\mG_t^T$ 
                \STATE $\mU_t=Switch(\mQ_t, r, l, \mU_{t-1})$
            \ELSE
                \STATE $\mU_t=\mU_{t-1}$
            \ENDIF
            \STATE $\bm{\sigma}_t = \mU_t^T\mG_t$ 
            \STATE $\widetilde{\mQ}_t = \beta_3 \widetilde{\mQ}_{t-1} +(1-\beta_3)\bm{\sigma}_t\bm{\sigma}_t^T$ 
            \STATE $\vm_t = \beta_1 \vm_{t-1}+ (1-\beta_1)\bm{\sigma}_t$
            \STATE $\vv_t = \beta_2 \vv_{t-1} + (1-\beta_2) \bm{\sigma}_t\elesquare$
            \STATE $\bm{\omega} = \frac{\vm_t}{\sqrt{\vv_t}}$
            \STATE $\Delta_c, \vp_t, \bm{\phi}_t = Compensation(\mG_t, \mU_t, \vp_{t-1}, \bm{\phi}_{t-1}, \gamma, \beta_1)$ 
            \STATE $\mW_{t+1}=\mW_t-\lambda\alpha (\mU\bm{\omega}+\alpha_c\Delta_c)$
        \ENDFOR
    \end{algorithmic}
\end{algorithm}

\section{Related Work}
\label{sec: related work}
\paragraph{Optimizer based on structural approximation}
Due to the desirable properties and convergence of second-order optimization, various work has been proposed to efficiently approximate Hessian-like matrix, e.g.~\gls{fim}. KFAC \citep{martens2015optimizing} was one of the first work that goes beyond the simple diagonal approximations, and approximate the layer-wise \gls{fim}. Subsequent works extends KFAC beyond MLP layers \citep{grosse2016kronecker, martens2018kronecker}. Further refinements to KFAC are also proposed, including refinement of eigenvalues \citep{george2018fast}, fixing the trace \citep{gao2021trace}, and refinement by Kronecker product singular value decomposition \citep{koroko2022efficient}. Our proposed view point is different from KFAC, where KFAC decompose the \gls{fim} using the back-proped gradients and layer input. In addition, KFAC needs to be re-derived for different types of layers. On the other hand, our proposed view point is closer to another line of work, aiming to approximate the full AdaGrad \citep{duchi2011adaptive}. In particular, Shampoo \citep{gupta2018shampoo, anil2020scalable} is proposed as a Kronecker product approximation to AdaGrad. Later, \citep{morwani2024new} explicitly proved that it is a 1-step power iteration to optimal Kronecker product approximation. 
In here, we propose an alternative view of Shampoo as minimizing a upper bound of the approximation error. SOAP \citep{vyas2024soap} is a recently proposed adaptive optimizer that further improves Shampoo based on the insights from \citet{george2018fast}. 
In this work, we make \textbf{explicit} connection of those approaches to \gls{fim} approximation, and establish the equivalence of structural assumption to optimizers. We also additionally provide connections of gradient operators to \gls{fim} approximation, and design new optimizers from this view point. We provide discussions of our approach to many existing optimizers, including Apollo \citep{zhu2024apollo}, GaLore \citep{zhao2024galore}, Muon \citep{jordan2024muon}, SWAN \citep{ma2024swan}, Adapprox \citep{zhao2024adapprox}, Lars \citep{you2017lars}, Lamb \citep{you2019lamb}, Fira \citep{chen2024fira} and AdaDiag \citep{anonymous2024improving}, in \cref{subapp: connection to existing optimizers}. 
In addition to the above, preconditioning SGD (PSGD) \citep{li2017preconditioned, pooladzandi2024curvature} aims to directly approximate the inverse Hessian or \gls{fim} through different structural assumptions. \citet{li2018preconditioner} also discussed the diagonal Kronecker product structure as in \gls{ssgd}, but they apply this structural assumption under the framework of PSGD to directly approximate the inverse of \gls{fim} through gradient descent.  

\paragraph{Memory-efficient optimizer}
Practical efficiency is a crucial factor when training large models. In particular, there are many works that focus on optimizing memory efficiency, as less memory consumption allows larger batch size, effectively improving throughput.
There are two main lines of research: (1) use low-rank approximation to reduce memory of optimizer internal states; (2) remove the internal states.
GaLore \citep{zhao2024galore}, a well-know low-rank optimizer, proposed to use \gls{svd} for a low-rank projection, followed by applying Adam within it. It can be seen as a special case of \gls{alice} without tracking, switching and compensation.
Fira \citep{chen2024fira}, an extension to GaLore, adds compensation term to turn low-rank update to be full-rank, substantially improves the performance. Flora \citep{si2024flora} used randomly sampled Gaussian matrix as the subspace to save compute and memory. However, it is mainly focused on the fine-tuning tasks. ReLora \citep{lialin2023relora}, an extension to LoRA \citep{hu2021lora}, periodically merges the LoRA weights to enable full-rank learning. 
On the other hand, many optimizers require fewer internal states compared to Adam. Lion \citep{chen2024symbolic} and Signum \citep{bernstein2018signsgd} only require the storage of the first moment, offering a balance between memory efficiency and performance. Apollo \citep{zhu2024apollo}, a recently proposed approach, maintains a low-rank GaLore states (e.g.~Apollo-mini uses rank $1$) for estimating the scaling matrix for the raw gradient. Although it still requires GaLore states, using rank 1 allows it to achieve SGD-like memory. At the same time, \citet{ma2024swan} developed SWAN, which manages to completely removes the internal states through two gradient operators: normalization and whitening, and obtains stronger performance than Adam. In this paper, we also show that normalization and whitening operators are special cases of \gls{fim} approximation. 

\section{Experiments}
\label{sec: experiments}
We include all setup details along with additional experiment results in \cref{app: experiment details}.

\subsection{Pretraining LLaMA with C4 dataset}
\label{sbusec: experiment pretrain C4} 
\paragraph{Setup}
We evaluate the proposed \gls{ssgd}, \gls{alice} and its variant \gls{alicez} on pre-training LLaMA \citep{touvron2023llama} with the C4 dataset \citep{raffel2020C4}. We train the following model sizes: $60$M, $130$M, $350$M and $1.3$B using a similar setup as \citet{zhao2024galore, zhu2024apollo}. For baselines, we consider GaLore, Fira, Apollo-mini, Apollo-svd and Adam.
An important consideration in our experiments is that \textbf{all previous low-rank methods rely on full-rank Adam to train the last layer, which is arguably one of the most important layers} \citep{zhao2024deconstructing}. To thoroughly assess their effectiveness, we report performance for both cases when evaluating low-rank methods—training the last layer with and without Adam—but prioritize the latter as the main evaluation criterion. For full-rank methods (i.e.~\gls{ssgd}, Apollo-mini, Apolli-svd and Adam), we assume the last layer is trained by Adam. 

\begin{table}[]
\caption{LLaMA pretraining performance. Ppl.~is the evaluation perplexity when the last layer is not trained by Adam; and Ppl.* is when the last layer is trained by Adam. For Adam, we report both our reproduced performance and perplexity cited from \citet{zhu2024apollo}. We also cite Ppl.* of other baselines from \citet{zhu2024apollo}. 
The speed-up is measured against Adam in terms of training steps. The TP is the number of training tokens processed per second, and effective TP is total token used by Adam divided by time used by the candidate optimizer to reach the same final eval ppl.~of Adam.}
\label{tab: pretrain performance}
\centering
\resizebox{\textwidth}{!}{
\begin{tabular}{c|cccccccc}
\hline
Methods                         & \multicolumn{2}{c}{60M}          & \multicolumn{2}{c}{130M}         & \multicolumn{2}{c}{350M}         & \multicolumn{2}{c}{1.3B}         \\ \hline
                                & Ppl.            & Ppl.*          & Ppl.             & Ppl.*         & Ppl.             & Ppl.*         & Ppl.             & Ppl.*         \\
Adam                            & NA              & 33.94         & NA               & 25.03         & NA               & 19.24         & NA               & 16.44         \\ 
Adam (cited) & NA & 34.06 & NA & 25.08 & NA&18.80&NA&15.56\\
\hline
GaLore                          & 38.91           & 34.88          & 27.18            & 25.36         & 21.11            & 18.95         & 16.60            & 15.64         \\
Fira                            & 33.77           & 31.06          & 25.21            & 22.73         & 18.93            & 17.03         & 15.14            & 14.31         \\
Apollo-mini                     & NA              & 31.93          & NA               & 23.84         & NA               & 17.18         & NA               & 14.17         \\
Apollo-svd                      & NA              & 31.26          & NA               & 22.84         & NA               & 16.67         & NA               & 14.10         \\ \hline
RACS                            & NA              & 30.25          & NA               & 22.67         & NA               & \textbf{16.51}         & NA               & \textbf{13.52}         \\
Alice-0                         & \textbf{28.83}           & 29.74          & \textbf{21.99}            & 22.43         & \textbf{16.66}            & \textbf{16.43}         & \textbf{13.97}            & \textbf{13.47}         \\
Alice                           & \textbf{28.69}           & 29.33          & \textbf{21.95}            & \textbf{21.79}         & \textbf{16.61}            & \textbf{16.37}         & \textbf{13.85}            & \textbf{13.52}         \\ \hline
Speed-up in steps (Alice)       & \multicolumn{2}{c}{2.22x}        & \multicolumn{2}{c}{2.00x}        & \multicolumn{2}{c}{2.45x}        & \multicolumn{2}{c}{2.82x}        \\
Throughput(TP)/Effect TP (Adam) & \multicolumn{2}{c}{97748/97748}  & \multicolumn{2}{c}{82247/82247}  & \multicolumn{2}{c}{63139/63139}  & \multicolumn{2}{l}{53588/53588}  \\
Throughput(TP)/Effect TP(Alice) & \multicolumn{2}{c}{92589/\textbf{202058}} & \multicolumn{2}{c}{71583/\textbf{141148}} & \multicolumn{2}{l}{58847/\textbf{143088}} & \multicolumn{2}{c}{45523/123048} \\
Throughput(TP)/Effect TP (RACS) & \multicolumn{2}{c}{98238/162423} & \multicolumn{2}{c}{73233/123116} & \multicolumn{2}{c}{55970/131372} & \multicolumn{2}{c}{47488/\textbf{129817}} \\ \hline
\end{tabular}}
\end{table}

\begin{table}[]
\caption{Memory consumption estimation. The reported memory is the combined consumption of: weight parameters; Adam optimizer states for non-matrix parameters; and candidate optimizer states for matrix parameters. Mem.* represent the consumption when the last layer is trained by Adam. }
\label{tab: memory consumption}
\centering
\resizebox{\textwidth}{!}{
\begin{tabular}{c|cccccccc}
\hline
Methods     & \multicolumn{2}{c}{60M} & \multicolumn{2}{c}{130M} & \multicolumn{2}{c}{350M} & \multicolumn{2}{c}{1.3B} \\ \hline
            & Mem.       & Mem.*      & Mem.        & Mem.*      & Mem.        & Mem.*      & Mem.        & Mem.*      \\
Adam        & NA         & 0.32G      & NA          & 0.75G      & NA          & 2.05G      & NA          & 7.48G      \\ \hline
GaLore      & 0.21G      & 0.26G      & 0.51G       & 0.57G      & 1.2G        & 1.29G      & 4.25G       & 4.43G      \\
Fira        & 0.21G      & 0.26G      & 0.51G       & 0.57G      & 1.2G        & 1.29G      & 4.25G       & 4.43G      \\
Apollo-mini & NA         & 0.23G      & NA          & 0.43G      & NA          & 0.93G      & NA          & 2.98G      \\
Apollo-svd  & NA      & 0.26G      & NA       & 0.57G      &NA       & 1.29G      & NA       & 4.43G      \\ \hline
RACS        & NA         & 0.23G      & NA          & 0.43G      & NA          & 0.93G      & NA          & 2.98G      \\
Alice-0     & 0.21G      & 0.26G      & 0.51G       & 0.57G      & 1.2G        & 1.29G      & 4.25G       & 4.44G      \\
Alice       & 0.22G      & 0.26G      & 0.53G       & 0.59G      & 1.24G       & 1.33G      & 4.42G       & 4.6G       \\ \hline
\end{tabular}}
\end{table}

\begin{table}[]
\centering
\caption{Eval ppl.~of 1B v.s. 7B LLaMA at different training steps. For Apollo, Apollo-mini, 8-bit Adam and Galore, we cite the number from \citet{zhu2024apollo}.}
\label{tab: 7B preformance}
\begin{tabular}{l|l|llll}
\hline
Method           & Mem.   & 40K & 80K & 120K & 150K \\ \hline
8-bit Adam (7B)&26G &18.09 & 15.47 & 14.83 & 14.61  \\
8-bit Galore (7B) &18G & 17.94 & 15.39 &14.95 &14.65\\
Apollo (7B)      & 15.03G &  17.55   &  14.39   &  13.23    &   13.02   \\
Apollo-mini (7B) & 13.53G &  18.03   &  14.60   &  13.32    &  13.09    \\\hline
\gls{ssgd} (1B) & 2.98G & 16.43 &14.26& \textbf{13.20}& \textbf{13.01}\\
Alice (1B)       & 4.6G   &  \textbf{15.93}   &  \textbf{14.08}   & \textbf{13.15}     & \textbf{12.97}     \\
\hline
\end{tabular}
\end{table}

\paragraph{Main results} \Cref{tab: pretrain performance} reports the pretraining performance in terms of evaluation perplexity, and \cref{tab: memory consumption} summarizes the corresponding estimated memory consumption. Our proposed \gls{ssgd} and \gls{alice} outperforms the other memory-efficient baselines and full-rank Adam consistently across various model sizes. \gls{alice} and \gls{alicez} perform on par with each other, suggesting \gls{alicez} may be preferred for better memory efficiency. One major advantage of \gls{alice} is its fast convergence compared to Adam, and achieves more than $2\times$ speed-ups across model sizes, while maintaining similar memory consumption as GaLore. \Cref{fig:1b pretrain curve} demonstrate this fast convergence of the evaluation perplexity during training for the 1B model. 
Despite the simplicity of \gls{ssgd}, it performs well for 350M and 1.3B model, and surpasses the other scaling-based baseline, Apollo-mini and Apollo-svd, consistently. 
From the throughput (TP), we can observe \gls{alice} and \gls{ssgd} are not significantly slower than Adam with $15\%$ and $11\%$ drop for 1B model, respectively. 
Considering the speedup effect, we report the effective TP to represent how quickly the optimizers reach a target loss in wall-clock time. \gls{alice} and \gls{ssgd} achieve $123048$ and $129817$ with 1B model, respectively, compared to $53588$ for Adam, resulting in more than 2× faster convergence in wall-clock time to reach Adam’s final evaluation perplexity.

For Adam, since we cannot reproduce the exact number reported in \citet{zhu2024apollo}. For fair comparison, we also cite the perplexities of Adam from \citet{zhu2024apollo}, and compute the corresponding speed-ups in terms of steps. \gls{alice} achieves $2.22$x, $2.11$x, $2.18$x, and $2.15$x for 60M, 130M, 350M and 1B models, respectively.  
For the actual memory footprints and additional training curves, see \cref{subapp: additional pretrain results}.
\begin{figure}
    \centering
    \includegraphics[scale=0.3]{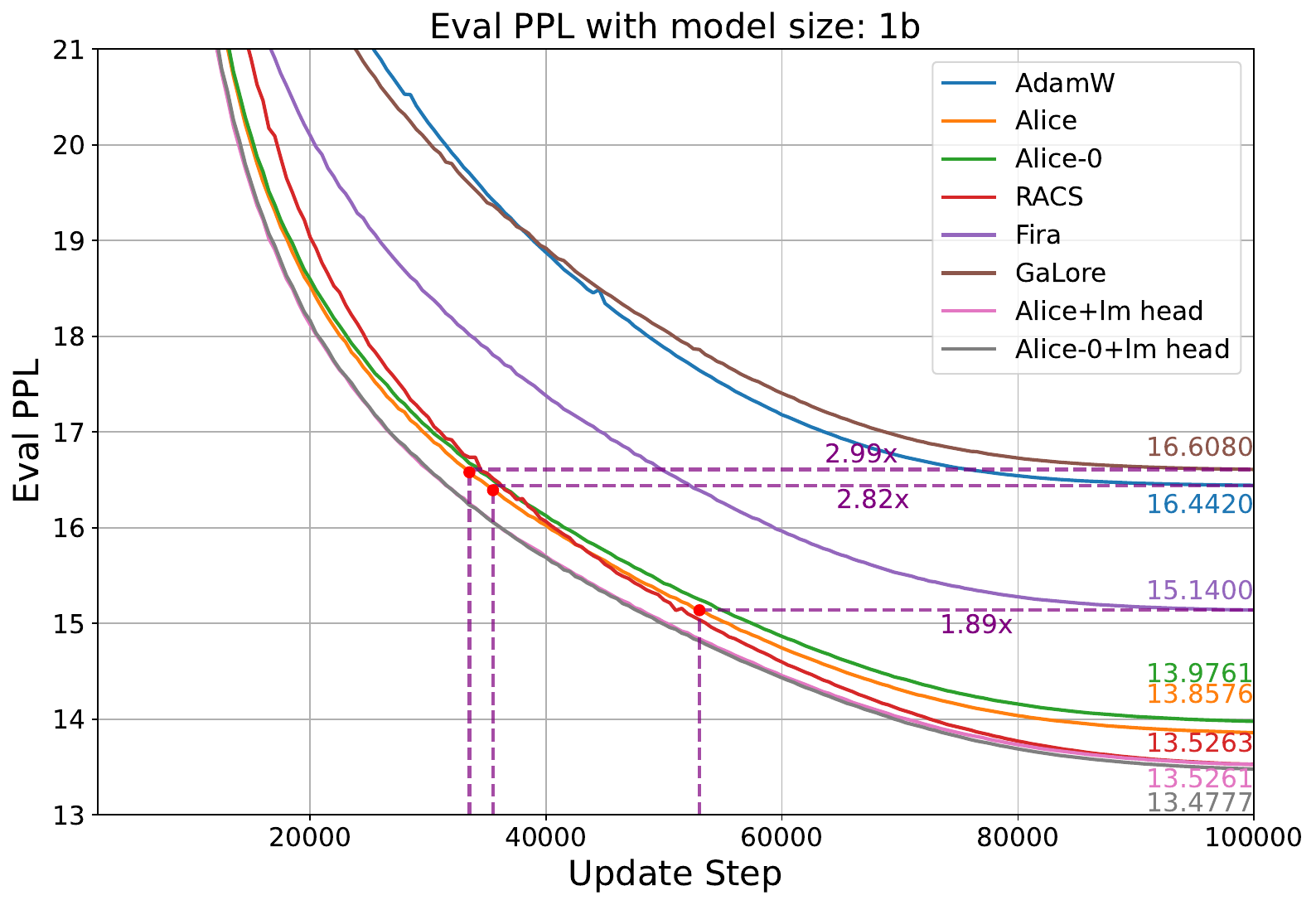}
    \caption{1B LLaMA C4 pretraining evaluation ppl.~curve. "+lm head" represents the last layer is trained by full-rank Adam.}
    \label{fig:1b pretrain curve}
\end{figure}
\paragraph{1B v.s.~7B LLaMA}
To further demonstrate the effectiveness, we follow the setup of \citet{zhu2024apollo}, but train a 1B LLaMA with \gls{alice} and \gls{ssgd} to compare with 7B LLaMA trained by Apollo, Apollo-mini, 8-bit Adam and 8-bit Galore. \Cref{tab: 7B preformance} shows that 1B model trained with our proposed optimizers consistently outperforms 7B model at different training checkpoints with less memory cost\footnote{The total memory cost consists of (1) model parameters; (2) Adam optimizer cost for non-matrix parameters; (3) candidate optimizer cost for matrix parameters. Here, we report the combined cost.}. To complete 150K steps with 8xA100 GPUs, Apollo requires \textbf{15} days while \gls{alice} and \gls{ssgd} require around \textbf{3.8} days.

\subsection{Ablation: Effectiveness of the design choice}
\label{subsec: exp ablation study}

\paragraph{Effect of low-rank tracking}
First, we verify whether low-rank tracking (\cref{eq: tracking step}) is beneficial and whether our conjecture about the stability of the leading basis holds. As shown in \cref{tab: pretrain performance}, \gls{alicez} performs on par with \gls{alice}, suggesting that tracking does not provide a significant boost. However, \cref{fig: effect of tracking} indicates that tracking is helpful when compensation is disabled and must be used alongside switching. Without switching, tracking leads to inferior performance due to the stability of the eigenspace. \Cref{fig: eigenspace cosine similiary} supports this conjecture by showing the cosine similarity before and after updating 
$\mU$ every $200$ steps, confirming that tracking contributes to the stability of the leading basis.

\paragraph{Switching strategy}
We evaluate the effectiveness of our theory-inspired switching strategy compared to other heuristics. The considered alternatives are: (1) \textbf{Gaussian}: $\mU$ is sampled from a Gaussian distribution; (2) \textbf{Gaussian mix}: the leading basis is mixed with vectors sampled from a Gaussian matrix; (3) \textbf{full basis}: instead of sampling the $r-l$ basis in \cref{alg: subspace switching} solely from $m-r$ complement basis, they are sampled jointly from the entire basis excluding the top $l$ leading eigenvectors, i.e.~$[\mU,\mU_c]\backslash {\mU_{:,:l}}$. As shown in \Cref{fig: effect of switch strategy}, our strategy outperforms these alternatives, particularly the Gaussian-based approaches. One possible reason is that the orthogonality of the complement basis ensures a more effective switch, whereas randomly sampled Gaussian vectors may introduce overlaps between switches.

\paragraph{Compensation strategy}
The closest work to our compensation step is Fira \citep{chen2024fira}, which introduces a heuristic-based compensation term. To evaluate the effectiveness of our optimal compensation, we compare it against Fira and a no-compensation baselines by integrating these alternatives into \gls{alice}. Additionally, we introduce Fira+, our proposed modification that further enhances Fira’s performance. 
As shown in \Cref{fig: effect of compensation}, our optimal compensation achieves better performance and convergence than Fira-based compensations, with only a small additional sublinear memory cost (i.e., $n$). Compared to the no-compensation, all strategies yield noticeable performance improvements. \Cref{tab: effectiveness of each component in alice} summarizes the contributions of each component.

\begin{table}[]
\centering
\caption{Effectiveness of each components in \gls{alice} with 130M LLaMA.}
\label{tab: effectiveness of each component in alice}
\begin{tabular}{l|l}
\hline
      Components                       & Evaluation ppl. \\ \hline
No tracking, switch, compen. & 26.96           \\
Tracking                     & 27.35           \\
Tracking+Switch              & 25.11           \\
Tracking+Switch+Compen.      & 21.95           \\ \hline
\end{tabular}
\end{table}

\paragraph{Other ablations}
We also performed additional ablations, examining (1) the effect of the last layer and (2) the effect of \gls{ema} in \gls{ssgd}. For more details, see \cref{subapp: ablation results}.

\section{Conclusion}
\label{sec: conclusion}
In this paper, we take a step toward the systematic design of efficient optimizers for LLMs through structured \gls{fim} approximation. We first establish the connection between structural assumptions and optimizers by solving the structured \gls{fim} approximation. Building on this insight, we propose two design approaches:
(1) Selecting a structure that balances generality and practical efficiency, then solving the \gls{fim} approximation problem accordingly;  
(2) Using a general structure for \gls{fim} approximation, followed by our proposed low-rank framework to improve efficiency.  
Following these principles, we develop two memory-efficient optimizers, \gls{ssgd} and \gls{alice}, each corresponds to one of these design approaches. Experimental validation on LLaMA pre-training demonstrates their effectiveness.  

Our work lays the foundation for a more systematic approach to efficient optimizer design, opening up several promising directions for future research, including: developing low-rank counterparts for SOAP; exploring other possible classes of structures; and investigating approximation problems beyond \gls{fim}.  
By providing a structured perspective on optimizer design, we hope to inspire further advancements in scalable and efficient training methods for LLMs.


\clearpage
\newpage
\bibliography{example_paper}
\bibliographystyle{plainnat}

\newpage

\appendix
\onecolumn

\printglossaries

\section{Examples of diagonal operations}
\label{app: example diagonal operations}
In this section, we will give some detailed examples on each of the diagonal operations we introduced in \cref{sec: preliminary}. 
\paragraph{Example of $\diag(\cdot)$}
This will extract the diagonals of a input matrix into a vector:
\begin{align*}
    \mM = \left[\begin{array}{ccc}
       a_{11}  & a_{12}&a_{13}  \\
        a_{21} & a_{22} & a_{23}\\
        a_{31}&a_{32} & a_{33}
    \end{array}\right],\;\;\; \diag(\mM) = [a_{11}, a_{22}, a_{33}]^T
\end{align*}

\paragraph{Example of $\diagb(\cdot)$}
This simply stack the input matrices sequence into a larger block diagonal matrix:
\begin{align*}
    \diagb(\mM_1,\mM_2,\mM_3) = \left[\begin{array}{ccc}
        \mM_1 & \bm{0} & \bm{0}  \\
        \bm{0} & \mM_2 & \bm{0}\\
        \bm{0} & \bm{0} & \mM_3
    \end{array}\right]
\end{align*}

\paragraph{Example of $\diagv(\cdot)$}
This will stack the vector element into a pure diagonal matrix:
\begin{align*}
    \diagv([a_{11},a_{22}, a_{33}]^T) = \left[\begin{array}{ccc}
        a_{11} & 0 & 0  \\
         0&a_{22}&0\\
         0&0&a_{33}
    \end{array}\right]
\end{align*}

\paragraph{Example of $\diagm(\cdot)$}
This will stack the elements in input matrix to form a larger pure diagonal matrix in a column-wise manner.
\begin{align*}
    \diagm\left(\left[\begin{array}{cc}
        a_{11} & a_{12}  \\
        a_{21} &  a_{22}
    \end{array}\right]\right) = \left[\begin{array}{cccc}
        a_{11} & 0 & 0 &0  \\
        0 & a_{21} & 0 & 0\\
        0&0&a_{12} & 0\\
        0&0&0&a_{22}
    \end{array}\right]
\end{align*}

\section{Background}
\label{app: background}
In this section, we will provide a more comprehensive backgrounds.

\subsection{Fisher information}
\label{subapp: background fisher information}
Fisher information can also be viewed as the "sharpness" of the likelihood around the true parameters from the maximum likelihood view point. 
Formally, under the context of LLMs $f_\theta(\cdot)$, we consider a dataset $\{\vx_i\}_{i=1}^N$, where $N$ is total batched sentences, and $\vx_i$ is the token sequence of $i^{\text{th}}$ sentence. One can define sentence-level auto-regressive loss function $\mathcal{L} = \sum_{j=1}c(x_{i,j+1}, f_{\theta}(\vx_{i,:j})$, where $j$ is the token index and $c$ is the user-defined loss metric. The corresponding likelihood can be defined as $p_\theta(\vx_i)\propto \exp(-\sum_{j=1}c(x_{i,j+1},f_{\theta}(\vx_{i,:j})))$. 
The standard empirical \gls{fim} is defined as 
\begin{align*}
    \mF = \sum_{i=1}^N\nabla_\theta \log p_\theta(\vx_i)\nabla_\theta^T\log p_\theta(\vx_i)
\end{align*}
In practice, we often use the mini-batched gradient $\vecg = \frac{1}{N_B}\sum_{i=1}^{N_B} \log p_\theta(\vx_i)$ with batch size $N_B$ for empirical \gls{fim} during training. More discussion can be found in \cite{lin2024can}.
Throughout this paper, we adopt the notation $\E[\vecg\vecg^T]$ as $\mF$. 

One standard application of \Gls{fim} is efficient optimization. \Gls{ngd} leverage the inverse of \gls{fim} to smooth the local information geometry, leading to the steepest descent in the probability space with KL divergence metric. This typically leads to faster convergences compared to its parameter-space counterpart; and more stable optimization \citep{martens2020new}. The update of \gls{ngd} with step size $\lambda$ is
\begin{align*}
    \theta\leftarrow \theta-\lambda \mF^{-1}\nabla_{\theta}\mathcal{L}.
\end{align*}
However, square-root inverse is sometimes more favorable than inverse, and has been shown that it provides a better approximation to the geodesic flow, compared with the default natural gradient update \cite{yang2008principal}. Empirically, it demonstrates stronger performance and desired properties when using non-constant learning rate \citep{lin2024can, loshchilov2016sgdr, bergstra2012random,choi2019empirical}. 
The corresponding update is to simply replace $\mF^{-1}$ with $\mF^{-\frac{1}{2}}$:
\begin{align}
    \theta\leftarrow \theta-\lambda \mF^{-\frac{1}{2}}\nabla_{\theta}\mathcal{L}.
\end{align}
In this paper, we will use the square-root inverse view point, but our analysis is agnostic to it and can be easily extended to the direct inverse. However, matrix multiplication involving \gls{fim} and its inverse are computationally expensive for large models since $\mF\in\R^{mn\times mn}$ for vectorized parameters $\theta\in\R^{mn}$. Next, we will briefly introduce Kronecker product and block diagonals, which are two main classes of structural assumptions used in this paper. Their nice properties can significantly reduce the assocaited computation burden.

\subsection{Kronecker product and block diagonals}
\label{subapp: kronecker product and block diagonals}
Kronecker product, denoted by $\otimes$, is to combine two arbitrary matrices into a larger matrix with block-wise patterns. It has emerged as a powerful tool for approximating higher-order information like Hessian \citep{grosse2016kronecker} or \gls{fim}. 
The main advantages are their nice properties regarding matrix operations, leading to efficient practical procedures when dealing with large matrix. We will mainly use two properties:
\begin{align}
    (\mA\otimes \mB)^{-\frac{1}{2}} &= \mA^{-\frac{1}{2}}\otimes \mB^{-\frac{1}{2}} \label{eq: Kronecker product root inverse}\\
    (\mA\otimes \mB)\vect(\mC) &= \vect(\mB\mC\mA^T)\label{eq: Kronecker product vector matrix conversion}
\end{align}
where the first one holds when $\mA$,$\mB$ are square-root invertible. The second one is particular useful to reduce the computation associated with matrix-vector multiplication in \cref{eq: square root ngd}. For $\mA\otimes\mB \in \R^{mn\times mn}$, it reduces the computation from $O(m^2n^2)$ to $O(mn^2+m^2n)$ with $\mA\in\Rnn$, $\mB\in\Rmm$, and $\mC\in\Rmn$. 

For block diagonal matrix, one can also easily compute its square-root inverse:
\begin{align}
    \diagb(\mM_1,\ldots,\mM_n)^{-\frac{1}{2}} = \diagb(\mM_1^{-\frac{1}{2}},\ldots,\mM_n^{-\frac{1}{2}})
    \label{eq: block diagonal square root inverse}
\end{align}
where each $\mM_i$ is square-root invertible. When each $\mM_i$ is also a diagonal matrix with positive values, we have the following:
\begin{equation}
    \diagb(\mM_1,\ldots,\mM_n)^{-\frac{1}{2}}\vect(\mC) = \vect\left(\frac{\mC}{\sqrt{\devect(\vv)}}\right)
    \label{eq: block diagonal elementwise division}
\end{equation}
where $\vv$ is the vector containing the diagonals of $\diagb(\mM_1,\ldots,\mM_n)$, transforming matrix vector product into element-wise operation.  

\subsection{Operators for gradient: normalization and whitening}
\label{subapp: normalization and whitening}
Recently, there are some optimizers \citep{ma2024swan, jordan2024muon, you2017lars, you2019lamb} that apply operators to pre-process the gradient and use it in standard SGD. Empirical evidence has verified their effectiveness in training LLMs. In particular, there are two well-known operators: normalization and whitening, where the above optimizers relies on one or both of them. In particular,
\begin{align}
    \normalize(\mG) =& \frac{\mG}{\bm{1}\sqrt{\vs^T}}= \mG\mS^{-\frac{1}{2}}\\
    \whiten(\mG) =& (\mG\mG^T)^{-\frac{1}{2}}\mG.
\end{align}
where $\vs\in\R^n$, $\vs_i= \sum_{j=1}^m \mG^2_{ij}$ with $\mG\in\Rmn$, and $\mS=\diagv(\vs)$. Namely, vector $\vs$ contains the squared column norm of $\mG$, and $\mS$ is a scaling matrix to normalize the columns. Normalizing the rows can also be written in a similar format. 
$\whiten$ operator essentially orthogonalizes $\mG$, that compute the closest orthogonal matrix to $\mG$ under Frobenius norm. In practice, the whitening operator can be iteratively solved using Newton-Schulz algorithm without explicitly computing the square-root inverse. 

\subsection{Shampoo Optimizer}
\label{subapp: Shampoo optimizer}
Shampoo \citep{gupta2018shampoo} was originally proposed as the second-order optimization technique over the tensor space. Under the context of transformers, typically matrix parameters are considered. Its core design principle also aims to approximate the \gls{fim} with structural assumptions $\Ft_t=\mR_{n,t}^{\frac{1}{2}}\otimes \mL_{m,t}^{\frac{1}{2}}$, with $\mR_{n,t} = \mR_{n,t-1}+ \mG_t^T\mG_t$ and $\mL_{m,t} = \mL_{m,t-1}+ \mG_t\mG_t^T$. The above update rule can be seen as the moving average estimate of $\E[\mG_t^T\mG_t]$ and $\E[\mG_t\mG_t^T]$.
However, the Shampoo paper \citep{gupta2018shampoo} does not explicitly show why these design choices of $\mR_n$, $\mL_m$ approximate the \gls{fim}. In fact, they only show $\mR_n^{\frac{1}{2}}\otimes \mL_m^{\frac{1}{2}}$ forms an upper bound of \gls{fim}\footnote{Lemma 8 in \citep{gupta2018shampoo}. Note that our paper assumes $\vect(\cdot)$ is stacking columns of matrix whereas \cite{gupta2018shampoo} assumes stacking the rows, explaining the reverse order of presentation}. This is not helpful in understanding whether this approximates the \gls{fim} or not. Another very recent follow-up work \citep{morwani2024new} provides an explanation of the shampoo preconditioner in terms of approximating \gls{fim}. They show the square of Shampoo's preconditioner is equivalent to a single step of power iteration of computing optimal Kronecker product approximation to \gls{fim}. It indirectly establishes the connections to \gls{fim} approximation since the approximation is expressed as an iterative algorithm. To the best of our knowledge, our result is the first to directly establish the connection of Shampoo to \gls{fim} approximation as minimizing a upper bound of the loss with the Frobenius norm (\cref{thm: optimal shampoo}). 

Nevertheless, the original Shampoo algorithm is summarized in \cref{alg: Shampoo optimizer}. 

\begin{algorithm}
    \caption{Shampoo Optimizer}
    \label{alg: Shampoo optimizer}
    \begin{algorithmic}
        \STATE {\bfseries Input:} $\mL_m=\epsilon\mI_m$, $\mR_n=\epsilon\mI_n$, learning rate $\lambda$, optimization step $T$, loss function $\mathcal{L}$.
        \FOR{$t=1,\ldots, T$}
            \STATE $\mG_t=\nabla_{\mW_t}\mathcal{L}$
            \STATE $\mL_{m,t}=\mL_{m,t-1}+\mG_t\mG_t^T$
            \STATE $\mR_{n,t}=\mR_{n,t-1}+\mG_t^T\mG_t$
            \STATE $\mW_{t}=\mW_{t-1}+\lambda \mL_{m,t}^{-\frac{1}{4}}\mG_t\mR_{n,t}^{-\frac{1}{4}}$
        \ENDFOR
    \end{algorithmic}
\end{algorithm}

\subsection{SOAP/AdaDiag++}
\label{subapp: SOAP}
SOAP/AdaDiag++ \citep{vyas2024soap, anonymous2024improving} is a recently proposed adaptive optimizer aiming to improve the practical convergence and stability of Shampoo. Their main intuition behind (from the view point of \cite{vyas2024soap}) is that they show Shampoo is equivalent to performing Adafactor \citep{shazeer2018adafactor} under the Shampoo's eigen-space. Namely, the eigen-matrix of $\mL_{m,t}$ and $\mR_{n,t}$ in \cref{alg: Shampoo optimizer}. Since Adafactor is an approximation to Adam, they propose to use Adam instead of Adafactor in Shampoo's eigen-space, to further improve the performance. They propose the following update rule: 
\begin{align}
    &\vm_{t} = \beta_1\vm_{t-1}+(1-\beta_1)\mG_t\;\;\; (\text{first moment})\nonumber\\
    &\mL_{m,t} = \beta_3\mL_{m,t-1}+(1-\beta_3)\mG_t\mG_t^T\;\;\;(\text{Shampoo's }\mL_{m,t})\nonumber\\
    & \mR_{n,t} = \beta_3\mR_{n,t-1} + (1-\beta_3) \mG_t^T\mG_t\;\;\;(\text{Shampoo's }\mR_{n,t})\nonumber\\
    & \mU_{L,t} = \eig(\mL_{m,t})\nonumber \;\;\;(\text{Shampoo's left eigen-space})\\
    & \mU_{R,t} = \eig(\mR_{n,t})\nonumber \;\;\; (\text{Shampoo's right eigen-space})\\
    & \vv_t = \beta_2\vv_{t-1}+(1-\beta_2)(\mU_{L,t}^T\mG_t\mU_{R,t})\elesquare\;\;\;(\text{second moment})\nonumber\\
    &\Delta = \mU_{L,t}\frac{\mU_{L,t}^T\vm_{t}\mU_{R,t}}{\sqrt{\vv_t}}\mU_{R,t}^T
\label{eq: practical soap updates}    
\end{align}
These update rules exactly describes the procedure to applying Adam updates in the "rotated" space defined by $\mU_{L,t}$ and $\mU_{R,t}$. Due to the computational burden associated with $\eig$, SOAP proposed to only update $\mU_{L,t}$, $\mU_{R,t}$ at certain intervals. This leads to the following algorithm:
\begin{algorithm}
    \caption{SOAP optimizer}
    \label{alg: SOAP optimizer}
    \begin{algorithmic}
        \STATE {\bfseries Input:} learning rate $\lambda$, update interval $K$, $\beta_1$, $\beta_2$, $\beta_3$, optimization step $T$.
        \STATE $\vm_0=0$; $\vv_0=0$ \COMMENT{Initialize two moments}
        \STATE $\mL_{m,0}=0$; $\mR_{n,0}=0$ \COMMENT{Initialize two \gls{ema} states for $\mG\mG^T$, $\mG^T\mG$}
        \FOR{$t=1,\ldots, T$}
            \STATE $\mG_t=\nabla_{\mW_t}\mathcal{L}$
            \STATE $\vm_t = \beta_1\vm_{t-1}+(1-\beta_1)\mG_t$

            \STATE $\mL_{m,t}=\beta_3\mL_{m,t-1}+(1-\beta_3)\mG_t\mG_t^T$ \COMMENT{Accumulation for $\mG\mG^T$}
            \STATE $\mR_{n,t}=\beta_3\mR_{n,t-1}+(1-\beta_3)\mG_t^T\mG_t$ \COMMENT{Accumulation for $\mG^T\mG$}
            \IF{$t==1$ or $(t\mod K)==0$}
                \STATE $\mU_{R,t}=\eig(\mR_{n,t})$ \COMMENT{Get right eigen-space $\mU_R$}
                \STATE $\mU_{L,t}=\eig(\mL_{m,t})$ \COMMENT{Get left eigen-space $\mU_L$}
            \ELSE
                \STATE $\mU_{R,t}=\mU_{R,t-1}$
                \STATE $\mU_{L,t}= \mU_{L,t-1}$
            \ENDIF
            \STATE $\widetilde{\vm}_t = \mU_{L,t}^T\vm_t\mU_{R,t}$ \COMMENT{Get rotated 1st moment}
            \STATE $\vv_t = \beta_2\vv_{t-1}+(1-\beta_2)(\mU_{L,t}^T\mG_t\mU_{R,t})\elesquare$ \COMMENT{Compute second moments}
            \STATE $\mW_{t+1}=\mW_{t} -\lambda \mU_{L,t}\frac{\widetilde{\vm}_t}{\sqrt{\vv_t}}\mU_{R,t}^T$
        \ENDFOR
    \end{algorithmic}
\end{algorithm}

\subsection{AdaDiag and one-side SOAP}
\label{subapp: AdaDiag}
AdaDiag++ \citep{anonymous2024improving}, a concurrent work to SOAP, independently develops the equivalent update rules as SOAP. The only difference is that they disable the \gls{ema} tracking for $\mL_{m,t}$ and $\mR_{n,t}$. The resulting optimizer is both computational and memory expensive due to the storage of $\mU_{R}$, $\mU_L$ and two eigenvalue decompositions. To address this issue, they both propose a one-side version called AdaDiag and one-side SOAP by only considering either left or right eigen-space. The resulting update rule is exactly the same as our proposed \gls{alicec} (i.e.~\cref{eq: practical alicec equations}). 

However, they propose this design choice purely based on intuition to reduce computation and memory consumption, and do not explicitly reveal the connections to their two-sided version. Thus, it lacks the understanding on why two-sided version obtains empirically better performance. 
Based on \cref{subsec: alicec} and \cref{subsec: new opt combination},  we show that although one-sided version has similar updates as the two-sided twins, they are different optimizers with distinct underlying structural assumptions. In fact, the structures of SOAP/AdaDiag++ strictly generalizes their one-sided version, explaining the better empirical performance. 
The resulting algorithm is the following:
\begin{algorithm}
    \caption{\gls{alicec}/AdaDiag/one-side SOAP optimizer}
    \label{alg: alicec optimizer}
    \begin{algorithmic}
        \STATE {\bfseries Input:} learning rate $\lambda$, update interval $K$, $\beta_1$, $\beta_2$, $\beta_3$, optimization step $T$.
        \STATE $\vm_0 = 0$; $\vv_0=0$ \COMMENT{Initialize two moments}
        \STATE $\mQ_0=0$ \COMMENT{Initialize the \gls{ema} state for $\mG\mG^T$}
        \FOR{$t=1,\ldots,T$}
            \STATE $\mG_t=\nabla_{\mW_t}\mathcal{L}$
            \STATE $\mQ_t = \beta_3 \mQ_{t-1} + (1-\beta_3)\mG_t\mG_t^T$ \COMMENT{Accumulate the $\mG\mG^T$}
            \STATE $\vm_t = \beta_1\vm_{t-1}+(1-\beta_1)\mG_t$ \COMMENT{Accumulate the first moment}
            \IF{$t==1$ or $(t \mod K)==0$}
                \STATE $\mU_{f,t} = \eig(\mQ_t)$ \COMMENT{Obtain the $\mU$}
            \ELSE
                \STATE $\mU_{f,t} = \mU_{f,t-1}$
            \ENDIF
            \STATE $\tilde{\vm}_t = \mU_{f,t}^T\vm_t$ \COMMENT{Rotate the first moment}
            \STATE $\vv_t = \beta_2\vv_{t-1}+(1-\beta_2)(\mU_{f,t}^T\mG_t)\elesquare$ \COMMENT{Accumulate the second moments}
            \STATE $\mW_{t+1}=\mW_t -\lambda \mU_{f,t}\frac{\tilde{\vm}_t}{\sqrt{\vv_t}}$\COMMENT{Update in the original space}
        \ENDFOR
    \end{algorithmic}
\end{algorithm}

\subsection{SWAN}
\label{subapp: SWAN optimizer}
Recently, there is a newly proposed adaptive optimizer that completely removes the needs of storing internal states, called SWAN. It relies on two processing operators applied to raw current gradient: \emph{GradNorm} and \emph{GradWhitening}, as a replacement of first and second moments. For a current gradient $\mG$,
\begin{align}
    &\mathtt{GradNorm}(\mG) = \frac{ \mG - \bar{\vecg} \mathbf{1}_{n}^\top }{ \vs \mathbf{1}_{n}^\top  } \\
    & \mathtt{GradWhitening}(\mG) = (\mG\mG^T)^{-\frac{1}{2}}\mG
    \label{eq: gradnorm and whitening}
\end{align}
where $ \bar{\vecg} = \frac{1}{n} \sum_{i=1}^{n} \mG_{:, i} $ is the mean across rows; $ \vs = \sqrt{ \frac{1}{n} \sum_{i=1}^{n} (\mG_{:, i} - \bar{\vecg})^2} $ is the standard deviation across rows; $\bar{\vecg}$ and $\vs$ are ${m}$-dimensional column vectors; and $ \mathbf{1}_{n} $ is a ${n}$-dimensional column vector of ones. 
Then, SWAN performs the following to generate the update:
\begin{align}
    \tilde{\mG} = \mathtt{GradNorm}(\mG)\nonumber\\
    \Delta = \mathtt{GradWhitening}(\tilde{\mG})
    \label{eq: SWAN update}
\end{align}
We can see that the proposed $\mathtt{GradNorm}$ is equivalent to normalization up  to a scaling $\sqrt{n}$ when the mean $ \bar{\vecg}$ is $0$.
SWAN derives these two steps from investigating the LLM dynamics. In practice, SWAN proposes to compute the $(\mG\mG^T)^{-\frac{1}{2}}$ using Newton-Schulz iterations.

\subsection{Newton Schulz iteration}
\label{subapp: Newton schulz iteration}
In many machine learning applications, like successive whitening and coloring transform \citep{li2017universal,cho2019image,choi2021robustnet}, one often encountered the computation of square root inverse of some \gls{spd} matrix. One standard approach is to compute the \gls{evd} and take the square root inverse of the eigenvalue matrix. However, \gls{evd} is computationally expensive. Another alternative approach is use Newton-Schulz iteration (NS), an iterative updates of two matrix. Specifically, 
\begin{align*}
    \mY_0 &= \frac{\mA}{\Vert\mA\Vert_F}\;\;\;\ \mZ_0=\mI\\
    \mY_{t+1} &= \frac{1}{2}\mY_t(3\mI-\mZ_t\mY_t)\\
    \mZ_{t+1} &= \frac{1}{2}(3\mI-\mZ_t\mY_t)\mZ_t
\end{align*}
with convergence $\mY_t \rightarrow \frac{\mA^{\frac{1}{2}}}{\sqrt{\Vert\mA\Vert_F}}$ and $\mZ_t\rightarrow \mA^{-\frac{1}{2}}\sqrt{\Vert\mA\Vert_F}$. Typically, NS converges very fast with only 5 steps \citep{li2018towards,huang2019iterative}. 

\subsection{Muon}
\label{subapp: Muon}
Muon \citep{jordan2024muon}, is recently proposed to speed-up the training of LLMs, that relies on the whitening operator similar to SWAN. The core of the Muon is to orthogonalize the the momentum. The proposed update rule is 
\begin{align}
    \vm_t =& \beta_1 \vm_{t-1}+(1-\beta_1)\mG_t\nonumber\\
    \Delta =& \mathtt{GradWhitening}(\vm_t).
    \label{eq: Muon update}
\end{align}
Similarly, the $\mathtt{GradWhitening}$ step is computed using Newton-Schulz iteration. The main difference between Muon and SWAN is that Muon still requires the storage of first moments as state, whereas SWAN relies on the $\mathtt{GradNorm}$ operator applied to the raw gradient.  

\subsection{Lars}
\label{subapp: Lars}
SWAN and Muon both involve the whitening operator with/without normalization, respectively. On the other hand, Lars \citep{you2017lars} is an operator that only relies on layer-wise normalization. For each layer, it simply compute the first moments, followed by a normalization operation. The update rule for each layer is 
\begin{align}
    \vm_t = \beta_1\vm_{t-1} + (1-\beta_1)\mG_t\nonumber\\
    \Delta = \phi(\Vert\theta\Vert)\frac{\vm_t}{\Vert\vm_t\Vert}
    \label{eq: Lars update}
\end{align}
where $\phi$ is a scaling function with input of the parameter $\theta$ norm. One major difference of this layer-wise normalization to SWAN is that it is applied on the layer-wise level, whereas SWAN applies row/column-wise normalization.

\subsection{Low rank optimizers}
\label{subapp: Low rank optimizers}
The primary goal of low-rank optimizer is to reduce the memory consumed by the states of adaptive optimizers. The popularity of low-rank based method for large models starts from the well-known LoRA \citep{hu2021lora}, where each weight is inserted with an low-rank adapter $\mW+\mA\mB$ with $\mA \in \Rmr$ and $\mB\in \Rrn$ during the finetuning stage. This formulation directly modifies the model architecture. \cite{si2024flora} explicitly show that LoRA is secretly a gradient compressor, which translates the modification to model architecture into a low-rank optimizer with randomly sampled matrix. At the same time, GaLore \citep{zhao2024galore} popularizes the use of low-rank optimizers, which demonstrates on-par performance compared to full-rank Adam training. GaLore is proposed based on the analysis of reversible networks \citep{tian2020understanding}. However, in practice, transformer may not satisfy the reversibility condition. Thus, GaLore does not provide a clear understanding on why it works on LLMs. 

\Cref{alg: GaLore optimizer} summarizes the procedures. In practice, GaLore can be viewed as the composition of subspace search algorithm (i.e.~SVD) with standard Adam optimizer. The original GaLore does not provide an explanation on the choice of Adam. On the other hand, our analysis reveals that the GaLore is an approximate low-rank extension to a different optimizer, \gls{alicec}/AdaDiag/one-side SOAP, that is generalizes Adam (see \cref{subsec: alicec}). 

\begin{algorithm}
    \caption{GaLore Optimizer}
    \label{alg: GaLore optimizer}
    \begin{algorithmic}
        \STATE \bfseries{Input:} learning rate $\lambda$, decay rates $\beta_1$, $\beta_2$, rank $r$, update interval $k$, scale $\alpha$
        \FOR{$t=1,\ldots, T$}
        \STATE $\mG_t=\nabla_{\mW_t}\mathcal{L}$
        \IF{$t==1$ or $t\mod k==0$}
            \STATE $\mU_t=\svd(\mG_t, r)$
        \ELSE
            \STATE $\mU_t=\mU_{t-1}$
        \ENDIF
        \STATE $\bm{\sigma}_t=\mU_t^T\mG_t$
        \STATE $\Delta = Adam(\bm{\sigma}_t, \beta_1,\beta_2)$
        \STATE $\mW_t = \mW_{t-1} + \lambda\alpha\Delta$
        \ENDFOR
    \end{algorithmic}
\end{algorithm}

Since the states of Adam optimizer are based on the projected gradient $\bm{\sigma}_t$, the overall memory consumption of GaLore is $mn+2nr+mr$. 

\subsection{Apollo}
\label{subapp: Apollo}
Concurrent to this work, there is a recently proposed optimizer, called Apollo \citep{zhu2024apollo}, that only scales the raw gradient and obtains SGD-like memory with Adam-level performance. They propose to scale the columns or rows similar to normalization in SWAN, but the scaling factor is estimated following the procedure proposed by Fira \citep{chen2024fira}. The core idea of Apollo is to obtain $\Delta$ from GaLore algorithm (\cref{alg: GaLore optimizer}), followed by computing the column/row-wise norm of $\Delta$. This norm will be used as the scaling factor for the raw gradient $\mG$. Apollo has many variants. In particular, we choose Apollo-mini and Apollo-svd, where the former uses rank-1 random projection for scaling estimation, and the latter relies on the use of top $r$ singular vectors as the projection, same as GaLore.
Apollo-mini only maintains the rank-1 states, leading to significant memory savings. The memory consumption is $mn+2n+2$ for parameter $\mW\in\Rmn$. And Apollo-svd consumes the same memory as GaLore. \cref{alg: Apollo optimizer} summarizes the procedures.

\begin{algorithm}
    \caption{Apollo Optimizer}
    \label{alg: Apollo optimizer}
    \begin{algorithmic}
        \STATE \bfseries{Input:} learning rate $\lambda$, decay rates $\beta_1$, $\beta_2$, rank $r$, update interval $k$, scale $\alpha$
        \FOR{$t=1,\ldots, T$}
        \STATE $\mG_t=\nabla_{\mW_t}\mathcal{L}$
        \IF{$t==1$ or $t\mod k==0$}
            \STATE $\mU_t\sim \mathcal{N}(0,\frac{1}{r})$
            \STATE seed $\leftarrow$ an independent new random seed
        \ELSE
            \STATE $\mU_t=\mU_{t-1}$
        \ENDIF
        \STATE $\bm{\sigma}_t=\mU_t^T\mG_t$
        \STATE $\Delta_t = Adam(\bm{\sigma}_t, \beta_1,\beta_2)$
        \STATE $\mS_t \leftarrow \diagv(s_0, \ldots, s_m)\;\{s_i=\frac{\Vert\Delta_{t,:,i}\Vert_2}{\Vert\bm{\sigma_{t,:,i}}\Vert_2}\}$ 
        \STATE $\mW_t = \mW_{t-1} + \lambda\alpha\mG_t\mS_t$
        \ENDFOR
    \end{algorithmic}
\end{algorithm}
Note that when rank $r$ is set to $1$, they propose to use global scaling $\frac{\Vert\Delta_t\Vert_2}{\Vert\bm{\sigma_t}\Vert_2}$ instead of row/column-wise scaling.

\subsection{Subspace iteration}
\label{subapp: background subspace iteration}
The subspace iteration method—also known as the block power method is a classical iterative technique for computing the dominant eigenvalues and corresponding eigenvectors of a matrix. It generalizes the basic power method from operating on a single vector to operating on a subspace, typically spanned by a initial matrix. When the initial matrix is closed to the targeting eigen-matrix, the convergence is fast. Empirically, we found that only 1 step of iteration is enough to give a satisfactory performance. \Cref{alg: subspace iteration} summarizes the subspace iteration algorithm to for finding the top $r$ eigenvectors of a matrix $\mA$. We can see that the computation is bottlenecked by the matrix multiplication $\mA\mU_{t-1}$ which is $O(m^2r)$ if only performing 1 step. 

\begin{algorithm}
    \caption{Subspace iteration}
    \label{alg: subspace iteration}
    \begin{algorithmic}
        \STATE \bfseries{Input:} symmetric matrix $\mA\in\Rmm$, iteration step $T$, initial matrix $\mM\in\Rmr$
        \STATE $\mU_0=\mM$
        \FOR{$t=1,\ldots, T$}
        \STATE $\mH_t = \mA\mU_{t-1}$
        \STATE $\mU_t=\text{QR decomposition}(\mH_t)$
        \ENDFOR
        \STATE $\mV = \mU_T^T\mA\mU_T$
        \STATE $\mU = \eig(\mV)$
        \STATE {\bfseries Return} $\mU$
    \end{algorithmic}
\end{algorithm}

\section{Derivation of update formula}
\label{app: derivation of update formula}
In this section, we will explicitly show how to connect the solution from minimizing reconstruction loss of \gls{fim} (\cref{eq: UFE equation}) to corresponding update rule. 

\subsection{Shampoo's update formula}
\label{subapp: Shampoo update formula}
The key update formula of Shampoo is 
\begin{align*}
    \mW_t = \mW_{t-1} + \lambda \mL_{m,t}^{-\frac{1}{4}}\mG_t\mR_{n,t}^{-\frac{1}{4}}
\end{align*}
\begin{proof}
    From \cref{thm: optimal shampoo}, we simply apply the properties of Kronecker product to square-root version of natural gradient descent:
    \begin{align*}
        &\devect\left(\Ft^{-\frac{1}{2}}\vecg\right)\\
        =&\devect\left((\mR_{n}^{\frac{1}{2}}\otimes \mL_{m}^{\frac{1}{2}})^{-\frac{1}{2}}\vecg\right)\\
        =& \devect\left(\vect\left(\mL_{m}^{-\frac{1}{4}}\mG\mR_{n}^{-\frac{1}{4}}\right)\right)\\
        =& \mL_{m}^{-\frac{1}{4}}\mG\mR_{n}^{-\frac{1}{4}}
    \end{align*}
\end{proof}

\subsection{Generalization to whitening and normalization}
\label{subapp: update of generalization of whitening}
The square-root \gls{ngd} update with $\Ft$ in \cref{eq: generalization whitening} in \cref{coro: generalization to whitening and normalization} is 
\begin{equation}
    \devect\left(\Ft^{-\frac{1}{2}}\vecg\right) = \sqrt{n}\E[\mG\mG^T]^{-\frac{1}{2}}\mG
\end{equation}
\begin{proof}
    From the solution in \cref{eq: generalization whitening}, we can simply apply the properties of Kronecker product as in the derivation of Shampoo's update:
    \begin{align*}
        &\devect\left(\Ft^{-\frac{1}{2}}\vecg\right)\\
        =&\devect\left((\mI_n\otimes \mM)^{-\frac{1}{2}}\vecg\right)\\
        =&\devect\left(\vect\left(\sqrt{n}\mM^{-\frac{1}{2}}\mG\right)\right)\\
        =& \sqrt{n}\E[\mG\mG^T]^{-\frac{1}{2}}\mG
    \end{align*}
\end{proof}
Similarly, the square-root \gls{ngd} update with $\Ft=\mS\otimes \mI_m$ is 
\begin{equation}
    \devect\left(\Ft^{-\frac{1}{2}}\vecg\right) = \sqrt{m}\mG\mS^{-\frac{1}{2}}
\end{equation}
\begin{proof}
    This is trivial by applying the property of Kronecker product:
    \begin{align*}
        &\devect((\mS\otimes\mI_m)^{-\frac{1}{2}}\vecg)\\
    =& \devect(\vect(\mG\mS^{-\frac{1}{2}}))\\
    =&\mG\mS^{-\frac{1}{2}}
    \end{align*}
\end{proof}
\subsection{Update formula for \gls{alicec}}
\label{subapp: update of generlized adam}
\begin{proof}
    From the \cref{thm: alicec 1 step refinement}, we can apply the properties of block diagonal and Kronecker product with a full-rank $\mU$:
    \begin{align*}
        &\devect(\Ft^{-\frac{1}{2}}\vecg)\\
        =& \devect\left(\diagb\left(\mM_1^{-\frac{1}{}2},\ldots, \mM_n^{-\frac{1}{2}}\right)\vecg\right)\\
        =&\devect\left(\diagb\left(\mU\mD_1^{-\frac{1}{2}}\mU^T,\ldots,\mU\mD_{n}^{-\frac{1}{2}\mU^T}\right)\vecg\right)\\
        =&\devect\left((\mI_n\otimes \mU)\diagb(\sqrt{\mD_1},\ldots,\sqrt{\mD_n})(\mI\otimes \mU^T)\vecg\right)\\
        =&\devect\left((\mI_n\otimes \mU)\diagb(\sqrt{\mD_1},\ldots,\sqrt{\mD_n})\vect\left(\mU^T\mG\right)\right)\\
        =&\devect\left((\mI_n\otimes \mU)\vect\left(\frac{\mU^T\mG}{\sqrt{\E[(\mU^T\mG)\elesquare]}}\right)\right)\\
        =&\devect\left(\vect\left(\mU\frac{\mU^T\mG}{\sqrt{\E[(\mU^T\mG)\elesquare]}}\right)\right)\\
        =& \mU\frac{\mU^T\mG}{\sqrt{\E[(\mU^T\mG)\elesquare]}}
    \end{align*}
\end{proof}

\subsection{Update formula for SOAP}
\label{subapp: update formula for SOAP}
Based on the \cref{thm: optimal asham}, we can derive the update formula of the corresponding square-root \gls{ngd} following the same procedure as \gls{alicec}:
\begin{align*}
    \devect\left(\Ft^{-\frac{1}{2}}\vecg\right) = \mU_L\frac{\mU_L^T\mG\mU_R}{\sqrt{\E[(\mU_L^T\mG\mU_R)\elesquare]}}\mU_R^T.
\end{align*}
\begin{proof}
    \begin{align*}
        &\devect\left(\Ft^{-\frac{1}{2}}\vecg\right)\\
        =&\devect\left((\mU_R\otimes\mU_L)\diagm(\E[(\mU_L^T\mG\mU_R)\elesquare])^{-\frac{1}{2}}(\mU_R\otimes\mU_L)^T\vecg\right)\\
        =&\devect\left((\mU_R\otimes\mU_L)\diagm(\E[(\mU_L^T\mG\mU_R)\elesquare])^{-\frac{1}{2}}\vect\left(\mU_L^T\mG\mU_R\right)\right)\\
        =&\devect\left((\mU_R\otimes\mU_L)\vect\left(\frac{\mU_L^T\mG\mU_R}{\sqrt{\E[(\mU_L^T\mG\mU_R)\elesquare]}}\right)\right)\\
        =&\devect\left(\vect\left(\mU_L\frac{\mU_L^T\mG\mU_R}{\sqrt{\E[(\mU_L^T\mG\mU_R)\elesquare]}}\mU_R^T\right)\right)\\
        =&\mU_L\frac{\mU_L^T\mG\mU_R}{\sqrt{\E[(\mU_L^T\mG\mU_R)\elesquare]}}\mU_R^T
    \end{align*}
\end{proof}
Therefore, one can design the optimizer based on this update formula and exactly recovers the SOAP's procedure (\cref{eq: practical soap updates} in \cref{subapp: SOAP}).

\section{Theory and proof}
\label{app: theory and proof}
To prove the results, we need to first introduce some useful lemmas and inequalities.

\begin{lemma}
    Assume $\Ft$ is a block diagonal matrix with $n$ squared block matrix $\mM_i\in\Rmm$, then 
    \begin{equation}
        \min_{\Ft} \Fnorm{\Ft-\mF} = \min_{\{\mM_i\}_{i=1}^n} \sum_{i=1}^n \Fnorm{\mM_i} - 2\tr(\mM_i^T\E[\vg_i\vg_i^T]) + C
    \end{equation}
where $\vg_i$ is the $i^{\text{th}}$ column of gradient $\mG$, $C$ is a constant that is idenpendent of $\Ft$, and $\mF$ is the \gls{fim}. 
\label{lemma: block diagonal simplification}
\end{lemma}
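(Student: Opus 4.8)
The plan is to reduce the Frobenius norm of a block diagonal matrix minus $\mF$ to a sum over the diagonal blocks, using the structure of $\mF$ as an expectation of an outer product. First I would write $\Ft = \diagb(\mM_1,\dots,\mM_n)$ and expand $\Fnorm{\Ft-\mF} = \Fnorm{\Ft} - 2\tr(\Ft^T\mF) + \Fnorm{\mF}$. Here $\Fnorm{\mF}$ is a constant independent of $\Ft$, so it goes into $C$. For the first term, since $\Ft$ is block diagonal, $\Fnorm{\Ft} = \sum_{i=1}^n \Fnorm{\mM_i}$ directly from the definition of the Frobenius norm (off-diagonal blocks are zero and contribute nothing).

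The key step is the cross term $\tr(\Ft^T\mF)$. I would use $\mF = \E[\vecg\vecg^T]$ and the fact that $\vecg = \vect(\mG)$ stacks the columns $\vg_1,\dots,\vg_n$ of $\mG$. Written in block form, $\mF$ is an $n\times n$ grid of $m\times m$ blocks whose $(i,j)$ block is $\E[\vg_i\vg_j^T]$. Since $\Ft$ is block diagonal, $\tr(\Ft^T\mF) = \sum_{i=1}^n \tr(\mM_i^T [\mF]_{ii}) = \sum_{i=1}^n \tr(\mM_i^T \E[\vg_i\vg_i^T])$, because only the diagonal blocks of $\mF$ are picked out by the trace against a block diagonal matrix (the linearity of trace and expectation lets me pull $\E$ outside). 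Combining these gives
\begin{equation*}
    \Fnorm{\Ft-\mF} = \sum_{i=1}^n \left( \Fnorm{\mM_i} - 2\tr(\mM_i^T\E[\vg_i\vg_i^T]) \right) + C,
\end{equation*}
and then minimizing over $\Ft$ in the block diagonal family is the same as minimizing over the free blocks $\{\mM_i\}$, which is exactly the claimed identity.

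The main obstacle — really the only nontrivial bookkeeping — is keeping the block indexing straight: confirming that the $(i,i)$ block of $\E[\vecg\vecg^T]$ is indeed $\E[\vg_i\vg_i^T]$ under the column-stacking convention for $\vect$, and that $\tr$ of a product of two block matrices where one is block diagonal reduces to the sum of traces of the paired diagonal blocks. Both are standard, so I would state them briefly rather than belabor them. One should also note the family of block diagonal $\Ft$ in the lemma statement is unconstrained on each $\mM_i$ (unlike later sections where $\mM_i$ is further structured), so the passage from $\min_{\Ft}$ to $\min_{\{\mM_i\}}$ is immediate with no feasibility subtlety.
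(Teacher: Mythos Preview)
Your proposal is correct and follows essentially the same approach as the paper: expand $\Fnorm{\Ft-\mF}$ into $\Fnorm{\Ft}-2\tr(\Ft^T\mF)+C$, then use the block-diagonal structure of $\Ft$ together with the identification of the $(i,i)$ block of $\mF=\E[\vecg\vecg^T]$ as $\E[\vg_i\vg_i^T]$ to reduce both terms to a sum over $i$. The paper's proof is slightly terser (it writes the trace as a sum over rows/columns before invoking the block structure), but the substance is identical.
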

\begin{proof}
    This is straightforward by expanding the \gls{f-norm}. 
    \begin{align*}
        &\Fnorm{\Ft-\mF}\\
        =& \tr\left((\Ft-\mF)^T(\Ft-\mF)\right)\\
        =&\Fnorm{\Ft} - 2\tr\left(\Ft^T\mF\right) + C\\
        =& \sum_{l=1}^{mn} \Ft_{l,:}^T\Ft_{:,l} - \Ft_{l,:}^T \mF_{:,l} + C\\
        =& \sum_{i=1}^n \Fnorm{\mM_i}-2\tr\left(\mM_i^T\E[\vg_i\vg_i^T]\right) + C
    \end{align*}
where $\Ft_{l,:}\in \mathbb{R}^{mn}$ indicates the $l^{\text{th}}$ row vector of $\Ft$ and $\Ft_{:,l}$ is the $l^{\text{th}}$ column vector. The last equation is obtained by the fact that $\Ft$ is a block diagonal matrix. So only the values of $\mF$ at the position of non-zero values $\Ft$ contributes to the trace, which is exactly the outer product: $\E[\vg_i\vg_i^T]$. 
\end{proof}

\begin{lemma}[Powers-Stormer inequality]
    For positive semi-definite operator $\mA$, $\mB$, we have the following inequality
    \begin{equation}
        \tr((\mA-\mB)^T(\mA-\mB)) \leq \Vert\mA^2 -\mB^2\Vert_1
        \label{eq: powers stormer inequality}
    \end{equation}
    \label{lemma: powers stormer inequality}
    where $\Vert\cdot\Vert_1$ is the trace norm.
\end{lemma}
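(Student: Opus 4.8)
The plan is to prove this (the classical Powers--Stormer inequality) by the projection/Jordan-decomposition argument. Since $\mA,\mB$ are positive semi-definite, hence symmetric, we have $(\mA-\mB)^T(\mA-\mB)=(\mA-\mB)^2$, so it suffices to bound $\tr((\mA-\mB)^2)$. First I would set $\mD=\mA-\mB$ and take its Jordan decomposition $\mD=\mD_+-\mD_-$, with $\mD_+,\mD_-$ positive semi-definite and $\mD_+\mD_-=0$; let $\mP$ be the orthogonal projection onto the range of $\mD_+$ and $\mQ=\mI-\mP$. Then $\mD_+=\mP\mD\mP$, $\mD_-=-\mQ\mD\mQ$, $|\mD|=\mD_++\mD_-$, $\mD^2=\mD_+^2+\mD_-^2$, and $\mP,\mQ$ commute with $\mD$, so $(\mP-\mQ)\mD=\mD(\mP-\mQ)=|\mD|$.

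The first step is to lower-bound the trace norm by a single well-chosen test operator: since $\mP-\mQ$ is a symmetric contraction ($-\mI\preceq\mP-\mQ\preceq\mI$), duality between the trace norm and the operator norm gives $\Vert\mA^2-\mB^2\Vert_1\ge\tr\big((\mP-\mQ)(\mA^2-\mB^2)\big)$. The second step is to simplify the right-hand side using the identity $\mA^2-\mB^2=\tfrac12\big((\mA+\mB)\mD+\mD(\mA+\mB)\big)$ (verified by expanding $\mD=\mA-\mB$): combining it with cyclicity of the trace and $(\mP-\mQ)\mD=\mD(\mP-\mQ)=|\mD|$ collapses the bound to $\tr\big((\mP-\mQ)(\mA^2-\mB^2)\big)=\tr\big(|\mD|(\mA+\mB)\big)$.

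It then remains to show $\tr(|\mD|(\mA+\mB))\ge\tr(\mD^2)$. Expanding $|\mD|=\mD_++\mD_-$ and $\mA=\mB+\mD$ and cancelling the common $\tr(\mD_+^2)$ term, this reduces to the two estimates $\tr(\mD_+\mB)\ge 0$ and $\tr(\mD_-\mB)\ge\tr(\mD_-^2)$. The first is immediate since $\mD_+,\mB$ are positive semi-definite. For the second I would note $\mD_-=-\mQ\mD\mQ=\mQ\mB\mQ-\mQ\mA\mQ\preceq\mQ\mB\mQ$ (discarding $\mQ\mA\mQ\succeq 0$), so $0\preceq\mD_-\preceq\mQ\mB\mQ$; then $\tr\!\big(\mD_-^{1/2}(\mQ\mB\mQ-\mD_-)\mD_-^{1/2}\big)\ge 0$ gives $\tr(\mD_-\,\mQ\mB\mQ)\ge\tr(\mD_-^2)$, while $\tr(\mD_-\,\mQ\mB\mQ)=\tr(\mD_-\mB)$ because $\mQ\mD_-\mQ=\mD_-$. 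Chaining the three steps yields $\Vert\mA^2-\mB^2\Vert_1\ge\tr(\mD^2)=\tr((\mA-\mB)^2)$.

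The \textbf{main obstacle} is the final step: it is tempting to try to prove the operator inequality $\mA+\mB\succeq|\mA-\mB|$ and conclude in one line, but that inequality is \emph{false} in general (a $2\times2$ counterexample already exists), so the comparison must be carried out only after pairing against $|\mD|$ and, for the $\mD_-$-contribution, only after compressing to the range of $\mQ$ where the $\mA$-part can be dropped. Getting the support/projection bookkeeping right, and invoking the trace-norm duality bound correctly, is the delicate part; the algebraic identity and the remaining estimates are routine.
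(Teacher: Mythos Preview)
The paper does not supply its own proof of this lemma: it is stated as the classical Powers--St{\o}rmer inequality and used as a black box in the proof of \cref{thm: optimal shampoo}. So there is no paper proof to compare against.

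Your argument is correct and is essentially the standard proof of the inequality. The chain of reductions is clean: duality against the sign contraction $\mP-\mQ$ to lower-bound $\Vert\mA^2-\mB^2\Vert_1$, the anticommutator identity $\mA^2-\mB^2=\tfrac12\big((\mA+\mB)\mD+\mD(\mA+\mB)\big)$ to collapse the bound to $\tr(|\mD|(\mA+\mB))$, and then the split into the $\mD_+$- and $\mD_-$-contributions. The crucial observation you flag---that one cannot use the operator inequality $\mA+\mB\succeq|\mA-\mB|$ because it is false, and must instead compress to the range of $\mQ$ before discarding $\mQ\mA\mQ$---is exactly the point where a naive attempt would break, and you handle it correctly via $\mD_-\preceq\mQ\mB\mQ$ followed by $\tr\big(\mD_-^{1/2}(\mQ\mB\mQ-\mD_-)\mD_-^{1/2}\big)\ge0$.

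One minor expository remark: when you write ``cancelling the common $\tr(\mD_+^2)$ term, this reduces to the two estimates $\tr(\mD_+\mB)\ge0$ and $\tr(\mD_-\mB)\ge\tr(\mD_-^2)$'', the actual residual inequality after cancellation is $\tr(\mD_+\mB)+\tr(\mD_-\mB)\ge\tr(\mD_-^2)$, which your two estimates together imply (the first is not needed with equality, only nonnegativity). This is clearly what you intend, but it could be stated more explicitly.
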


\subsection{Proof of \cref{prop: adam solution}}
\label{subapp: proof of adam}
\begin{proof}
    From \cref{lemma: block diagonal simplification}, we have
    \begin{align*}
        &\Fnorm{\Ft-\mF}\\
        =& \sum_{i=1}^n \Fnorm{\mM_i} - 2\tr\left(\mM_i^T\E[\vg_i\vg_i^T]\right)\\
        =& \sum_{i=1}^n\sum_{j=1}^m M_{i,jj}^2 - 2M_{i,jj}\E[g_{i,j}^2]
    \end{align*}
    By taking the derivative w.r.t $M_{i,jj}$, we have
    \begin{align*}
        M_{i,jj} = \E[g_{i,j}^2]
    \end{align*}
    Thus, we have $\Ft = \diag(\E[\vecg^2])$.
\end{proof}
\subsection{Proof of \cref{thm: optimal shampoo}}
\label{subapp: proof of shampoo optimiality}

To prove this theorem, we need to leverage the \cref{coro: generalization to whitening and normalization} for generalized whitening (\cref{eq: generalization whitening}) in \cref{subsec: sve}. This is proved in \cref{subapp: proof normalization}. But in the following, we will provide an alternative proof for completeness.

\begin{proof}
    From \cref{lemma: block diagonal simplification}, we have
    \begin{align*}
        &\Fnorm{\Ft-\mF} \\
        =& \sum_{i=1}^n \Fnorm{\mM} - 2\tr(\mM^T\E[\vg_i\vg_i^T]) + C\\
        =& n\Fnorm{\mM} - 2\tr(\mM^T\E[\sum_{i=1}^n \vg_i\vg_i^T]) + C\\
        =& n\Fnorm{\mM} - 2\tr(\mM^T\E[\mG\mG^T]) + C \\
    \end{align*}
    To minimize this, we take the derivative w.r.t. $\mM$, we have
    \begin{align*}
        2n\mM - 2\E[\mG\mG^T] = 0 \Rightarrow \mM = \frac{1}{n} \E[\mG\mG^T]
    \end{align*}
\end{proof}
Next, we prove another proposition that is "symmetric" to the whitening results in \cref{coro: generalization to whitening and normalization}.
\begin{proposition}
    Assume $\family = \{\mR_n \otimes \mI_m\}$, where $\mR_n\in \Rnn$ is \gls{spd} matrix, then \cref{eq: UFE equation} can be analytically solved with the optimal solution as 
    \begin{equation}
        \mR_n^* = \frac{1}{m} \E[\mG^T\mG]
        \label{eq: optimal shampoo right}
    \end{equation}
    \label{prop: optimal shampoo right}
\end{proposition}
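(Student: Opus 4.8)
The plan is to expand the Frobenius objective directly. Note first that, unlike the earlier block-diagonal cases, the structure $\mR_n\otimes\mI_m$ is \emph{not} block diagonal under the column-stacking convention (it has nonzero off-diagonal blocks), so \cref{lemma: block diagonal simplification} does not apply. Instead I would write
\begin{align*}
    \Fnorm{\Ft-\mF} = \Fnorm{\mR_n\otimes\mI_m} - 2\tr\!\big((\mR_n\otimes\mI_m)^T\mF\big) + C,
\end{align*}
where $C=\Fnorm{\mF}$ does not depend on $\mR_n$. For the quadratic term I would use multiplicativity of the Frobenius norm under Kronecker products, $\Fnorm{\mR_n\otimes\mI_m}=\Fnorm{\mR_n}\Fnorm{\mI_m}=m\Fnorm{\mR_n}$.

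For the cross term, recall $\mF=\E[\vecg\vecg^T]$ and that $\mR_n$ is symmetric, so $\tr\big((\mR_n\otimes\mI_m)^T\mF\big)=\E\big[\vecg^T(\mR_n\otimes\mI_m)\vecg\big]$. Applying \cref{eq: Kronecker product vector matrix conversion} with $\mA=\mR_n$, $\mB=\mI_m$, $\mC=\mG$ gives $(\mR_n\otimes\mI_m)\vecg=\vect(\mG\mR_n)$, hence $\vecg^T(\mR_n\otimes\mI_m)\vecg=\tr(\mG^T\mG\mR_n)$, and taking expectations the cross term equals $\tr(\E[\mG^T\mG]\,\mR_n)$. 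Putting the pieces together,
\begin{align*}
    \Fnorm{\Ft-\mF}=m\Fnorm{\mR_n}-2\tr(\E[\mG^T\mG]\,\mR_n)+C,
\end{align*}
a strictly convex quadratic in $\mR_n$. Setting the gradient to zero yields $2m\mR_n-2\E[\mG^T\mG]=\bm{0}$, i.e. $\mR_n^*=\tfrac1m\E[\mG^T\mG]$. I would close by observing that this stationary point is symmetric and positive (semi-)definite, hence lies in $\family$, so it is the unique global minimizer.

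There is no real obstacle here: the argument is mechanical once one avoids the block-diagonal lemma and uses the Kronecker--vec identity. The only point requiring care is the orientation of that identity (which factor multiplies $\mG$ on the left versus the right), which is exactly what produces $\E[\mG^T\mG]$ rather than $\E[\mG\mG^T]$, mirroring the whitening/normalization pair in \cref{coro: generalization to whitening and normalization}.
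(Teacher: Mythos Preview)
Your proof is correct, but it takes a genuinely different route from the paper. The paper does not apply the Kronecker--vec identity directly; instead it expands the trace $\tr\big((\mR_n\otimes\mI_m)\E[\vecg\vecg^T]\big)$ element-by-element over the block structure, then shows by index relabeling that this equals $\tr\big((\mI_m\otimes\mR_n)\E[\vecgt\vecgt^T]\big)$ where $\vecgt=\vect(\mG^T)$. Combined with $\Fnorm{\mR_n\otimes\mI_m}=\Fnorm{\mI_m\otimes\mR_n}$, this establishes the equivalence $\argmin_{\mR_n}\Fnorm{\mR_n\otimes\mI_m-\mF}=\argmin_{\mR_n}\Fnorm{\mI_m\otimes\mR_n-\E[\vecgt\vecgt^T]}$, at which point the already-proved whitening case of \cref{coro: generalization to whitening and normalization} (applied to the transposed gradient) gives the answer. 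Your argument is shorter and more self-contained: one application of \cref{eq: Kronecker product vector matrix conversion} replaces all of the paper's index bookkeeping. What the paper's approach buys is an explicit symmetry statement---the right preconditioner problem is literally the left preconditioner problem for $\mG^T$---which is conceptually tidy and foreshadows the two-sided treatment in \cref{thm: optimal shampoo}, but your direct computation is the more economical proof.
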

\begin{proof}
    Since $\mR_n\otimes \mI_m$ does not have a nice block diagonal structure like the previous proposition, we need to analyze it a bit more. First, we have
    \begin{align*}
        &\Fnorm{\mR_n \otimes \mI_m - \mF}\\
        =& \Fnorm{\mR_n\otimes \mI_m} - 2\tr\left(\underbrace{(\mR_n\otimes \mI_m)^T\E[\vecg\vecg^T]}_{\mZ}\right) + C\\
    \end{align*}
Since we only care about the diagonal of $\mZ$, therefore, we only inspect the block diagonal of $\mZ$ with each block $\mZ_i$ of size $\Rmm$, and $i=1,\ldots, n$. By basic algebra, we have
\begin{align*}
    \mZ_i = \sum_{k=1}^n R_{ik} \vg_k\vg_i^T
\end{align*}
where $\vg_k$ is the $k^{\text{th}}$ column of $\mG$. Therefore, we can simplify the trace of $\mZ$ as 
\begin{align*}
    \tr(\mZ) &= \sum_{i=1}^n\tr(\mZ_i)\\
    =& \tr(\sum_{i=1}^n\sum_{k=1}^n R_{ij}\vg_k\vg_i^T)\\
    =& \sum_{i=1}^n\sum_{k=1}^n\sum_{j=1}^m R_{ik}[\mG]_{ji}[\mG^T]_{kj}
\end{align*}
where $[\mG]_{ji}$ is the element of $\mG$ at $j^{\text{th}}$ row and $i^{\text{th}}$ column. 

Now, let's perform the same analysis of the following quantity
\begin{align*}
    \tr\left((\mI_m \otimes \mR_n)\E[\vecgt\vecgt^T]\right)
\end{align*}
where $\vecgt$ is the vectorized transposed gradient $\mG^T$. Namely, it now stacks the rows of $\mG$ instead of columns of $\mG$ like $\vecg$. This object is simple to treat due to its block diagonal structure, by algebric manipulation, we have
\begin{align*}
    \tr\left((\mI_m\otimes \mR_n)\E[\vecgt\vecgt^T]\right) &= \underbrace{\sum_{k=1}^m}_{\text{over blocks}}\tr(R_{ij}\underbrace{[\mG^T]_k}_{\text{kth column of }\mG^T}[\mG^T]_k^T)\\
    =&\sum_{k=1}^m \sum_{i=1}^n \sum_{j=1}^n R_{ij} [\mG^T]_{jk}[\mG]_{ki}
\end{align*}
Now, let's change the variable $i=i$, $j=k$ and $k=j$, the above becomes
\begin{align}
    &\tr\left((\mI_m\otimes \mR_n)\E[\vecgt\vecgt^T]\right) \nonumber\\
    =& \sum_{j=1}^m \sum_{i=1}^n \sum_{k=1}^n R_{ik} [\mG^T]_{kj}[\mG]_{ji} \nonumber\\
    =& \tr(\mZ) \label{eq: proof NI=IN}
\end{align}
We should also note that
\begin{align*}
    &\Fnorm{\mR_n\otimes \mI_m}\\
    =& \tr\left((\mR_n\otimes \mI_m)^T(\mR_n\otimes \mI_m)\right)\\
    =& \tr\left((\mR_n^T\mR_n)\otimes \mI_m\right)\\
    =& \tr(\mR_n^T\mR_n)\tr(\mI_m)\\
    =&\tr\left((\mI_m\otimes \mR_n)^T(\mI_m\otimes \mR_n)\right)\\
    =& \Fnorm{(\mI_m\otimes \mR_n)}
\end{align*}
Therefore, by using the above equation and \cref{eq: proof NI=IN}, the original minimization problem is translated to 
\begin{align*}
    \argmin_{\mR_n} \Fnorm{\mR_n\otimes \mI_m -\mF} = \argmin_{\mR_n}\Fnorm{\mI_m \otimes \mR_n -\E[\vecgt\vecgt^T]}
\end{align*}
Thus, we can leverage \cref{coro: generalization to whitening and normalization} to obtain the optimal solution
\begin{align*}
    \mR_n^* = \frac{1}{m} \E[\mG^T\mG]
\end{align*}
\end{proof}

With the above two propositions, we can start to prove \cref{thm: optimal shampoo}.
\begin{proof}
    First, we note that
    \begin{align*}
        &\Fnorm{\Rnr\otimes \Lmr - \mF}\\
        =& \Fnorm{\underbrace{(\mR_n\otimes \mI_m)^{\frac{1}{2}}}_{\mA}\underbrace{(\mI_n\otimes \mL_m)^{\frac{1}{2}}}_{\mB}-\underbrace{\E[\vecg\vecg^T]^{\frac{1}{2}}}_{\mC}\E[\vecg\vecg^T]^{\frac{1}{2}}}\\
        =& \Fnorm{\mA\mB - \mC\mC}
    \end{align*}
    Next, we will upper bound this quantity.
    First, we have
    \begin{align*}
        \mA\mB-\mC\mC = \mA(\mB-\mC) + (\mA-\mC)\mC
    \end{align*}
    By triangular inequality, we have
    \begin{align*}
        &\Vert\mA\mB-\mC\mC\Vert_F\\
        &\leq \Vert\mA(\mB-\mC)\Vert_F+\Vert(\mA-\mC)\mC\Vert_F \\
        &\leq \Vert\mA\Vert_F\Vert\mB-\mC\Vert_F+ \Vert\mA-\mC\Vert_F\Vert\mC\Vert_F\\
        &\leq (\Vert\mA-\mC\Vert_F+\Vert\mC\Vert_F)\Vert\mB-\mC\Vert_F+ \Vert\mA-\mC\Vert_F\Vert\mC\Vert_F\\
        &= \Vert\mA-\mC\Vert_F\Vert\mB-\mC\Vert_F+\Vert\mC\Vert_F(\Vert\mB-\mC\Vert_F+ \Vert\mA-\mC\Vert_F)
    \end{align*}
    Now, the squared norm can be upper bounded by 
    \begin{align}
        \Fnorm{\mA\mB-\mC\mC} \leq& 3\left(\Fnorm{\mA-\mC}\Fnorm{\mB-\mC}+\Fnorm{\mC}\Fnorm{\mA-\mC}+\Fnorm{\mC}\Fnorm{\mB-\mC}\right)\nonumber\\
        \leq&3\left(mn\Vert\mA^2-\mC^2\Vert_F\Vert\mB^2-\mC^2\Vert_F+\sqrt{mn}\Fnorm{\mC}\Vert\mA^2-\mC^2\Vert_F+\sqrt{mn}\Fnorm{\mC}\Vert\mB^2-\mC^2\Vert_F\right)
        \label{eq: proof upper bound shampoo}
    \end{align}
The first inequality is obtained by the fact that for any three matrix $\mP$, $\mQ$ and $\mH$, we have
\begin{align*}
    \Fnorm{\mP+\mQ+\mH}\leq& \left(\Vert\mP\Vert_F+\Vert\mQ\Vert_F+\Vert\mH\Vert_F\right)^2\\
    =& \Fnorm{\mP}+\Fnorm{\mQ}+\Fnorm{\mH} + 2\Vert\mP\Vert_F\Vert\mQ\Vert_F + 2\Vert\mP\Vert_F\Vert\mH\Vert_F+2\Vert\mQ\Vert_F\Vert\mH\Vert_F\\
    \leq& 3\left(\Fnorm{\mP}+\Fnorm{\mQ}+\Fnorm{\mH}\right)
\end{align*}
The second inequality is obtained by directly applying Powers-Stormer's inequality and Holder's inequality. For completeness, we will show how to upper-bound $\Fnorm{\mA-\mC}$, the rest can be bounded in the same way. 
From \cref{lemma: powers stormer inequality} and both $\mA$, $\mC$ are \gls{spd} matrix, we have
\begin{align*}
    \Fnorm{\mA-\mC}\leq \Vert\mA^2-\mC^2\Vert_1
\end{align*}
Then, we can select $p=q=2$ for Holder's inequaity and obtain
\begin{align*}
    \Vert\mA^2-\mC^2\Vert_1\leq \sqrt{mn}\Vert\mA^2-\mC^2\Vert_F
\end{align*}
where $\sqrt{mn}$ comes from the $\Vert\mI_{mn}\Vert_F$ in Holder's inequality. By substitute it back, we obtain the upper bound.

We can see that minimizing the upper bound \cref{eq: proof upper bound shampoo} is equivalent to minimize each $\Vert\mA^2-\mC^2\Vert_F$, $\Vert\mB^2-\mC^2\Vert_F$ individually, and 
\begin{align*}
    \Vert\mA^2-\mC^2\Vert_F &= \Vert\mR_n\otimes \mI_m - \mF\Vert_F\\
    \Vert\mB^2-\mC^2\Vert_F &= \Vert\mI_n\otimes \mL_m-\mF\Vert_F
\end{align*}
Thus, from \cref{coro: generalization to whitening and normalization} and \cref{prop: optimal shampoo right}, we prove the theorem. 
\end{proof}

\subsection{Proof of \cref{coro: generalization to whitening and normalization} and \cref{prop: two sided scaling}}
\label{subapp: proof normalization}
Instead of proving the \cref{coro: generalization to whitening and normalization}, we propose a generalization to those gradient operations, where \cref{coro: generalization to whitening and normalization} is a special case.

\paragraph{Structure assumption} We consider $\family=\{\mS\otimes \mM\}$ with identical \gls{spd} $\mM\in\Rmm$ and positive diagonal $\mS\in\Rnn$. 
The following theorem proves that the optimal solution can be solved by a fixed-point iteration. 
\begin{theorem}
    Assuming $\family=\{\mS\otimes \mM\}$ with positive diagonal $\mS\in\Rnn$ and \gls{spd} $\mM\in\Rmm$, and $\E_{GG'}[(\mG^T\mG')\elesquare]$ contains positive values, solving \cref{eq: UFE equation} admits a fixed point procedure:
    \begin{align}
        \diag(\mS) = \frac{\diag(\E[\mG^T\mM\mG])}{\Fnorm{\mM}},\;\;\;
        \mM=\frac{\E[\mG\mS\mG^T]}{\Fnorm{\mS}}.
        \label{eq: optimal S and M}
    \end{align}
    The solution $\diag(\mS^*)$ converges to the principal eigenvector of $\E[(\mG^T\mG')\elesquare]$ up to a scaling with unique $\mS^*\otimes \mM^*$. 
    \label{thm: generalization to normal and whiten}
\end{theorem}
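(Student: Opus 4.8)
The plan is to turn the minimization in \cref{eq: UFE equation} for the family $\family=\{\mS\otimes\mM\}$ into an unconstrained quadratic program in $\vs:=\diag(\mS)$ and $\mM$, read the stationarity conditions off it (these will be exactly the asserted fixed-point equations), and then eliminate $\mM$ to expose a Rayleigh-quotient maximization whose solution is governed by the Perron--Frobenius theorem.

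First I would use the block structure: since $\mS$ is a positive diagonal matrix, $\mS\otimes\mM$ is block diagonal with $i$-th block $s_i\mM$, so \cref{lemma: block diagonal simplification} together with $\sum_i s_i\E[\vg_i\vg_i^T]=\E[\mG\mS\mG^T]$ and $\Fnorm{\mS}=\Vert\vs\Vert^2$ reduces the problem, up to an additive constant, to minimizing $J(\vs,\mM)=\Vert\vs\Vert^2\Fnorm{\mM}-2\tr(\mM^T\E[\mG\mS\mG^T])$. For fixed $\vs$ this is a strictly convex quadratic in $\mM$ with $\nabla_\mM J=0$ giving $\mM=\E[\mG\mS\mG^T]/\Fnorm{\mS}$, which is SPD because $\mG\mS\mG^T\succeq0$ when $\vs>0$; for fixed $\mM$ it is a convex quadratic in $\vs$ with $\partial_{s_i}J=0$ giving $s_i=\tr(\mM^T\E[\vg_i\vg_i^T])/\Fnorm{\mM}=\E[\vg_i^T\mM\vg_i]/\Fnorm{\mM}$, that is $\diag(\mS)=\diag(\E[\mG^T\mM\mG])/\Fnorm{\mM}$, which is positive when $\mM\succ0$. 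These are exactly the two equations in the statement, and they characterize the stationary points of the alternating minimization.

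Next I would eliminate $\mM$. Writing $\mathbf{P}:=\E_{G,G'}[(\mG^T\mG')\elesquare]$ for independent copies $\mG,\mG'$, the key identity is $\tr(\E[\vg_i\vg_i^T]\,\E[\vg_j\vg_j^T])=\E_{G,G'}[(\vg_i^T\vg_j')^2]=[\mathbf{P}]_{ij}$, which gives $\Fnorm{\E[\mG\mS\mG^T]}=\vs^T\mathbf{P}\vs$; moreover $\mathbf{P}$ is symmetric (swap $\mG\leftrightarrow\mG'$) and entrywise positive by hypothesis. Substituting $\mM=\E[\mG\mS\mG^T]/\Vert\vs\Vert^2$ collapses $J$ to $-\vs^T\mathbf{P}\vs/\Vert\vs\Vert^2$ up to a constant, so minimizing over $\vs>0$ is maximizing the Rayleigh quotient $\vs^T\mathbf{P}\vs/\vs^T\vs$; since $\mathbf{P}$ is entrywise positive, Perron--Frobenius makes its leading eigenvector strictly positive and unique up to scaling, the positivity constraint is inactive, and so $\vs^*=\diag(\mS^*)$ is the principal eigenvector of $\mathbf{P}$ up to a scalar. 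Equivalently, at any fixed point the two equations combine into $\mathbf{P}\vs=(\vs^T\mathbf{P}\vs/\Vert\vs\Vert^2)\,\vs$, which is a normalized power iteration. The only remaining freedom is $(\vs,\mM)\mapsto(c\vs,\mM/c)$, which fixes $\mS\otimes\mM$, and $\mM$ is determined by $\vs$ through the first equation, so $\mS^*\otimes\mM^*$ is unique; the special cases $\mM=\mI$ and $\mS=\mI$ then recover \cref{coro: generalization to whitening and normalization}.

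The step I expect to be the main obstacle is justifying that the stationarity conditions select the \emph{global} minimizer rather than an arbitrary eigenvector of $\mathbf{P}$: the joint objective is bilinear in $(\vs,\mM)$ and hence not jointly convex. Eliminating $\mM$ in closed form to obtain the Rayleigh-quotient reformulation is the move that settles this, and the residual delicacy is invoking Perron--Frobenius (positivity/irreducibility of $\mathbf{P}$) so that ``principal eigenvector'' is well defined and the constraint $\vs>0$ does not exclude the optimum.
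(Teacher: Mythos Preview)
Your derivation of the fixed-point equations matches the paper's, but your route to identifying the limit differs. The paper, after substituting $\mM$ into the $\mS$-update, obtains $\vs\propto\mP\vs$ with $\mP=\E_{GG'}[(\mG^T\mG')\elesquare]$ and then invokes the Birkhoff--Hopf theorem: the linear map $\vs\mapsto\mP\vs$ is a strict contraction in the Hilbert projective metric on the positive cone, so Banach's fixed-point theorem gives convergence of the iteration, and Perron--Frobenius identifies the limit ray as the Perron vector of $\mP$. You instead substitute the optimal $\mM$ back into the \emph{objective}, collapse it to the Rayleigh quotient $-\vs^T\mP\vs/\Vert\vs\Vert^2$, and use Perron--Frobenius directly to conclude that the maximizer is the strictly positive principal eigenvector (so the constraint $\vs>0$ is inactive), recognizing the alternating scheme as a normalized power iteration whose convergence is then standard once $\mP$ has a simple top eigenvalue.

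Both arguments are correct. Your Rayleigh-quotient reduction is more elementary (no Hilbert metric, no Birkhoff--Hopf) and has the bonus of certifying \emph{global} optimality of the fixed point, which the paper's contraction argument does not directly give. The paper's route, on the other hand, makes the convergence of the iteration explicit and quantitative (contraction rate $\tanh(\Delta(\mP)/4)$), whereas you leave ``normalized power iteration converges under Perron--Frobenius'' as a one-line appeal; if you want to match the theorem's wording about convergence, you should state that the simplicity of the Perron eigenvalue (guaranteed for strictly positive $\mP$) is what makes power iteration converge from any positive start.
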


To prove \cref{thm: generalization to normal and whiten}, we first introduce some classic results. 
\begin{theorem}[Perron-Frobenius theorem]
For a matrix $\mA\in\Rnn$ with positive entries, the principal eigenvalue $r$ is positive, called Perron-Frobenius eigenvalue. The corresponding eigenvector $\vv$ of $\mA$ is called Perron vector and only contains positive components: $\mA\vv=r\vv$ with $v_i>0$. In addition, there are no other positive eigenvectors of $\mA$. 
\label{thm: Perron-Frobenius theorem}
\end{theorem}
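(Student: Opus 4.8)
The plan is to establish the three assertions in turn — that the Perron eigenvalue $r$ is positive, that it has a strictly positive eigenvector, and that no other positive eigenvectors exist — via a fixed-point argument followed by two elementary comparison arguments. The key feature of $\mA$ that I would use repeatedly is that, since all entries of $\mA$ are strictly positive, $\mA x$ has strictly positive entries for every nonzero $x\ge 0$; that is, $\mA$ maps the nonnegative cone minus the origin into its interior. For existence, let $\Delta=\{x\in\R^n: x_i\ge 0,\ \sum_i x_i=1\}$ and consider the map $f(x)=\mA x/\|\mA x\|_1$, which is well defined and continuous on $\Delta$ (the denominator is positive there) and sends $\Delta$ into itself. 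Since $\Delta$ is compact and convex, Brouwer's fixed-point theorem produces $\vv\in\Delta$ with $\mA\vv=r\vv$, where $r:=\|\mA\vv\|_1>0$; and $\vv=\mA\vv/r$ has strictly positive entries because $\mA\vv$ does. If one prefers to avoid Brouwer, the same pair $(r,\vv)$ follows from the Collatz–Wielandt characterization $r=\max_{x\in\Delta}\min_{i:\,x_i>0}(\mA x)_i/x_i$: upper semicontinuity of the objective yields a maximizer $\vv$, and if $\mA\vv\ne r\vv$ then $\mA\vv-r\vv\ge 0$ is nonzero, so $\mA(\mA\vv-r\vv)$ is entrywise strictly positive, which would strictly increase the objective at $\mA\vv/\|\mA\vv\|_1$ — a contradiction.

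Next I would show that every positive eigenvector of $\mA$ must have eigenvalue $r$, and that $r$ is the principal eigenvalue. Apply the existence step to $\mA^T$ (also entrywise positive) to get $\vu>0$ with $\mA^T\vu=r'\vu$, $r'>0$. Pairing against $\vv$ gives $r\,\vu^T\vv=\vu^T\mA\vv=r'\,\vu^T\vv$, and $\vu^T\vv>0$ forces $r'=r$. If $\mA\vw=\mu\vw$ with $\vw>0$, the same pairing gives $\mu\,\vu^T\vw=r\,\vu^T\vw$ with $\vu^T\vw>0$, hence $\mu=r$. Finally, for an arbitrary eigenpair $(\lambda,\vz)$ of $\mA$, taking absolute values entrywise yields $\mA|\vz|\ge|\lambda|\,|\vz|$, so $r\,\vu^T|\vz|=\vu^T\mA|\vz|\ge|\lambda|\,\vu^T|\vz|$ and therefore $|\lambda|\le r$; thus $r$ is the largest-modulus eigenvalue, justifying the name.

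For the uniqueness statement, suppose $\vv$ and $\vw$ are both positive eigenvectors; by the previous paragraph both correspond to eigenvalue $r$. Set $s=\min_i w_i/v_i$, attained at some index $j$, and $\vz=\vw-s\vv\ge 0$ with $z_j=0$. Then $\mA\vz=r\vz$; if $\vz\ne 0$, strict positivity of $\mA$ forces $\mA\vz$ to be entrywise strictly positive, contradicting $r z_j=0$ (recall $r>0$). Hence $\vz=0$, so $\vw=s\vv$: the positive eigenvector is unique up to a positive scalar, and in particular $r$ admits no positive eigenvector linearly independent of $\vv$, and no other eigenvalue admits a positive eigenvector.

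I expect the only genuinely non-routine step to be the existence step — the fixed-point (or compactness) input that produces a positive eigenpair in the first place; everything afterward is just exploiting that $\mA$ pushes the nonnegative cone into its interior. The one subtlety to watch is the ordering: the min-ratio argument for uniqueness presupposes that every positive eigenvector already has eigenvalue $r$, so the left-eigenvector pairing must be carried out before attempting the proportionality argument.
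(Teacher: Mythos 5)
The paper does not actually prove this statement: \cref{thm: Perron-Frobenius theorem} is quoted as a classical result (together with the Birkhoff--Hopf theorem and the Hilbert projective metric) and used as an external tool in the proof of \cref{thm: generalization to normal and whiten}, so there is no in-paper argument to compare your proposal against. That said, your self-contained proof is correct and follows the standard route: existence of a strictly positive eigenpair from Brouwer's fixed-point theorem applied to $x\mapsto \mA x/\lVert \mA x\rVert_1$ on the simplex (equivalently, the Collatz--Wielandt variational characterization, where the upper-semicontinuity claim does hold since for each $i$ with $x_i^*>0$ the ratio $(\mA x)_i/x_i$ varies continuously near $x^*$); the left-eigenvector pairing with $\vu>0$ satisfying $\mA^T\vu = r\vu$ to force any positive eigenvector to share the eigenvalue $r$ and to dominate all $|\lambda|$ via $\mA|\vz|\ge|\lambda|\,|\vz|$; and finally the min-ratio subtraction to get proportionality of any two positive eigenvectors. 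Your closing caveat is also the right one: the left-eigenvector pairing must precede the min-ratio step, since the latter presumes both positive eigenvectors already have eigenvalue $r$. The only cosmetic point is that the theorem as stated says ``no other positive eigenvectors,'' which your argument establishes in the standard sense of uniqueness up to positive scaling; it is worth making that convention explicit when invoking the result.
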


\begin{definition}[Hilbert projective metric]
    For any given vectors $\vv$, $\vw$ in $C\slash \{0\}$ where $C$ is a closed convex pointed non-negative cone $C$, i.e.~$C\cap (-C)=\{0\}$, the Hilbert projective metric is defined as 
    \begin{align*}
        d_H(\vv,\vw) = \log \left(\max_i \frac{v_i}{w_i}\right) - \log \left(\min_i \frac{v_i}{w_i}\right)
    \end{align*}
\end{definition}
This is a pseudo metric since it has a scaling invariance property: $d_H(\vv,\alpha\vm)=d_H(\vv,\vm)$ for $\alpha>0$. This means $d_H(\vv,\vm)=0$ does not mean $\vv=\vm$ but $\vv=\alpha\vm$ with some positive scaling $\alpha$. However, this is a metric on the space of rays inside the cone. 

\begin{theorem}[Birkhoff-Hopf theorem]
Let $\mP\in\Rnn$ be a positive matrix and let
\begin{align*}
    \kappa(\mP) = \inf\left\{\alpha\geq 0: d_H(\mP\vx,\mP\vy)\leq \alpha d_H(\vx,\vy), \forall \vx,\vy \in C_+, \vx \sim \vy \right\}
\end{align*}
where $C_+$ is the cone that each element is non-negative and $\sim$ is the induced equivalence relation. Namely, if $\vx\sim\vy$, there exists $\alpha,\beta>0$ such that $\alpha\vx<\vy<\beta\vx$, and $\vx<\vy$ means $y-x\in C_+$. Then, it holds
\begin{align*}
    \kappa(\mP) = \tanh{\frac{1}{4}\Delta(\mP)}\;\;\;\text{with}\;\Delta(\mP) = \max_{i,j,k,l}\frac{P_{ij}P_{kl}}{P_{il}P_{kj}}
\end{align*}
\label{thm: Birkhoff-Hopf}
\end{theorem}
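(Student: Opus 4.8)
\textbf{Proof proposal for \cref{thm: generalization to normal and whiten}.}

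The plan is to split the statement into two parts: first establishing that the stationary conditions of \cref{eq: UFE equation} under the structure $\family=\{\mS\otimes\mM\}$ are exactly the coupled equations \cref{eq: optimal S and M}, and second proving that the resulting fixed-point iteration converges to a unique ray, identifiable with the Perron vector of $\E[(\mG^T\mG')\elesquare]$. For the first part, I would expand $\Fnorm{\mS\otimes\mM-\mF}$ using the same trace manipulation as in \cref{lemma: block diagonal simplification} and the derivations already carried out for the whitening/normalization propositions. Writing $\Fnorm{\mS\otimes\mM-\mF}=\Fnorm{\mS\otimes\mM}-2\tr((\mS\otimes\mM)^T\mF)+C$, the first term factors as $\Fnorm{\mS}\Fnorm{\mM}$, and the cross term, because only the block-diagonal entries of $\mF$ matched against the block pattern of $\mS\otimes\mM$ survive, reduces to $\sum_i S_{ii}\tr(\mM^T\E[\vg_i\vg_i^T])=\tr(\mM^T\E[\mG\mS\mG^T])$. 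Taking the derivative with respect to $\mM$ (for fixed $\mS$) gives $\mM=\E[\mG\mS\mG^T]/\Fnorm{\mS}$, and differentiating with respect to each $S_{ii}$ (for fixed $\mM$) gives $S_{ii}=\E[\vg_i^T\mM\vg_i]/\Fnorm{\mM}$, i.e.\ $\diag(\mS)=\diag(\E[\mG^T\mM\mG])/\Fnorm{\mM}$. This yields \cref{eq: optimal S and M}.

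For the convergence part, I would substitute the $\mM$-equation into the $\mS$-equation to obtain a closed iteration on $\vs=\diag(\mS)$ alone. Since $\diag(\E[\mG^T\mM\mG])_j = \E[\vg_j^T\mM\vg_j]$ and $\mM\propto\E[\mG\,\diagv(\vs)\,\mG^T]=\sum_k s_k\,\E[\vg_k\vg_k^T]$, one gets $\E[\vg_j^T\mM\vg_j]\propto \sum_k s_k\,\E[(\vg_j^T\vg_k)^2]\;=\;\sum_k s_k\,\E_{\mG,\mG'}[(\mG^T\mG')\elesquare]_{jk}$ after expanding the quadratic form and using that the $j,k$ column-inner-products squared are exactly the entries of $\E[(\mG^T\mG')\elesquare]$ (here the two gradient copies are needed to separate the expectation of a product of two gradients). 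Hence the iteration is $\vs \mapsto c(\vs)\,\mathbf{P}\vs$ where $\mathbf{P}=\E[(\mG^T\mG')\elesquare]$ is entrywise positive by hypothesis and $c(\vs)>0$ is a scalar normalization; this is precisely a normalized power iteration on a positive matrix.

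Convergence to a unique ray then follows from the classical toolbox on positive maps. By the Perron–Frobenius theorem (\cref{thm: Perron-Frobenius theorem}) $\mathbf{P}$ has a simple positive principal eigenvalue with a strictly positive eigenvector, and no other positive eigenvectors; so the only candidate fixed ray of the iteration is the Perron ray. To get actual convergence I would invoke the Birkhoff–Hopf theorem (\cref{thm: Birkhoff-Hopf}): a positive matrix acts as a strict contraction in the Hilbert projective metric $d_H$ with contraction ratio $\kappa(\mathbf{P})=\tanh(\tfrac14\Delta(\mathbf{P}))<1$, so the iterates form a $d_H$-Cauchy sequence converging to the unique fixed point in the projective space of rays — i.e.\ $\diag(\mS^*)$ converges to the Perron vector of $\mathbf{P}$ up to positive scaling. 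Because $\mM^*$ is then determined by $\mS^*$ through \cref{eq: optimal S and M} and the product $\mS\otimes\mM$ is invariant under the reciprocal rescaling $(\mS,\mM)\mapsto(\alpha\mS,\alpha^{-1}\mM)$, the preconditioner $\mS^*\otimes\mM^*$ itself is unique even though $(\mS^*,\mM^*)$ is only unique up to scale. The main obstacle I anticipate is the bookkeeping in the second step — carefully verifying that the composed iteration really is exactly multiplication by $\E[(\mG^T\mG')\elesquare]$ (in particular that expanding $\vg_j^T\big(\sum_k s_k \vg_k\vg_k^T\big)\vg_j$ and taking expectations over two independent gradient draws produces the squared column-inner-product matrix, rather than something with cross terms) and confirming that the scalar normalization $c(\vs)$ is well-defined and does not interfere with the $d_H$-contraction argument, since $d_H$ is scale-invariant precisely to absorb such factors.
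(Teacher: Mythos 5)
Your proposal proves \cref{thm: generalization to normal and whiten}, not \cref{thm: Birkhoff-Hopf}. The statement you were asked to address is the Birkhoff--Hopf theorem itself: the exact formula $\kappa(\mP)=\tanh\bigl(\tfrac14\Delta(\mP)\bigr)$ for the Lipschitz constant of a positive linear map in the Hilbert projective metric. Your argument never establishes that formula; it merely \emph{uses} the resulting contractivity as a black box, together with Perron--Frobenius, to conclude convergence of the fixed-point iteration for the two-sided scaling. In the paper, \cref{thm: Birkhoff-Hopf} is stated as imported classical background with no proof supplied, so there is no internal argument to compare against; but regardless, a proof of Birkhoff--Hopf requires a completely different line of reasoning. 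One must work directly with the projective geometry of the cone: show that the supremum defining $\kappa(\mP)$ can be reduced to pairs $\vx,\vy$ differing in only two coordinates, express $d_H$ on a two-dimensional face in terms of a cross-ratio, relate the projective diameter of the image $\mP(C_+)$ to $\log\Delta(\mP)$, and then carry out the explicit hyperbolic-tangent computation. None of that appears in your proposal.

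As a side note, your derivation of \cref{thm: generalization to normal and whiten} is essentially the paper's own route: derive the coupled stationarity conditions \cref{eq: optimal S and M} from the Frobenius objective (your cross-term bookkeeping matches the paper's use of \cref{lemma: block diagonal simplification}), compose them into a normalized power iteration $\vs\mapsto c(\vs)\,\mathbf{P}\vs$ with $\mathbf{P}=\E[(\mG^T\mG')\elesquare]$, invoke Perron--Frobenius to identify the limit ray and Birkhoff--Hopf for contractivity in $d_H$, and use the rescaling invariance $(\mS,\mM)\mapsto(\alpha\mS,\alpha^{-1}\mM)$ to obtain uniqueness of $\mS^*\otimes\mM^*$. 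That reasoning is sound, but it is a proof of a downstream result, not of the theorem in question.
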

This theorem suggests that when $\mP$ is a positive matrix, the corresponding linear mapping is contractive since $\tanh(\cdot)\leq 1$ under Hilbert projective metric. 

Now, let's prove the \cref{thm: generalization to normal and whiten}.
\begin{proof}
    First, we can simplify the \cref{eq: UFE equation} using \cref{lemma: block diagonal simplification}:
    \begin{align*}
        &\Fnorm{\mS\otimes\mM - \mF}\\
        =& \sum_{i=1}^n S_i^2\Fnorm{\mF} - 2\tr(S_i\mM\E[\vg_i\vg_i^T]) +C
    \end{align*}
    Then, we simply take its derivative w.r.t. $s_i$, and obtain
    \begin{align*}
        &2S_i\Fnorm{\mM} = 2\tr(\mM\E[\vg_i\vg_i^T])\\
        \Longrightarrow& S_i = \frac{\tr(\mM\E[\vg_i\vg_i^T])}{\Fnorm{\mM}}\\
        \Longrightarrow& \diag(\mS) = \frac{\diag\left(\E[\mG^T\mM\mG]\right)}{\Fnorm{\mM}}
    \end{align*}
    Similarly, we have
    \begin{align*}
        \mM =& \frac{\sum_{i=1}^nS_i\E[\vg_i\vg_i^T]}{\Fnorm{\mS}}\\
        =& \frac{\E[\mG\mS\mG^T]}{\Fnorm{\mS}}
    \end{align*}
These define an iterative procedure. Next, we will show it converges.
Let's substitute $\mM$ into $\mS$, and obtain
\begin{align*}
    \mS =& \diag\left(\E_{G}\left[\mG^T\E_{G'}[\mG'\mS\mG^{'T}]\mG\right]\right)\alpha(\mS)\\
    =& \diag\left(\E_{GG'}\left[\underbrace{\mG\mG^{'T}}_{\mH}\mS\mG^{'T}\mG\right]\right)
\end{align*}
where $\alpha(\mS)$ is the scaling term. In the following, we use $\E$ as $\E_{GG'}$.
Since we can show
\begin{align*}
    S_i = \E\left[\sum_{j}^nS_j\right],
\end{align*}
we can write $\mS$ in its vector format:
\begin{align*}
    \vs = \underbrace{\E\left[\mH\elesquare\right]}_{\mP}\vs.
\end{align*}
From the assumption, we know $\mP$ contains only positive values, let's define a quotient space for positive vectors $\vs$ and $\vq$ under the equivalence relation $\vs\sim \vs'$ if $\vs =\alpha \vs'$ for some positive scaling $\alpha$. Namely, we define a space of rays inside the positive cone. Therefore, the Hilbert projective metric becomes a real metric inside the quotient space. 

From the \cref{thm: Birkhoff-Hopf}, we know the linear mapping associated with $\mP$ is contractive. Therefore, we can follow the proof of Banach fixed point theorem on the previously defined quotient space with Hilbert projective metric to show the convergence of this fixed point iteration on $\vs$.  

Now, we show the solution $\vs^*$ is always positive. Since it is converging, therefor, the solution satisfies 
\begin{align*}
    \vs^* = \alpha(\vs^*) \mP\vs^*
\end{align*}
This is equivalent to finding the eigenvectors of $\mP$. By leveraging Perron-Frobenius theorem (\cref{thm: Perron-Frobenius theorem}), we know $\vs^*$ is the principal eigenvector of $\mP$, and only contain positive values. It is also easy to verify that this fixed point converges upto a positive scaling factor (this is expected since the contractive mapping holds true for the quotient space with Hilbert metric, that is invariant to scaling.)

Although $\vs^*$ is not unique, but $\mS\otimes \mM$ is, since for arbitrary positive scaling $\beta$
\begin{align*}
    &\vs^{'*}=\beta\vs^* \Longrightarrow \mM^{'*} = \frac{1}{\beta}\frac{\E[\mG\mS^*\mG^T]}{\Vert\vs\Vert_2^2}\\
    \Longrightarrow& \mS^{'*}\otimes\mM^{'*} = \mS^*\otimes \mM^*
\end{align*}
\end{proof}

Therefore, \cref{coro: generalization to whitening and normalization} is a direct consequence by substituting $\Ft = \mI_n\otimes\mM$ and $\Ft = \mS\otimes \mI_m$ into \cref{eq: optimal D for compensation}.

Next, we prove \cref{prop: two sided scaling}.
\begin{proof}
    From the \cref{thm: generalization to normal and whiten}, the iterative procedure for $\mQ$ can be simply obtained by taking the diagonals of $\mM$:
    \begin{align*}
        \mQ = \frac{\diag\left(\E\left[\mG\mS\mG^T\right]\right)}{\Fnorm{\mS}}.
    \end{align*}
    Following the same proof strategy of \cref{thm: generalization to normal and whiten}, we substitute $\mQ$ into the update of $\mS$ and re-write it into the vector format. First, let's rewrite the update of $\mS$
    \begin{align*}
        &S_i \propto \E[\sum_{j=1}G_{ji}^2 Q_j]\\
        \Longrightarrow& \vs = \frac{\mP^T \vq}{\Vert\vq\Vert_2^2}
    \end{align*}
    where $\mP=\E[\mG\elesquare]$. Similarly, $\vq = \frac{\mP\vs}{\Vert\vs\Vert_2^2}$.
    Thus,
    \begin{align*}
        \vs = \alpha(\vs) \mP^T\mP \vs
    \end{align*}
    From the assumption $\mP$ contains only positive values, we can follow the exact same argument made in \cref{thm: generalization to normal and whiten} to show the convergence of this fixed point update and the positivity of the final solution $\vs^*$. Precisely, $\vs^*$ and $\vq^*$ are the right and left principal singular vectors of $\mP$, respectively, and $\mS^*\otimes \mQ^*$ are unique. 
\end{proof}
\subsection{Proofs of \gls{alicec}}
\label{subapp: proofs of alicec}
\subsubsection{Proof of \cref{thm: alicec 1 step refinement}}

\begin{proof}
    For simplicity, we omit the subscript $f$ in $\mU_{f}$. If we assume all $\mD_i$ are equal and only contain positive values, then each block $\mU\mD_i\mU^T$ are the same for all $i$, and it is \gls{spd} matrix. Then, to minimize the the loss \cref{eq: UFE equation}, we can directly leverage the whitening results in \cref{coro: generalization to whitening and normalization}, and obtain $\mM^*=\E[\mG\mG^T]$. Due to the structure of $\mU\mD\mU^T$, the optimal $\mU^*$ is exactly the eigen-matrix of $\mM^*$. 

    Next, we prove for any fixed $\mU$, we can find the corresponding optimal $\mD_i$. 
    From the block diagonal structure and \cref{lemma: block diagonal simplification}, we have
    \begin{align*}
        &\Fnorm{\Ft-\mF}\\
        =&\sum_{i=1}^n \Fnorm{\mU\mD_i\mU^T} - 2\tr\left(\mU^T\E[\vg_i\vg_i^T]\mU \mD_i\right) + C\\
        =& \sum_{i=1}^n \Fnorm{\mD_i} - 2\tr\left(\mU^T\underbrace{\E[\vg_i\vg_i^T]}_{\mH_i}\mU\mD_i\right) + C\\
        =&\sum_{i=1}^n\sum_{j=1}^m D_{i,jj}^2 - 2\sum_{i=1}^n\sum_{j=1}^m D_{i,jj}\vu_j^T\mH_i\vu_j + C
    \end{align*}
Taking the derivative w.r.t. $D_{i,jj}$, we can find the optimal $D_{i,jj}$ is 
\begin{align*}
    D^*_{i,jj} =& \vu_j^T\mH_i\vu_j\\
    =& \E[(\vu_j^T\vg_i)^2]
\end{align*}
Now, by simple algebra manipulation, we have
\begin{align*}
    \mD^*_i = \diagv(\E[(\mU^T\vg_i)^2])
\end{align*}
where $\diag_M$ is to expand the vector to a diagonal matrix. 
Finally, for $\tilde{\mD}$, we have
\begin{equation}
    \tilde{\mD} = \diagm(\E[(\mU^T\mG)\elesquare])
\end{equation}

The optimality of $\widetilde{\mD}$ can also be obtained by leveraging the Lemma 1 in \citep{george2018fast}, and set the eigenbasis as $\mI_n\otimes \mU$. 
\end{proof}
\subsection{Proof of \cref{prop: subspace switching}}
\label{subapp: proof subspace switching}
\begin{proof}
    Within the time block $i+1$ with low-rank mapping $\mU$, the gradient at each step can be decomposed as 
    \begin{align*}
        \mG_t = \underbrace{\mU\mU^T\mG_t}_{\widetilde{\mG_t}} + \underbrace{(\mG_t - \mU\mU^T\mG_t)}_{\mR_t}
    \end{align*}
    Therefore, the true state $\mQ^*_{(i+1)k}$ can be simplified as 
    \begin{align*}
        & \mQ_{(i+1)k} = \sum_{t=ik+1}^{(i+1)k}\mG_t\mG_t^T\\
        =& \sum_{t=ik+1}^{(i+1)k} (\widetilde{\mG}_t+\mR_t)(\widetilde{\mG}_t+\mR_t)^T\\
        =& \sum_{t=ik+1}^{(i+1)k} \widetilde{\mG}_t\widetilde{\mG}_t^T+ \underbrace{\widetilde{\mG}_t\mR_t^T}_{0}+ \underbrace{\mR_t\widetilde{\mG}_t^T}_{0} + \mR_t\mR_t^T
    \end{align*}
The third equality is obtained because we assume $\mG_t\mG_t^T$ shares the same eigen-basis as $\mQ^*_{ik}$. Namely, 
\begin{align*}
\mG_t\mG_t^T=&[\mU,\mU_c]\left[\begin{array}{cc}
    \mA_{t} & 0 \\
   0  & \bm{\Sigma}_t
\end{array}\right]\left[\begin{array}{c}
     \mU^T  \\
     \mU_c^T
\end{array}\right]\\
=& \mU\mA_t\mU^T + \mU_c\bm{\Sigma}_t\mU_c^T
\end{align*}
where $\mA_t$ and $\bm{\Sigma}_t$ are diagonal matrix. Then, we have
\begin{align*}
    &\widetilde{\mG}_t\mR_t^T\\
    =& \mU\mU^T\mG_t(\mU_c\mU_c^T\mG_t)^T\\
    =& \mU\mU^T(\mU\mA_t\mU^T+\mU_c\bm{\Sigma}_t\mU_c^T)\mU_c\mU_c^T\\
    =& \mU\mA_t\underbrace{\mU^T\mU_c}_{0}\mU_c^T + \mU\underbrace{\mU^T\mU_c}_{0}\bm{\Sigma}_t\mU_c^T\mU_c\mU_c^T\\
    =&\bm{0}
\end{align*}
In addition, we can also simplify
\begin{align*}
    &\mR_t\mR_t^T\\
    =& \mU_c\mU_c^T\mG_t\mG_t^T\mU_c\mU_c^T\\
    =& \mU_c\mU_c^T(\mU\mA_t\mU^T+\mU_c\bm{\Sigma}_t\mU_c^T)\mU_c\mU_c^T\\
    =&\mU_c\bm{\Sigma}_t\mU_c^T
\end{align*}
Therefore, 
\begin{align*}
    \mQ^*_{(i+1)k} = \sum_{t=ik+1}^{(i+1)k} \widetilde{\mG}_t\widetilde{\mG}_t^T + \mU_c\bm{\Sigma}_t\mU_c^T
\end{align*}
\end{proof}

\subsection{Proof of \cref{thm: optimal compensation}}
\label{subapp: proof optimal compensation}

\begin{proof}
For simplicity, we ignore the subscript $t$ for the following proof.
First, we let $\mO=\mS^{-2}$
    Then, the loss function can be written as 
    \begin{align*}
        &\Fnorm{\mO\otimes \mU_c\mU_c^T-\Ft_{c}}\\
        =& \sum_{i=1}^n \Fnorm{O_{ii} \mU_c\mU_c^T} - 2\tr((O_{ii}\mU_c\mU_c^T)^T(\mU_c\mM_i\mU_c^T))\\
        =&\sum_{i=1}^n \Fnorm{O_{ii} \mU_c\mU_c^T} - 2\tr(O_{ii}\mM_i)\\
        =& \sum_{i=1}^n\sum_{k=1}^{m-r}\sum_{j=1}^m O_{ii}^2U_{c,jk}^2 - 2\tr(O_{ii}\mM_i)
    \end{align*}
    where $\mM_i=\diag(\E[(\mU_c^T\vg_i)^2])$, $\vg_i$ is the $i^{\text{th}}$ column of $\mG$, and $U_{c, jk}$ is the element in $j^{\text{th}}$ row, $k^{\text{th}}$ column of $\mU_c$. 
    Then, we take the derivative w.r.t. $O_{ii}$, and we have
    \begin{align*}
        &2O_{ii} \sum_{k=1}^{m-r}\sum_{j=1}^m U_{c,jk}^2 = 2\sum_{k=1}^{m-r} \E[(\mU_c^T\vg_i)^2_{k}]\\
        \Longrightarrow& O_{ii} = \frac{\E[\sum_{k=1}^{m-r}(\mU_c^T\vg_i)_k^2]}{m-r}
    \end{align*}
    This form still requires the access to $\mU_c$. Next, let's simplify it. 
    First, let $\tilde{\mU} = [\mU, \mU_c]$ to be the complete basis, we can show
    \begin{align*}
        &\sum_{k=1}^m (\tilde{\mU}^T\vg_i)^2_k \\
        =&\tr((\tilde{\mU}^T\vg_i)^T(\tilde{\mU}^T\vg_i))\\
        =&\tr(\vg_i^T\underbrace{\tilde{\mU}\tilde{\mU}^T}_{\mI}\vg_i)\\
        =&\vg_i^T\vg_i\\
    \end{align*}
    Now, let's re-write the above in a different format:
    \begin{align*}
    &\sum_{k=1}^m (\tilde{\mU}^T\vg_i)^2_k \\
        =&\tr(\vg_i^T\tilde{\mU}\tilde{\mU}^T\vg_i)\\
        =&\tr(\tilde{\mU}^T\vg_i\vg_i^T\tilde{\mU})\\
        =&\tr\left(\left[\begin{array}{c}
             \mU^T  \\
             \mU_c^T
        \end{array}\right]\vg_i\vg_i^T[\mU,\mU_c]\right)\\
        =& \tr(\mU\mU^T(\vg_i\vg_i^T)+ \mU_c\mU_c^T(\vg_i\vg_i^T))\\
        =& \tr((\mU^T\vg_i)^T(\mU^T\vg_i)) + \tr((\mU_c^T\vg_i)^T(\mU_c^T\vg_i))\\
        =& \sum_{k=1}^r(\mU^T\vg_i)_k^2 + \sum_{k=1}^{m-r} (\mU_c^T\vg_i)^2_k
    \end{align*}
    Therefore, we have
    \begin{align*}
        \E[\sum_{k=1}^{m-r}(\mU_c^T\vg_i)_k^2] = \E[\vg_i^T\vg_i-\sum_{k=1}^r (\mU^T\vg_i)_k^2]
    \end{align*}
    So, we have 
    \begin{align*}
        \diag(\mO) = \frac{\E[\bm{1}_m^T\mG\elesquare - \bm{1}_r^T(\mU^T\mG)\elesquare]}{m-r}
    \end{align*}
    and 
    \begin{align*}
        \diag(\mD) = \frac{\sqrt{m-r}}{\sqrt{\E[\bm{1}_m^T\mG\elesquare - \bm{1}_r^T(\mU^T\mG)\elesquare]}} 
    \end{align*}

\end{proof}
\subsection{Proof of \cref{thm: optimal asham}}
\label{subapp: proof asham}

\begin{proof}
    The proof strategy is a straightforward combination of \cref{thm: optimal shampoo} and \cref{thm: alicec 1 step refinement}. First, when we assume $\tilde{\mD}$ has the Kronecker product structure, one can easily write 
    \begin{align*}
        &(\mU_R\otimes \mU_L)(\mS_R\otimes \mS_L)(\mU_R\otimes \mU_L)^T\\
        =& (\mU_R\otimes \mU_L)\left[(\mS_R\mU_R^T)\otimes (\mS_L\mU_L^T)\right]\\
        =& \underbrace{(\mU_R\mS_R\mU_R^T)}_{\mA}\otimes \underbrace{(\mU_L\mS_L\mU_L^T)}_{\mB}
    \end{align*}
    Therefore the loss (\cref{eq: UFE equation}) becomes
    \begin{align*}
        \Fnorm{\mA\mB-\mC\mC}
    \end{align*}
    where $\mC = \E[\vecg\vecg^T]^{\frac{1}{2}}$. 
    This is exactly the formulation used in \cref{thm: optimal shampoo} with $\mR_n^{\frac{1}{2}} = \mU_R\mS_R\mU_R^T$ and $\mL_m^{\frac{1}{2}} = \mU_L\mS_L\mU_L^T$. 

    Thus, by directly utilizing \cref{thm: optimal shampoo}, we can see the optimal solution
    \begin{align*}
        \mU_R\mS_R^2\mU_R^T &= \E[\mG^T\mG]\\
        \mU_L\mS_L^2\mU_L^T &= \E[\mG\mG^T]
    \end{align*}
    Due to the structural assumption of $\mU_R$, $\mU_L$, $\mS_R$, $\mS_L$, their corresponding optimal solution can directly obtained using eigenvalue decomposition. 

    Now, let's prove the optimal $\tilde{\mD}$ with any fixed $\mU_R$, $\mU_L$. This is also straightforward by applying the same technique as \cref{thm: alicec 1 step refinement}. 
    The loss can be written as 
    \begin{align*}
        &\Fnorm{\underbrace{(\mU_R\otimes \mU_L)}_{\Pi}\tilde{\mD}\underbrace{(\mU_R\otimes \mU_L)^T}_{\Pi^T}-\mF}\\
        =& \Fnorm{\Pi\tilde{\mD}\Pi^T} - 2\tr\left(\Pi^T\E[\vecg\vecg^T]\Pi\tilde{\mD}\right) + C
    \end{align*}
    Since it is easy to verify orthonormality of $\Pi$, i.e.$\Pi^T\Pi=\mI$, the above is simplified to
    \begin{align*}
        &\Fnorm{\tilde{\mD}}-2\tr\left(\Pi^T\E[\vecg\vecg^T]\Pi\tilde{\mD}\right)\\
        &=\sum_{i=1}^{mn} D_{ii}^2 -2 \sum_{i=1}^{mn} D_{ii}[\Pi]_{i}^T\E[\vecg\vecg^T][\Pi]_{i}
    \end{align*}
    where $[\Pi]_i$ is the $i^{\text{th}}$ column of matrix $\Pi$. Then, by taking the derivative, the optimal $D_{ii}$:
    \begin{align*}
        D^*_{ii} &= [\Pi]_i^T\E[\vecg\vecg^T][\Pi]_i\\
        =&\E[([\Pi]_i^T\vecg)^2]
    \end{align*}
    Therefore, 
    \begin{align*}
        \diag(\tilde{\mD}^*) &= \E[(\Pi^T\vecg)^2]\\
        =& \E[((\mU_R^T\otimes \mU_L^T)\vecg)^2]\\
        =&\vect((\E[(\mU_L^T\mG\mU_R)\elesquare]))\\
    \end{align*}
    \begin{align*}
        \Rightarrow \tilde{\mD}^* = \diag_M((\E[(\mU_L^T\mG\mU_R)\elesquare]))
    \end{align*}
    
\end{proof}
\section{Further discussion}
\label{app: further discussion}
\subsection{Connections between different structural assumptions}
\label{subapp: connections between structural assumptions}
Here, we will make explicit connections between different structures in terms of their generality.

\paragraph{\gls{alicec} generalizes Adam} Since the structure behind \gls{alicec} is $\diagb(\mU\mD_1\mU^T,\ldots,\mU\mD_n\mU^T)$, when constraining $\mU=\mI_m$, the resulting structural is pure diagonal matrix $\diagb(\mD_1,\ldots,\mD_n)$. This coincide with the pure diagonal structure behind Adam.

\paragraph{\gls{alicec} generalizes $\mS\otimes \mM$} \gls{alicec} not only extends Adam, but also generalizes the structure considered in \cref{thm: generalization to normal and whiten}. Consider setting $\mD_i = S_i\mD$, then we have
\begin{align*}
    \mU\mD_i\mU^T = S_i\underbrace{\mU\mD\mU^T}_{\mM}
\end{align*}
Therefore, $\diagb(\mU\mD_1\mU^T,\ldots, \mU\mD_n\mU^T) = \diagb(S_1\mM,\ldots,S_n\mM)$. Since the structures of normalization and whitening are special cases of \cref{thm: generalization to normal and whiten}, \gls{alicec} generalizes these two gradient operators as a consequence.  

\paragraph{SOAP generalizes \gls{alicec}} We can re-write the structural assumption of \gls{alicec} in the Kronecker product format:
\begin{align*}
    \diagb(\mU\mD_1\mU^T,\ldots, \mU\mD_n\mU^T) = (\mI\otimes \mU)\underbrace{\diagb(\mD_1,\ldots,\mD_n)}_{\tilde{\mD}}(\mI\otimes \mU^T).
\end{align*}
It is clear that this is a special case of SOAP structure by containing $\mU_R=\mI$. 

\paragraph{SOAP generalizes Shampoo} The structure of Shampoo consists of two \gls{spd} matrices $\mR_n$ and $\mL_m$. If we eigen-decompose those, and let $\mR_n=\mU_R\mD_R\mU_R^T$ and $\mL_m=\mU_L\mD_L\mU_L^T$, then the structure of Shampoo can be re-write as 
\begin{align*}
    &\mR_n^{\frac{1}{2}}\otimes \mL_m^{\frac{1}{2}}\\
    =&(\mU_R\sqrt{\mD_R}\mU_R^T)\otimes(\mU_L\sqrt{\mD_L}\mU_L^T)\\
    =& (\mU_R\otimes \mU_L)\underbrace{(\sqrt{\mD_R\otimes \mD_L})}_{\tilde{\mD}} (\mU_R^T\otimes \mU_L^T).
\end{align*}
This coincides with SOAP's structure when the positive diagonal $\tilde{\mD}$ can be decomposed based on Kronecker product. 

\subsection{Memory consumption comparison}
\label{subapp: more thorough memory consumption}
\begin{table}[h]
\centering
Apart from the \cref{tab: summary table} provided in the main paper, we also include the memory consumption of more optimizers. 
\resizebox{\textwidth}{!}{
\begin{tabular}{l|l|llll}
\hline
              & Adam & GaLore & Fira & \gls{alice} & \gls{alicez}\\ \hline
Total         &  $3mn$    &  $mn+2nr+mr$ & $mn+2nr+mr$  &  $mn+2nr+mr+r^2+n$       &   $mn+2nr+mr+n$        \\ \hline
Weight        &   $mn$   &  $mn$   &  $mn$       &  $mn$       &  $mn$      \\
First moment  &  $mn$    &  $nr$   &   $nr$      &     $nr$    &    $nr$     \\
Second moment &   $mn$   &  $nr$  &    $nr$     &   $nr$      &      $nr$    \\
$\mU$ & N/A     &  $mr$   &    $mr$     &   $mr$      &   $mr$      \\
$\mC_t$  &  N/A    &  N/A   &    N/A     &    $n$     &    $n$     \\
$\widetilde{\mQ}$  &  N/A    &  N/A   &   N/A      &    $r^2$    &    N/A     \\
\hline
\end{tabular}}
\caption{The memory consumption of low-rank optimizers. Here, we assume the weight has a shape $m\times n$ with $m<n$ and $r\ll m$. Note that memory consumption $n$ and $r^2$ is typically very small. For example, let's take the largest possible $n=30K$ at the output layer. For 1B LLaMA, we select $r=512$, leading to $r^2\approx 262K$, which is 8x larger than $n$. However, both are marginal compared to $mr = 5120\times512$, which is 10x larger than $r^2$, and 80x larger than $n$.}
\label{tab: memory consumption for low-rank}
\end{table}

\subsection{Comparison to previous \gls{fim} approximation}
\label{subapp: comparison to previous work}
The idea of efficiently approximating the \gls{fim} is not novel, and has been extensively studied in the previous work. KFAC \citep{martens2015optimizing} is a well-known second order optimization algorithm for neural networks based on structural approximation to \gls{fim}. In particular, they explicitly express the \gls{fim} in terms of the gradient w.r.t this layer's output and the input to this layer. Namely, they directly decompose the gradient $\mG$ used in our paper, whereas in our paper, we treat $\mG$ as a whole quantity. In addition, the original KFAC also makes one crucial approximation: $\E[\mA\otimes \mB]\approx \E[\mA]\otimes \E[\mB]$. They do not show theoretically whether this is a good approximation and under what metric, but argue in practice this gives good accuracy. Last but not least, the original KFAC considers the \gls{fim} for entire layers, and each block (i.e.~corresponding to each layer) under their setup is actually the entire \gls{fim} we aim to approximate. To be precise, the principle is to structurally approximate the diagonal block of KFAC without decomposing the gradient $\mG$ using KFAC. 

Another more related work is AdaGrad \citep{duchi2011adaptive}. If we only considers layer-wise gradient, the full AdaGrad matrix is the $\sum_{t=1}^T\vecg_t\vecg_t^T$. Although our principle is to approximate \gls{fim}, in practice, we use \gls{ema} as the approximation of $\E$. Therefore, our principle can also be viwed as a structural approximation to full \gls{ema} AdaGrad matrix. Under this view point, there are some previous work on structural approximations. Shampoo \citep{gupta2018shampoo}, was originally proposed as an approximation to full AdaGrad matrix under the online learning setup. However, they do not explicit show under what metric this approximation is based on, and whether it is optimal or not. Later, there is a follow-up work \citep{morwani2024new}, showing that Shampoo is a 1-step power iteration algorithm of optimal Kronecker product approximation to \gls{fim} under Frobenius norm \citep{koroko2022efficient}. In this paper, we further extend the idea of approximating \gls{fim}/AdaGrad matrix to more efficient structures, under Frobenius norm. 

\subsection{Solutions to general block diagonal structures}
\label{subapp: solution to general block diagonal}
Here, we present the solution of the most general block diagonal approximation to \gls{fim}, and discuss why this is not practical. 

\begin{proposition}[General block diagonal approximation]
    Assume $\Ft=\diag(\mM_1,\ldots, \mM_n)$ with \gls{spd} matrix $\mM_i\in\Rmm$, then minimizing \cref{eq: UFE equation} admits analytic solutions:
    \begin{align}
        \mM_i^* = \E[\vg_i\vg_i^T]
        \label{eq: optimal general block diagonal}
    \end{align}
where $\vg_i$ is the $i^{\text{th}}$ column of $\mG$.
\end{proposition}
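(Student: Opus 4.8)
The plan is to reduce the problem to $n$ decoupled matrix least-squares problems via the block-diagonal simplification lemma (\cref{lemma: block diagonal simplification}), and then solve each one by completing the square in the Frobenius norm. Concretely, applying \cref{lemma: block diagonal simplification} with $\Ft = \diagb(\mM_1,\ldots,\mM_n)$ gives
\begin{equation*}
    \Fnorm{\Ft - \mF} = \sum_{i=1}^n \left( \Fnorm{\mM_i} - 2\tr\!\left(\mM_i^T \E[\vg_i\vg_i^T]\right) \right) + C,
\end{equation*}
where $C$ does not depend on the $\mM_i$'s. Since the objective is a sum of terms each depending on a single $\mM_i$, the minimization decouples and we may optimize each block independently.

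For a fixed index $i$, I would complete the square: writing $\mH_i := \E[\vg_i\vg_i^T]$ (which is symmetric), we have
\begin{equation*}
    \Fnorm{\mM_i - \mH_i} = \Fnorm{\mM_i} - 2\tr\!\left(\mM_i^T \mH_i\right) + \Fnorm{\mH_i},
\end{equation*}
so minimizing $\Fnorm{\mM_i} - 2\tr(\mM_i^T\mH_i)$ over $\mM_i$ is equivalent to minimizing $\Fnorm{\mM_i - \mH_i}$, whose unique minimizer is $\mM_i^* = \mH_i = \E[\vg_i\vg_i^T]$. (Alternatively, one can take the gradient of the per-block objective with respect to $\mM_i$ and set it to zero, as is done in the proof of \cref{prop: adam solution}; this yields the same answer.) Summing over $i$ gives $\Ft^* = \diagb(\E[\vg_1\vg_1^T],\ldots,\E[\vg_n\vg_n^T])$.

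The one point requiring a remark is feasibility with respect to the structural constraint that each $\mM_i$ be \gls{spd}: the minimizer $\E[\vg_i\vg_i^T]$ is automatically positive semidefinite as an (\gls{ema}-)average of rank-one outer products, and it is strictly positive definite whenever the gradient columns span $\R^m$ (equivalently, when the corresponding diagonal block of $\mF$ is nonsingular), so the unconstrained optimum lies in the feasible family (or its closure) and the constrained and unconstrained problems have the same solution. I do not anticipate a genuine obstacle here — the argument is essentially immediate from \cref{lemma: block diagonal simplification} — and indeed the proposition is just the "$\mUf = \mI$-free" special case of the block-diagonal analysis already used for \cref{thm: alicec 1 step refinement}.
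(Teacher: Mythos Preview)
Your proposal is correct and follows essentially the same route as the paper: apply \cref{lemma: block diagonal simplification} to decouple the objective into independent per-block terms, then solve each block (the paper sets the gradient in $\mM_i$ to zero, which is exactly the alternative you mention; your completing-the-square argument is equivalent). Your additional remark on SPD feasibility is a nice touch that the paper omits.
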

\begin{proof}
    This is straightforward by taking the derivative w.r.t. $\mM_i$. By leveraging \cref{lemma: block diagonal simplification}, we have
    \begin{align*}
        &\Fnorm{\Ft-\mF}\\
        =&\sum_{i=1}^n \Fnorm{\mM_i} - 2\tr(\mM_i^T\E[\vg_i\vg_i^T])
    \end{align*}
    Then, taking derivative w.r.t. $\mM_i$, we get 
    \begin{align*}
        \mM_i^* = \E[\vg_i\vg_i^T]
    \end{align*}
\end{proof}
From the above, although it admits analytic solutions, its corresponding practical procedure requires the \gls{ema} to estimate $\E[\vg_i\vg_i^T]\in\Rmm$ for all $i=1,\ldots, n$, leading to expensive memory cost $nm^2$. Another issue is that it does not allow efficient computation of the inverse. From the property of block diagonal matrix, to compute $\Ft^{-\frac{1}{2}}\vecg$, one needs to invert each $\mM_i$, incurring $O(m^3)$ computational cost. In total, the computational cost of inverting matrix $\Ft$ is $O(nm^3)$. Due to both memory and computational constraints, this structural assumption does not lead to practical algorithms.

\subsection{Connections to existing optimizers}
\label{subapp: connection to existing optimizers}
\paragraph{Lars and Lamb}
Lars and Lamb \citep{you2017lars, you2019lamb} relies on the normalization operation of the gradients. The main difference is that Lars proposed to normalize the raw gradient, whereas Lamb normalizes the processed gradient from Adam optimizer. They also involves a scaling parameter that is a function of the weight. Compared to the normalization discussed in \cref{subsec: sve}, the main difference is that we use channel-wise normalization with unit column or row norm, whereas Lars/Lamb uses matrix-wise normalization where the norm of the matrix is regularized to be 1. However, if one vectorizes the matrix weight into a vector, and stacks those vectors into a larger matrix. Then, the normalization step of Lamb and Lars can be viewed as a 1-sample approximation to \gls{fim}\footnote{This \gls{fim} is now the full Fisher across layers, compared to the 1-layer \gls{fim} considered in the paper} under the structure considered in \cref{subsec: sve}. 

\paragraph{Muon}
Muon \citep{jordan2024muon} performs the whitening operation on the momentum. \Cref{subapp: Muon} gives a brief introduction. In fact, Muon can be viewed as a special case of \cref{eq: generalization whitening} in \cref{coro: generalization to whitening and normalization} by considering the following approximation:
\begin{equation}
    \E[\mG\mG^T] \approx \E[\mG]\E[\mG^T].
\end{equation}
Thus, the resulting operation becomes the whitening of the $\E[\mG]$, which is estimated by the momentum in practice. There exists one difference: Muon omits the $\frac{1}{n}$ scalar, which serves as a layer-wise effective learning rate. 

\paragraph{SWAN}
SWAN composes the gradient normalization and whitening operations as the replacement of Adam's first and second moments. Each individual operations can be viewed as a special case of \cref{coro: generalization to whitening and normalization}. However, \cref{subsec: sve} does not provide an explanation for composing these two operators. Namely, the whitening operator is estimated using normalized gradients, rather than the raw gradient. We will leave the investigation of operator composition for future work. 

\paragraph{Adapprox}
Adapprox \citep{zhao2024adapprox} uses low-rank approximation for Adam's second moments through randomized \gls{svd} to boost the memory efficiency. However, its performance will be similar to Adam. In fact, \Cref{prop: two sided scaling} of \gls{ssgd} proves that the converged $\vs$ and $\vq$ represents the right and left singular vectors, coinciding with the rank-1 reconstruction. However, the proposed \gls{ssgd} is different from rank-1 Adapprox in three main aspects: (1) Adapprox proposes to use low-rank \gls{ema} tracking on $\mG\elesquare$, whereas we use \gls{ema} directly on scaling vectors $\vs$, $\vq$. The resulting vectors are no longer singular vectors; (2) we do not have separate scaling apart from norm-growth limiter and user-defined scale, whereas Adapprox uses the scaling from Adafactor; (3) \gls{ssgd} do not use any first moment. Specifically, (1) is an important difference, since for any low-rank reconstruction with rank $r>1$, it is not guaranteed to be positive, causing numerical issue during square-root inverse scaling. Adapprox adds a manually defined offset to ensure positivity. On the other hand, \gls{ssgd} is guaranteed to have positive $\vs$, $\vq$ at each step from Perron-Frobenius theorem. Therefore, \gls{ssgd} is numerically stable. 

\paragraph{Adafactor}
Adafactor \citep{shazeer2018adafactor} is another optimizer that uses rank-1 approximation to Adam's second moment. However, their scaling is derived by minimizing a different norm, compared to Frobenius norm in this paper. Adapprox \cite{zhao2024adapprox} has demonstrated the advantages of using Frobenius norm compared to Adafactor with a slightly better performance on LLM training. 

\paragraph{AdaDiag and one-sided SOAP}
We acknowledge that there exists two concurrent work: AdaDiag \citep{anonymous2024improving} and one-sided SOAP \citep{vyas2024soap}, that are mathematically equivalent to \gls{alicec}. They are derived based on distinct view points. AdaDiag is based on the intuition to transform the gradient covariance matrix so that it is diagonalizable, where this diagonal matrix is estimated using Adam's second moments. One-sided SOAP, a memory-efficient version of SOAP, is proposed to based on intuitions that only one-sided eigenspace is enough for LLM training. On the other hand, \gls{alicec} is derived on the basis of the structured \gls{fim} view point. providing a deeper connections to optimizers considered in this paper.

\paragraph{Apollo}
There is one concurrent work, Apollo \citep{zhu2024apollo}, that is also based on scaling the raw stochastic gradients for memory-efficient LLM training. It proposed to scale columns \textbf{or} rows of the $\mG$ through a scaling matrix estimated by similar procedure as Fira \cite{chen2024fira}. The main idea is that column or row norm after scaling matches the column or row norm of the gradient from GaLore update. Thus, they require a GaLore procedure to compute this update at each step. Our proposed \gls{ssgd} scales both columns and rows at the same time. The main differences are: (1) the scaling estimation in \gls{ssgd} is different from Apollo; (2) the scaling scheme of \gls{ssgd} is inspired by the generalization of normalization, providing theoretical support. They enjoy a similar memory consumption (i.e.~$mn+2n+2$ of Apollo compared to $mn+m+n+1$ of \gls{ssgd}). 

\paragraph{Fira}
Fira \cite{chen2024fira} was proposed as an improvement towards GaLore, which modifies the low-rank update to full-rank one by adding a compensation term. Their idea is similar to the compensation used in \gls{alice}. The main differences is that our proposed compensation is inspired by approximating \gls{fim} and has the theoretical foundation on its optimality. Also, the compensation strategy is different to Fira. We have conduced the ablation study in \cref{sec: experiments} to show the advantage of our approach. 

\paragraph{GaLore}
Comparing the \gls{alice} procedure to GaLore (\cref{alg: GaLore optimizer}), we can clearly see that GaLore is a special case of \gls{alice} by disabling tracking, switching and compensation. From the connection of \gls{alice} to \gls{alicec}, GaLore is a simple low-rank extension of \gls{alicec}, a more general optimizer than Adam. Therefore, the \gls{fim} view point reveals that GaLore is inherently a different optimizer compared to Adam, despite its similarity to Adam's update. This also provides an explanation on why GaLore can sometimes outperforms Adam under certain scenarios.

\subsection{Discussion of low-rank extension framework}
\label{subapp: discussion low-rank}
First, we derive how to decompose the full-rank update of \gls{alicec} into low-rank update and its residuals in the main text. We assume $\tilde{\mU}=[\mU,\mU_c]$, where $\mU$, $\mU_c$ are defined in \cref{subsec: compensation}.
\begin{align*}
    \devect(\Ft^{-\frac{1}{2}}\vecg) =& \tilde{\mU}\frac{\tilde{\mU}^T\mG}{\sqrt{\E[(\tilde{\mU}^T\mG)\elesquare]}}\\
    =& [\mU,\mU_c] \frac{\left[\begin{array}{c}
         \mU^T  \\
          \mU_c^T
    \end{array}\right]\mG}{\sqrt{\E\left[\left(\left[\begin{array}{c}
         \mU^T  \\
          \mU_c^T
    \end{array}\right]\mG\right)\elesquare\right]}}\\
    =& [\mU,\mU_c]\frac{\left[\begin{array}{c}
         \mU^T\mG  \\
         \mU_c^T\mG 
    \end{array}\right]}{\sqrt{\E\left[\left(\left[\begin{array}{c}
         \mU^T\mG  \\
         \mU_c^T\mG
    \end{array}\right]\right)\elesquare\right]}}\\
    =&[\mU,\mU_c]\left[\begin{array}{c}
           \frac{\mU^T\mG}{\sqrt{\E[(\mU^T\mG)\elesquare]}}\\
         \frac{\mU_c^T\mG}{\sqrt{\E[(\mU_c^T\mG)\elesquare]}}
    \end{array}\right]\\
    =& \mU\frac{\mU^T\mG}{\sqrt{\E[(\mU^T\mG)\elesquare]}} + \mU_c\frac{\mU_c^T\mG}{\sqrt{\E[(\mU_c^T\mG)\elesquare]}}
\end{align*}

\paragraph{Moore-Penrose inverse} In \cref{eq: alice update decomposition}, we define $\Ft^{-\frac{1}{2}}$ using the pseudo-inverse since each block in $\Ft_c$ is low-rank and not invertible. To be precise, for any block $\mU_c\mD_{ci}\mU_c^T$, its pseudo-inverse can be easily verified as $\mU_c\mD_{ci}^{-1}\mU_c^T$ by checking the definition. Similar to typical inverse, for any block diagonal $\Ft_c$, the pseudo-inverse is equivalent to applying pseudo-inverse of each blocks. The square-root pseudo inverse can be defined through a similar way. 

Similarly, for the proposed compensation, we can easily verify its vectorized format (ignoring the subscript $t$ for simplicity)
\begin{align*}
    \vect(\mU_c\mU_c^T\mG\mS) =& (\mS\otimes \mU_c\mU_c^T)\vecg\\
    =& (\mS^{-2}\otimes \mU_c\mU_c^T)^{-\frac{1}{2}}\vecg
\end{align*}
where the $^{-\frac{1}{2}}$ is the pseudo-inverse as the above. The second inequality can be easily verified by the following facts:
(1) the square root pseudo inverse of $\mU_c\mU_c^T$ is itself; (2) the square root pseudo-inverse of block-diagonal matrix is the pseudo-inverse of each individual block. 
\section{Experiment details}
\label{app: experiment details}
In this section, we will include the detailed setup, hyperparameters and additional experiment results. 

\subsection{Implementation details of baselines}
\label{subapp: baseline implementation}
\paragraph{GaLore} We leveraged the official GaLore package released by the author (\url{https://github.com/jiaweizzhao/GaLore}). 
\paragraph{Fira} Since the main difference compared to GaLore is the additional compensation term, we follow the implementation of the official Fira repo (\url{https://github.com/xichen-fy/Fira}) and add the compensation to GaLore. 

\subsection{Experiment setup for pretraining LLaMA}
\label{subapp: pretrain experiment setup}
For all model parameters, optimizer states, we use BF16 format.
We use context length of $256$ and batch size of $128$ with $4$ gradient accumulations, apart from $60$M and $1.3$B (batch size $256$ with $2$ gradient accumulations). \gls{ssgd} and \gls{alice} are applied to all linear modules of attention and MLPs, others are optimized with Adam.
We use the first $10\%$ of the total training steps as the warm-up, followed by a cosine learning rate decay to $10\%$ of the original learning rate. For hyperparameters, we perform a grid search on our proposed \gls{ssgd}, \gls{alice} as well as GaLore, Fira and full-rank Adam. For other methods, we directly cite their results in \citep{zhu2024apollo}. For Adam, we use $0.9$ and $0.999$ for $\beta_1$ and $\beta_2$, and enable the bias correction without weight decay. \Cref{tab:galore fira hyperparameter} summarizes the hyperparameters used for both GaLore and Fira. \Cref{tab:hyperparameters for adam} summarizes the hyperparameters for Adam optimizer. \Cref{tab:hyperparameters for ssgd} and \cref{tab: alice hyperparameters} summarizes the hyperparameters used for \gls{ssgd} and \gls{alice}, respectively. In addition, for all methods with Fira limiter, we use threshold $\gamma=1.01$ as suggested in \cite{chen2024fira}. For \gls{ssgd}, we use Adam to train the last layer of LLaMA, following the same setup as Apollo for fair comparison. In summary, \gls{alice}, GaLore and Fira do not use Adam to train the last layer to fully test the capabilities of low-rank methods, whereas Apollo, \gls{ssgd} and Adam utilize Adam for last layer. 
The total training steps for $60$M, $130$M, $350$M and $1.3$B are $10$K, $20$K, $60$K and $100$K, respectively. These correspond to $1.1$B, $2.6$B, $7.8$B and $13.1$B training tokens, summarized in \cref{tab: model architecture}. All experiments are conduced on NVIDIA A100 GPUs.

\begin{table}[ht]
\centering
\begin{minipage}{0.45\textwidth}
    \centering
    \caption{The hyperparameters for GaLore and Fira}
    \label{tab:galore fira hyperparameter}
    \resizebox{\textwidth}{!}{
    \begin{tabular}{l|llll}
    \hline
         & learning rate & update scale & rank & update interval \\ \hline
    60M  & 0.02          & 0.3          & 128  & 200          \\
    130M & 0.02          & 0.3          & 256  & 200          \\
    350M & 0.02          & 0.3          & 256  & 200          \\
    1.3B & 0.01          & 0.25         & 512  & 200          \\ \hline
    \end{tabular}}
\end{minipage}
\hfill
\begin{minipage}{0.45\textwidth}
    \centering
    \caption{The hyperparameters used for Adam optimizer.}
    \label{tab:hyperparameters for adam}
    \resizebox{\textwidth}{!}{
    \begin{tabular}{l|llll}
    \hline
         & learning rate & $\beta_1$ & $\beta_2$ & correct bias \\ \hline
    60M  & 0.001         & 0.9       & 0.999     & True         \\
    130M & 0.001         & 0.9       & 0.999     & True         \\
    350M & 0.001         & 0.9       & 0.999     & True         \\
    1.3B & $7\times 10^{-4}$ & 0.9   & 0.999     & True         \\ \hline
    \end{tabular}}
\end{minipage}
\end{table}

\begin{table}[ht]
\begin{minipage}{0.3\textwidth}
    \centering
    \caption{The hyperparameters for \gls{ssgd}.}
    \label{tab:hyperparameters for ssgd}
    \resizebox{\textwidth}{!}{
    \begin{tabular}{l|lll}
    \hline
         & learning rate & $\beta$ & scale $\alpha$  \\ \hline
    60M  & 0.02         & 0.9       & 0.05             \\
    130M & 0.02       & 0.9       & 0.05              \\
    350M & 0.02         & 0.9       & 0.05             \\
    1.3B & 0.02 & 0.9   & 0.02           \\ \hline
    \end{tabular}}
\end{minipage}\hfill
\begin{minipage}{0.6\textwidth}
    \centering
\caption{Model architectures and training steps}
\label{tab: model architecture}
\resizebox{\textwidth}{!}{
\begin{tabular}{l|llllll}
\hline
     & Hidden & Intermediate & Heads & Layers & Steps & Data amount \\ \hline
60M  & 512    & 1376         & 8     & 8      & 10K   & 1.3B        \\
130M & 768    & 2048         & 12    & 12     & 20K   & 2.6B        \\
350M & 1024   & 2736         & 16    & 24     & 60K   & 7.8B        \\
1.3B & 4096   & 5461         & 24    & 32     & 100K  & 13.1B       \\ \hline
\end{tabular}}
\end{minipage}
\end{table}

\begin{table}[]
\centering
\caption{The hyperparmeters for \gls{alice} optimizer}
\label{tab: alice hyperparameters}
\resizebox{\textwidth}{!}{
\begin{tabular}{l|lllllllll}
\hline
     & learning rate & scale $\alpha$ & compensation scale $\alpha_c$ & $\beta_1$ & $\beta_2$ & $\beta_3$ & update interval $K$ & rank $r$ & leading basis number $l$ \\ \hline
60M  & 0.02          & 0.3            & 0.4                           & 0.9      & 0.9      & 0.999    & 200                 & 128      & 40                       \\
130M & 0.02          & 0.3            & 0.4                           & 0.9      & 0.9      & 0.999    & 200                 & 256      & 40                       \\
350M & 0.02          & 0.3            & 0.4                           & 0.9      & 0.9      & 0.999    & 200                 & 256      & 40                       \\
1.3B & 0.02          & 0.25           & 0.2                          & 0.9      & 0.9      & 0.999    & 200                 & 512      & 160                      \\ \hline
\end{tabular}}
\end{table}

\subsection{Setup of 1B v.s. 7B}
\label{subapp: setup 1B vs 7B}
We mainly follow the setup as \citet{zhu2024apollo}. For 1B model, we use 16 per-device batch size with 8xA100s and 4 gradient accumulations, which is 512 batch size in total, same as 7B model. For \gls{alice}, we use learning rate $0.02$, scale $\alpha=0.3$ and compensation scale $\alpha_c=0.2$ with rank $r=512$, leading basis number $l=160$. We also use full-rank Adam to train the last layer for better performance. For \gls{ssgd}, we follow the same setup of 1.3B as reported in \cref{tab:hyperparameters for ssgd}.
For memory estimation, we assume the use of BF16 format. 
For 8-bit optimizers, we assume weights are stored in BF16, but optimizer states use FP8. GaLore uses $r=1024$ for 7B model. 

\subsection{Memory estimation}
Following the setup of \cite{zhao2024galore}, we provide the estimated GPU memory for each optimizers due to the difficulty of directly measuring their practical memory consumption without considering the activations, internal storage for gradient, etc. We assume BF16 format, which consuming 2 Bytes per element. For example, 1273M parameters are optimized by \gls{alice} and 65M parameters are optimized by Adam for 1B model with rank $512$. Therefore, \gls{alice} part will consume $3.86$GB and Adam will consume $0.37$GB, summing to $4.42$GB for \gls{alice} optimizer.

We also report the actual memory footprint with BF16 format. We use token batch size of $25$, following the same setup as \cite{zhao2024galore}. "-layerwise" represents layer-wise training where we only store the gradient of the current back-propagated layer. 

\subsection{Throughput estimation}
\label{subapp: exp details throughput}
We report both the actual throughput and effective throughput. The effective throughput of a target optimizer compared to a reference optimizer is defined as the ratio of training tokens consumed from the target optimizer w.r.t total time consumption of reference optimizer when reaching the same evaluation loss of the reference optimizer at the end of training. Compared to the standard throughput, this considers the speed-up effect. 

\subsection{Additional pretrain results}
\label{subapp: additional pretrain results}
\Cref{fig: additional pretrain curve} presents additional training curves with 60M, 130M and 350M models sizes. For all cases, the proposed \gls{alice} and \gls{ssgd} outperforms the baselines with noticable margin, and achieves clear speed-ups compared to full-rank Adam.
\begin{figure}
\subfigure[]{
    \centering
    \includegraphics[scale=0.24]{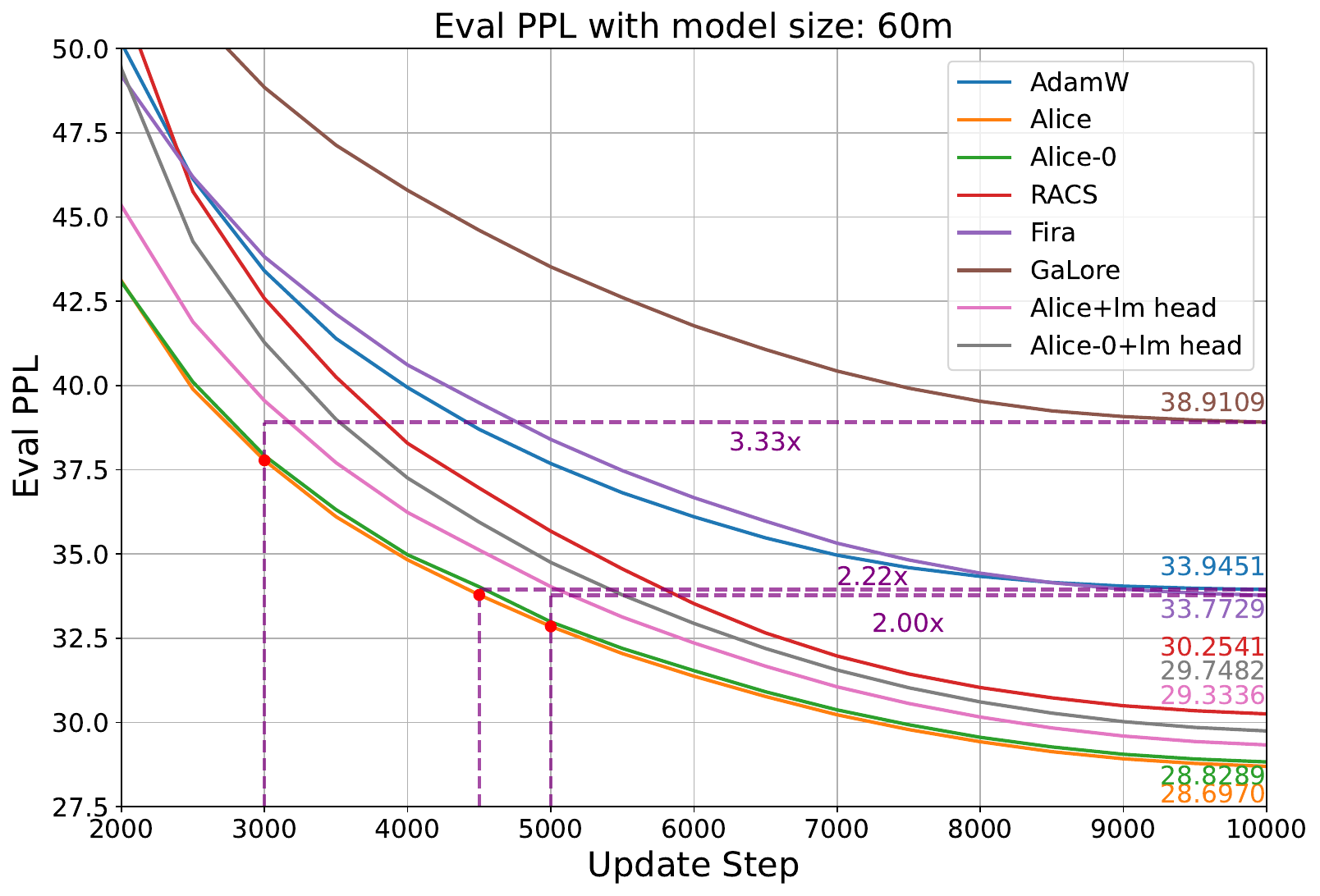}
    \label{fig:60M llama pretrain curve}
}\hfill
\subfigure[]{
    \centering
    \includegraphics[scale=0.24]{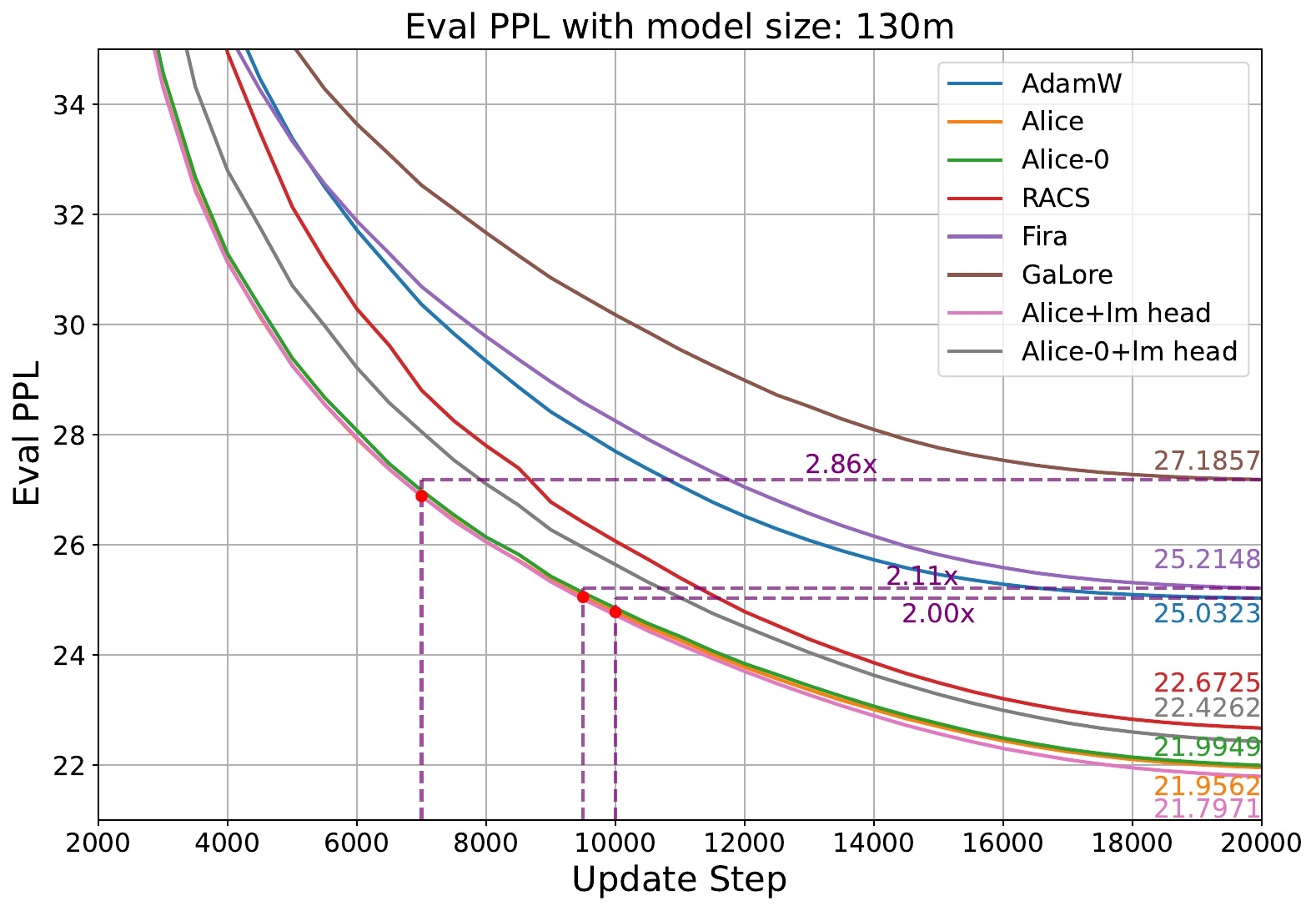}
    \label{fig:130M llama pretrain curve}
}\\
\centering
\subfigure[]{
    \centering
    \includegraphics[scale=0.24]{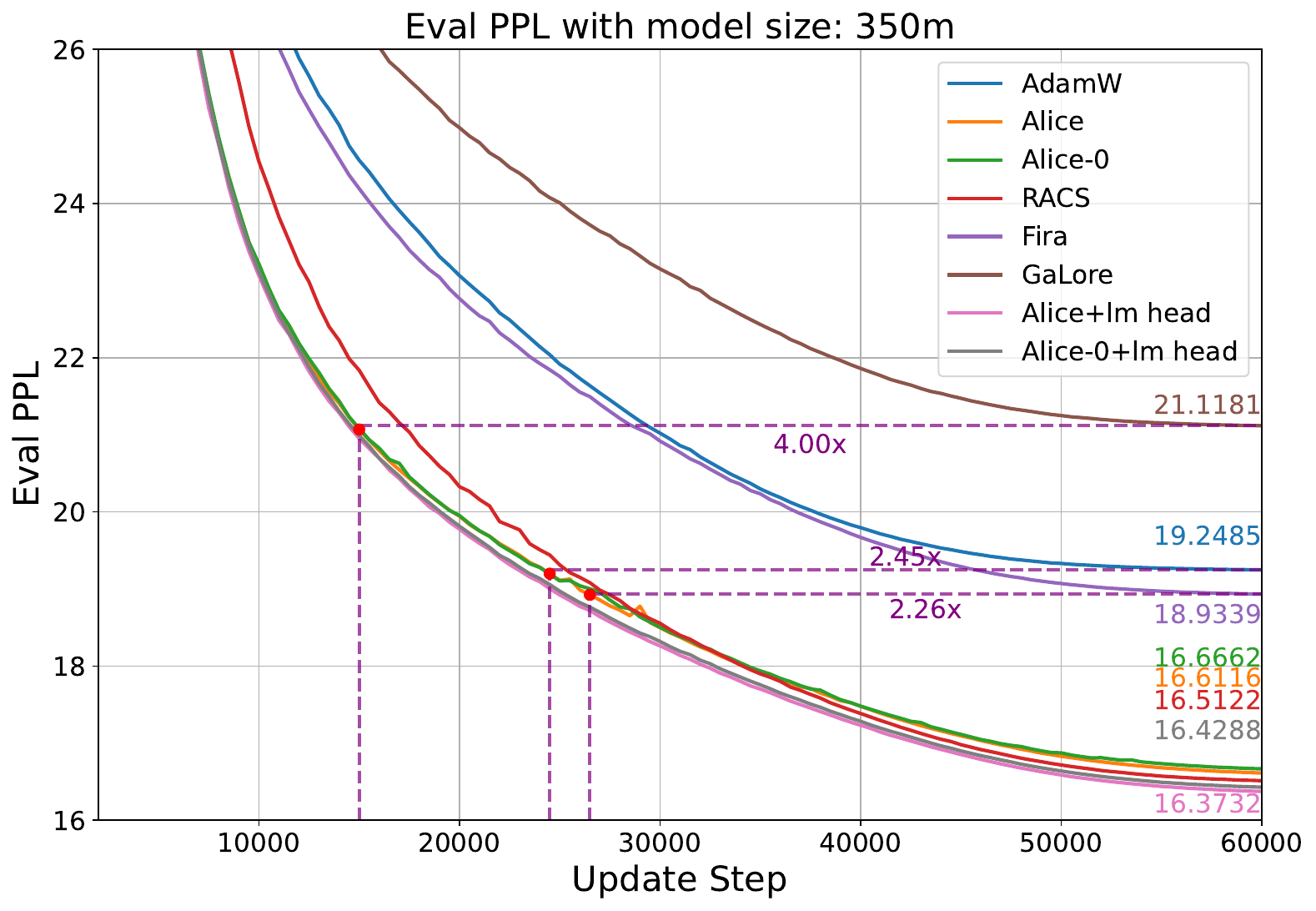}
    \label{fig:350M llama pretrain curve}
}
\caption{Additional LLaMA C4 pretrain performance curve. (a), (b) and (c) represents the 60M, 130M and 350M, respectively. "+lm head" represents that the last layer of LLaMA is trained by full-rank Adam.}
    \label{fig: additional pretrain curve}
\end{figure}

\begin{figure}
\centering
\subfigure[Absolute throughput]{
    \centering
    \includegraphics[scale=0.5]{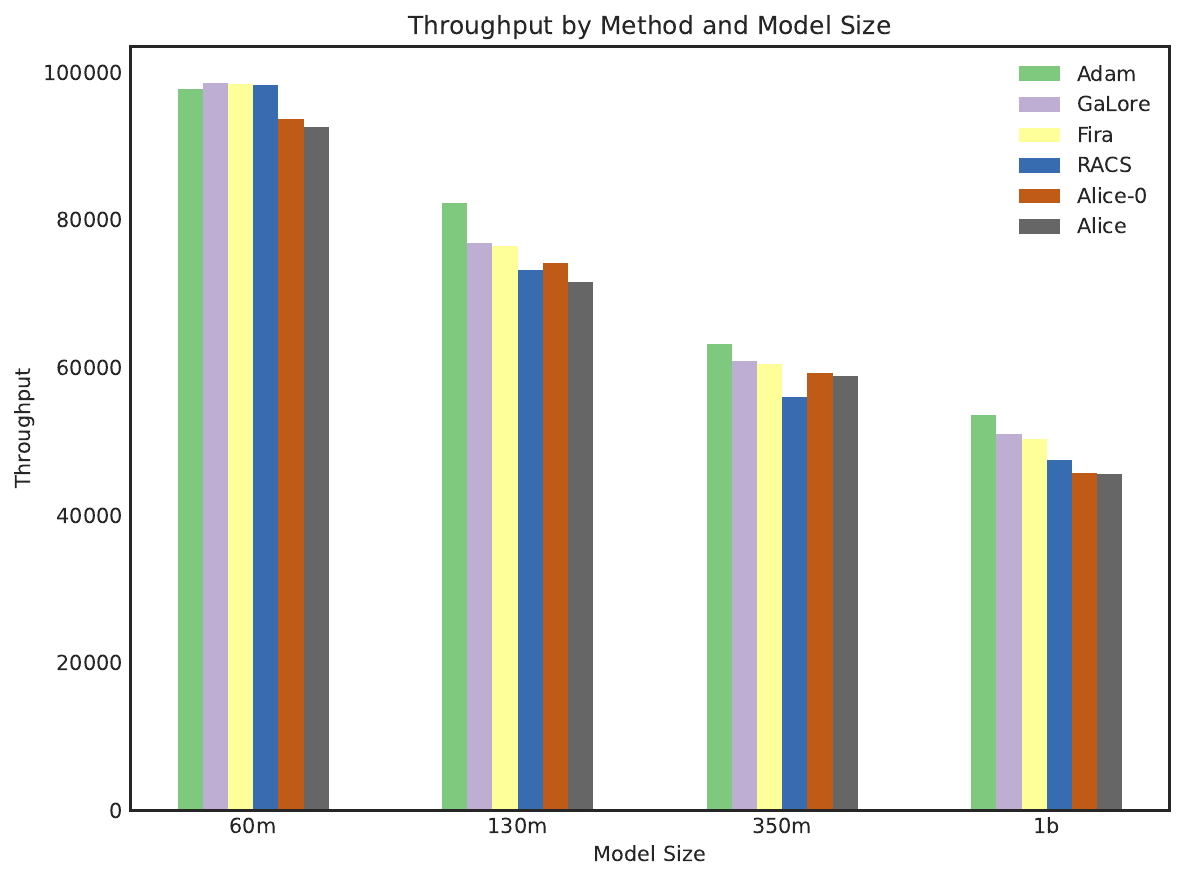}
    \label{fig:actual throughput}
}\\
\subfigure[Effective throughput]{
    \centering
    \includegraphics[scale=0.5]{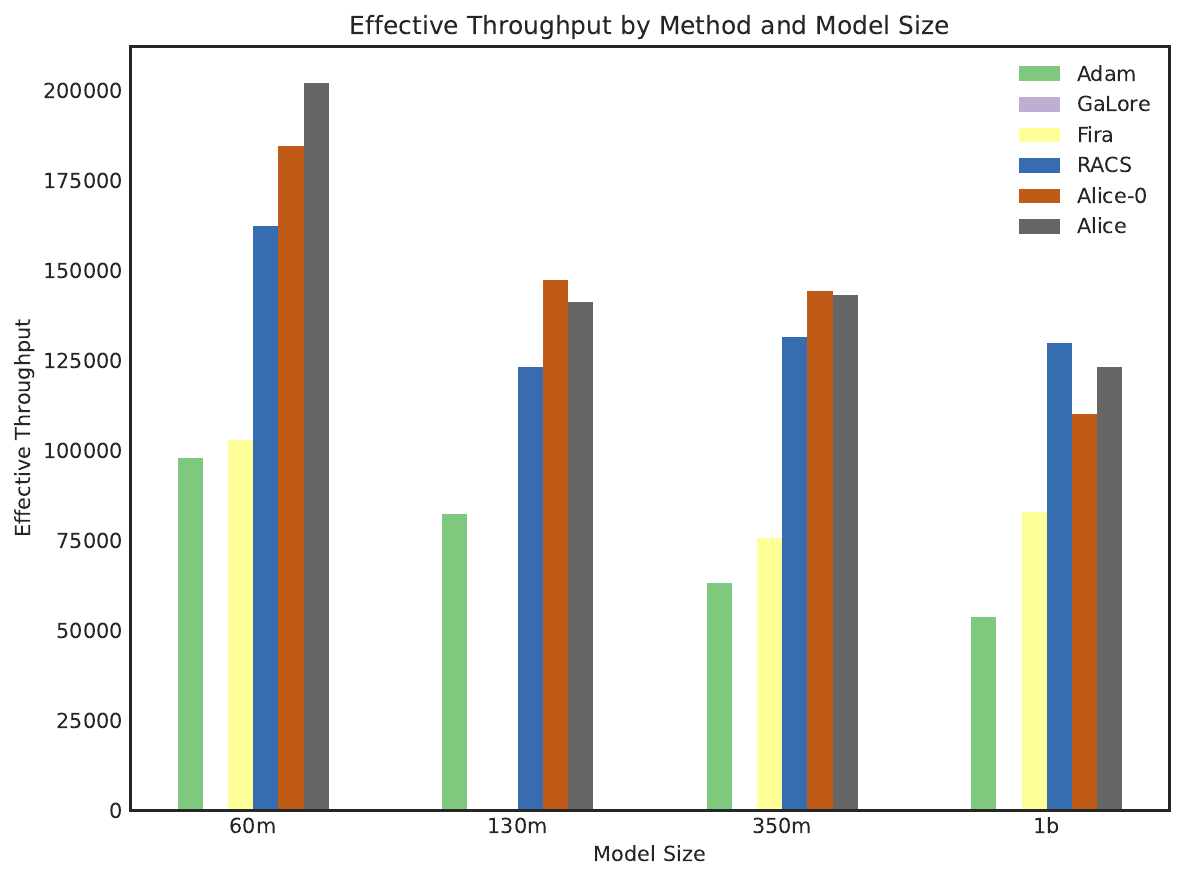}
    \label{fig:effective throughput}}
\caption{Throughput of various methods. (a) this reports the absolute throughput, representing the number of training token processed per second. (b) the effective throughput using Adam as the reference optimizer. This represents the absolute throughput adjusted by the speed-up factor. The effective throughput of GaLore and Fira is $0$ for some model sizes since they under-perform the Adam. }
    \label{fig: throughput plot}
\end{figure}

\begin{figure}
    \centering
    \includegraphics[scale=0.5]{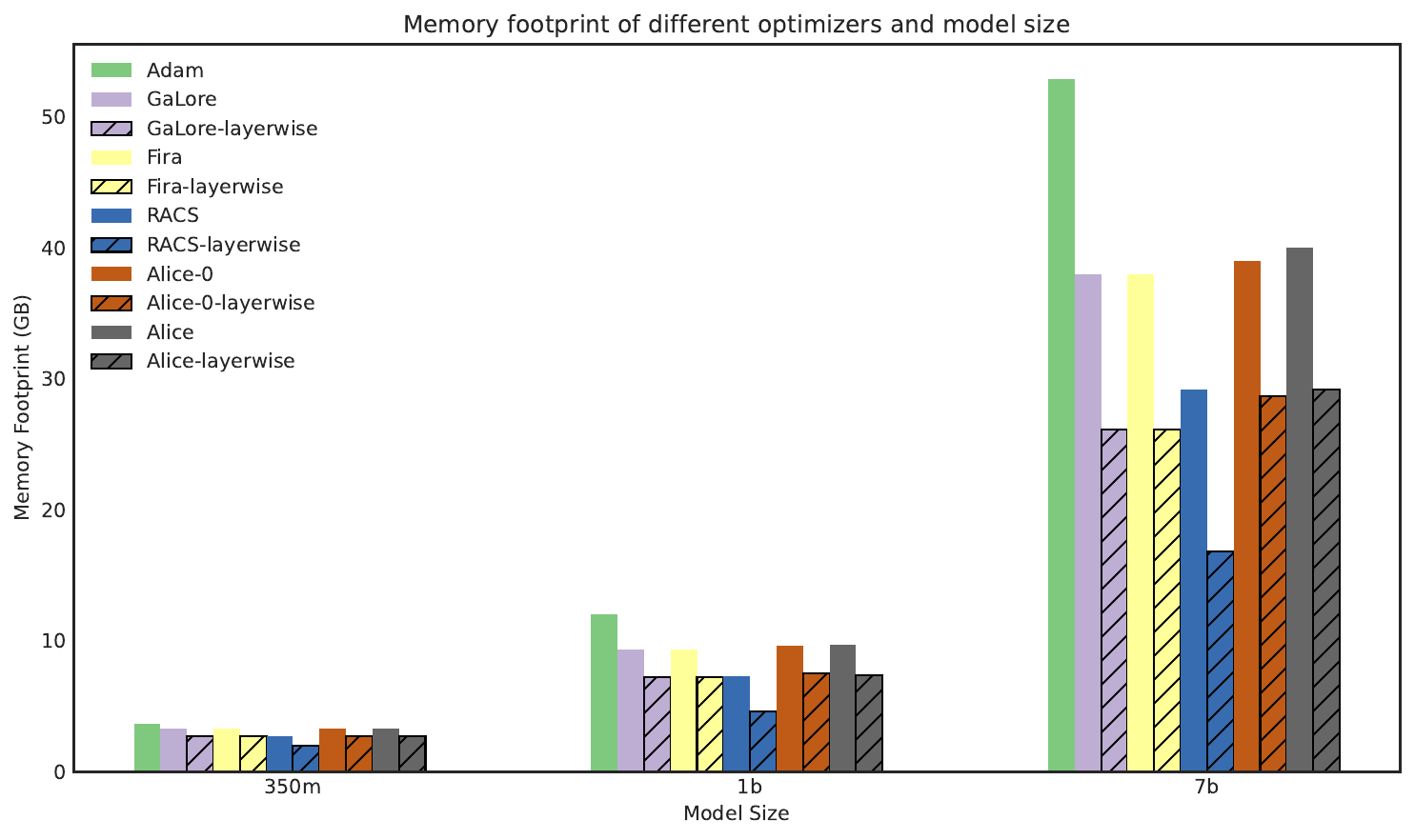}
    \caption{The memory footprint of various optimizers. We use token batch size of $256$ following the same setup as \cite{zhao2024galore} under BF16 format. The suffix "layerwise" represents the memory consumption when enabling layerwise training so that only the gradient of the current layer is stored. }
    \label{fig:actual memory footprint}
\end{figure}

\subsection{Results for ablation studies}
\label{subapp: ablation results}
For all ablations, we consider 130M LLaMA model. Apart from the ablation on the last layer, we assume the last layer is trained by the candidate optimizer, rather than the full-rank Adam for thorough evaluation. 

\paragraph{Setup: ablation of tracking} We disable the compensation step, and verify the effect of low-rank tracking under our proposed switch strategy or purely replying on \gls{evd} (i.e.~no switching). The other hyperparameters follow the pre-training setup as \cref{subapp: pretrain experiment setup}. 

\paragraph{Setup: switch strategy}
For this setup, we enable low-rank tracking. Apart from Gaussian, we use $40$ as the leading basis number. The Gaussian distribution is a standard isotropic Gaussian with zero mean and unit variance. To make sure the norm of sampled Gaussian vectors are consistent with the real basis, we normalize those sampled vectors to have a unit $l_2$ norm. The same operation is also applied to Gaussian-mix. 

\paragraph{Setup: compensation strategy}
We enable the tracking and switching, but with different compensation terms. For Fira, we directly leverage the compensation proposed in \cite{chen2024fira}. Fira+ modifies original Fira by the following two steps: (1) rescale the Fira compensation to have the same $l_2$ norm as the \gls{alice} low-rank update (i.e.~first term in \cref{eq: alice update decomposition}); (2) multiply this compensation by a scale parameter like $\alpha_c$ in \cref{alg: alice optimizer}. We found this empirical trick boosts the performance of Fira.

\paragraph{Effects of last layer}
One crucial setup difference during evaluation for low-rank methods is whether the last layer is trained by full-rank Adam or not. Most previous work train the last layer, arguably one of the most important layer \citep{zhao2024deconstructing}, using full-rank Adam. This effectively reduces the performance gap compared to full-rank method, and does not reveal their true capabilities. We investigate the effect of the last layer to \gls{alice} compared to GaLore and Fira. From the \cref{tab: pretrain performance}, we can see that for all model sizes, GaLore and Fira are greatly affected by this effect. Training last layer with full-rank Adam will boost their performance significantly. On the other hand, \gls{alice} is less impacted with marginally worse performance. For example, \cref{fig: effect of last layer} shows the training curve comparison with 130M model. When the rank is sufficiently large (e.g.~rank 128 is sufficient large for 60M model), \cref{fig:60M llama pretrain curve} shows that using Adam to train the last layer even decreases the performance. 
These serve as evidences that \gls{alice} is a better optimizer than GaLore and Fira, and has the potential to surpass full-rank Adam with large enough rank.  

\paragraph{Effect of \gls{ema} in \gls{ssgd}}
The only internal states inside \gls{ssgd} are the \gls{ema} tracking states of two vectors $\vs$ and $\vq$. We investigate the importance of \gls{ema} scheme. From \Cref{fig: effect of ema}, \gls{ssgd} without \gls{ema} performs much worse than \gls{ssgd}, suggesting the \gls{ema} is necessary for satisfactory performance of \gls{ssgd}.

\begin{figure}
\subfigure[Effect of tracking]{
    \centering
    \includegraphics[scale=0.24]{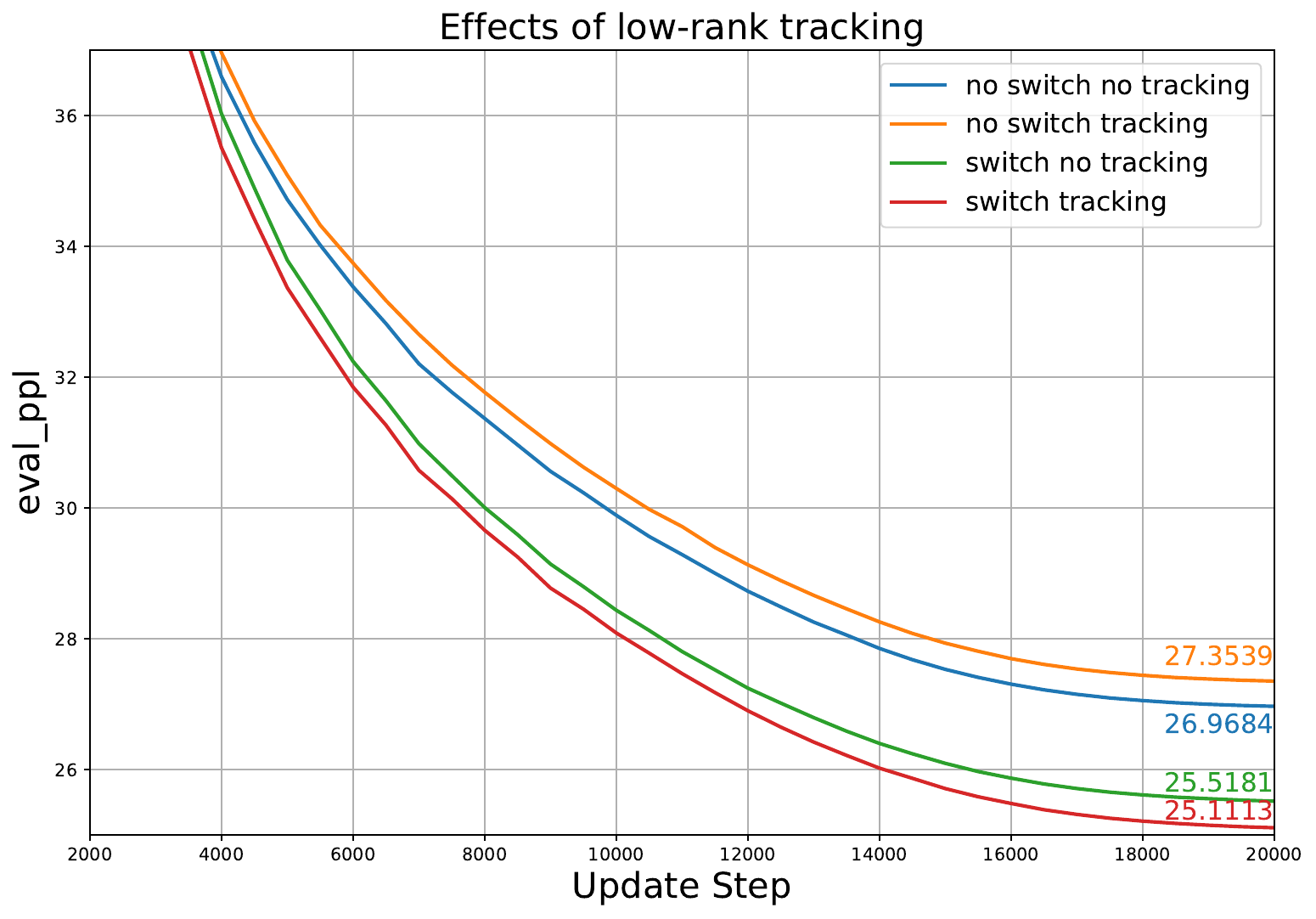}
    \label{fig: effect of tracking}}\hfill
\subfigure[Effect of switching]{
    \centering
    \includegraphics[scale=0.24]{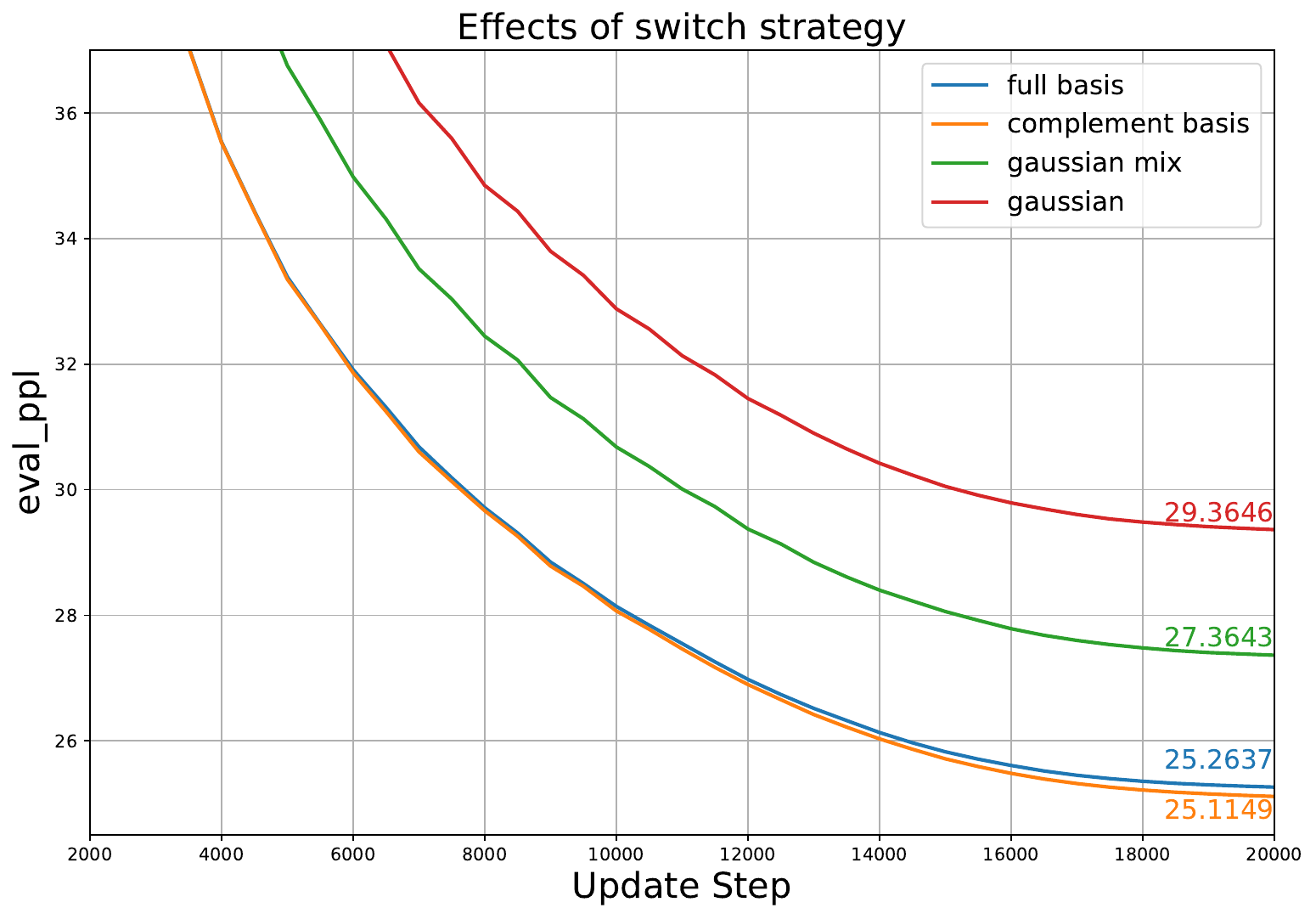}
    \label{fig: effect of switch strategy}}\\
\subfigure[Effect of compensation]{
    \centering
    \includegraphics[scale=0.24]{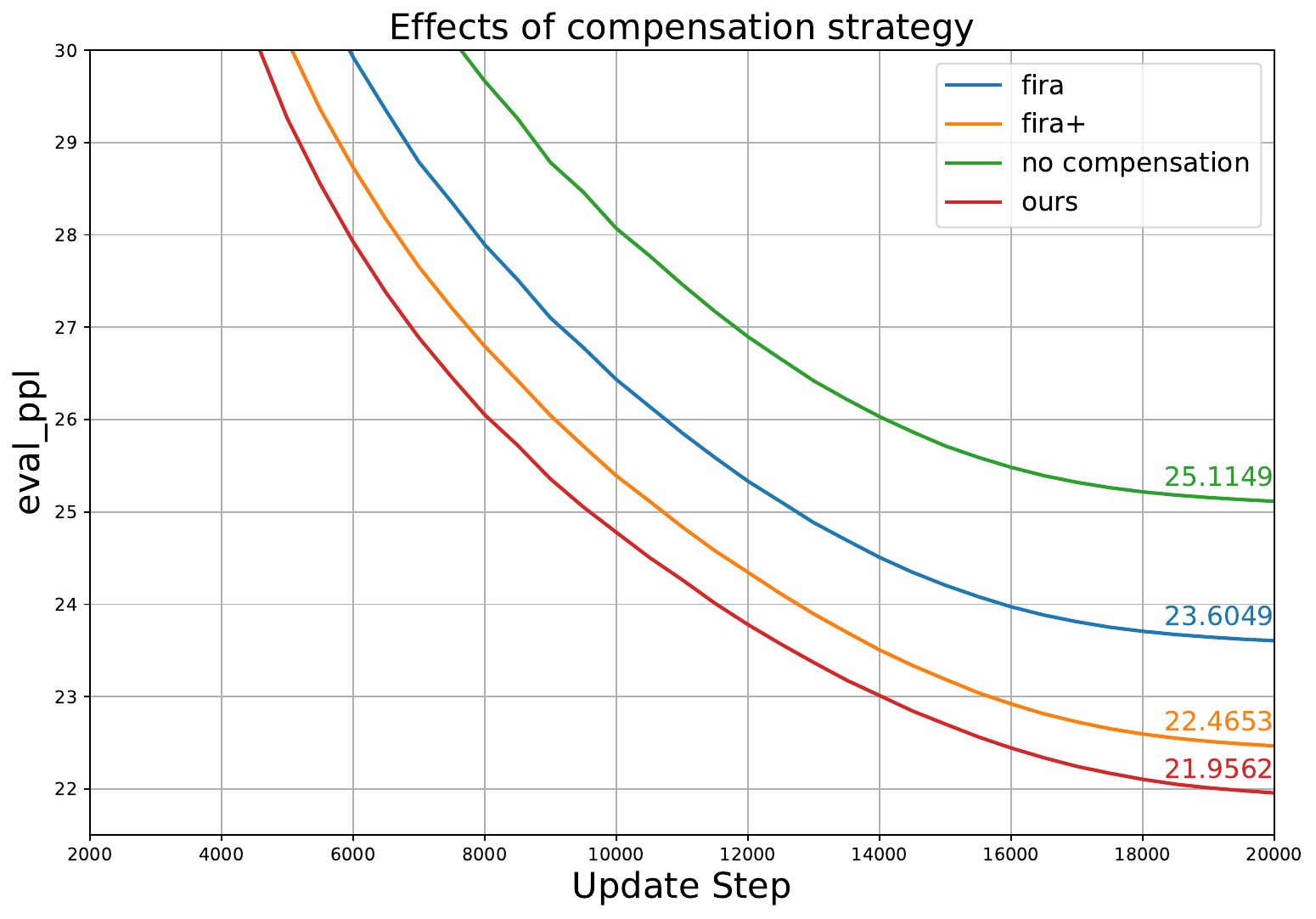}
    \label{fig: effect of compensation}}\hfill
\subfigure[Effect of last layer. "+lm head" represents the last layer is trained by full-rank Adam.]{
    \centering
    \includegraphics[scale=0.24]{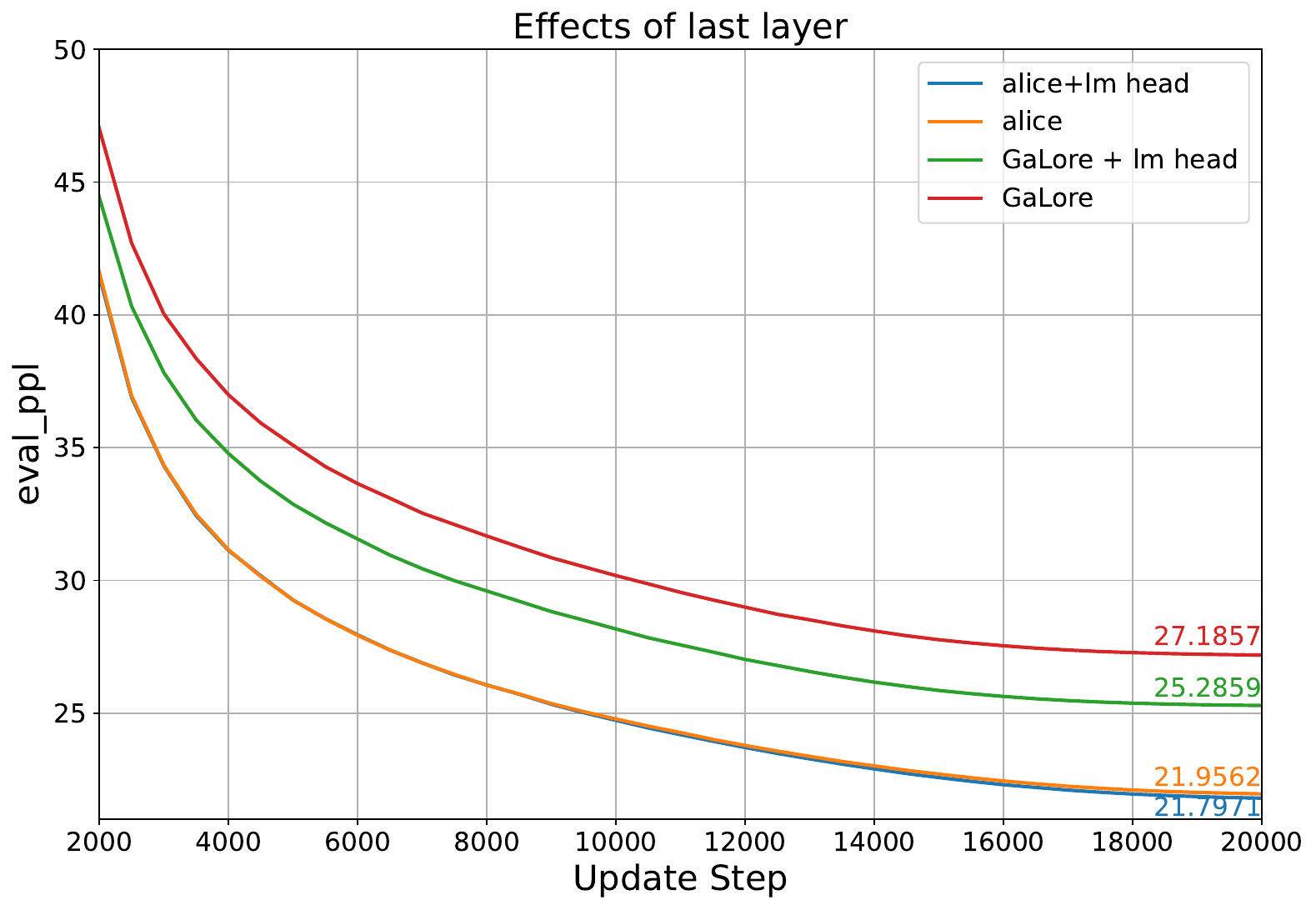}
    \label{fig: effect of last layer}}\\
\centering
\subfigure[Effect of \gls{ema} in \gls{ssgd}.]{
    \centering
    \includegraphics[scale=0.24]{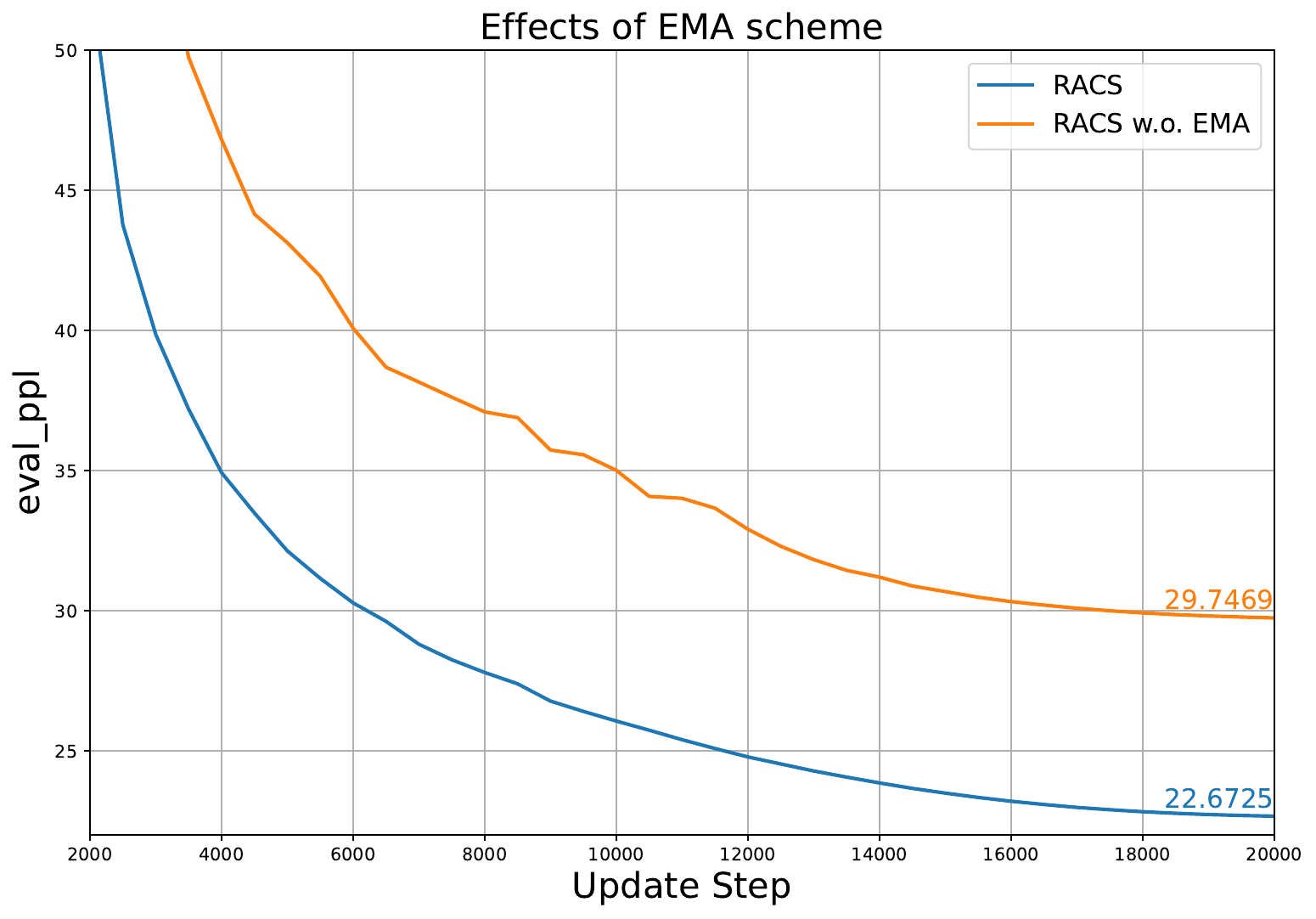}
    \label{fig: effect of ema}}
\caption{The pre-training curve to verify the effectiveness of the design choice. We consider 130M model size.}
\end{figure}

\begin{figure}
\subfigure[Index 1]{
    \centering
    \includegraphics[scale=0.45]{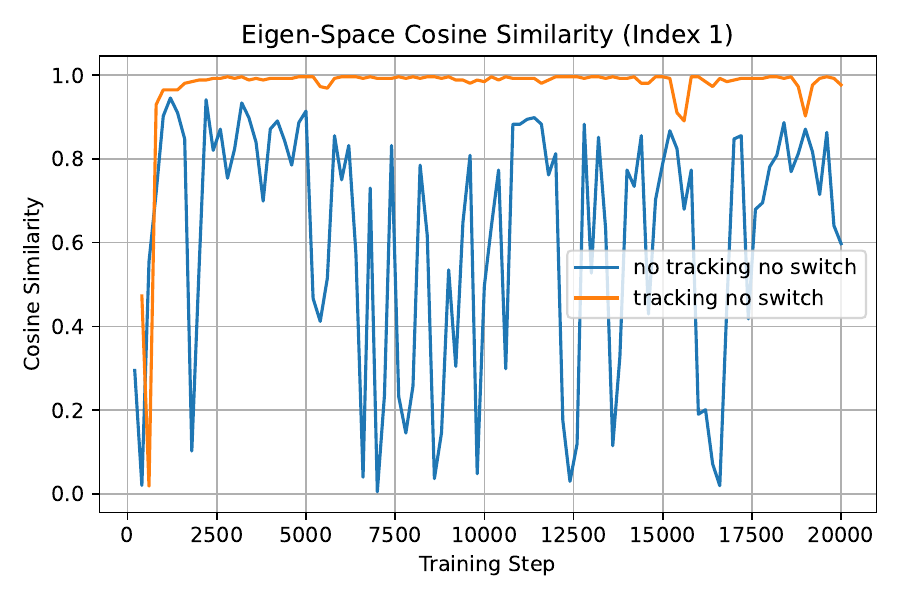}
    \label{fig:cossim index 1}}\hfill
\subfigure[Index 2]{
\centering
    \includegraphics[scale=0.45]{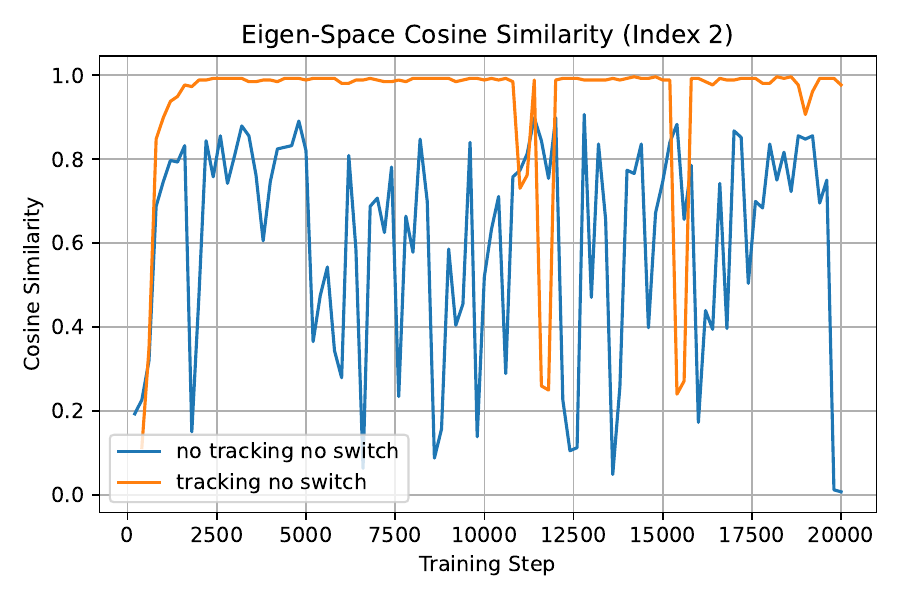}
    \label{fig:cossim index 2}
}\\
\subfigure[Index 4]{
\centering
    \includegraphics[scale=0.45]{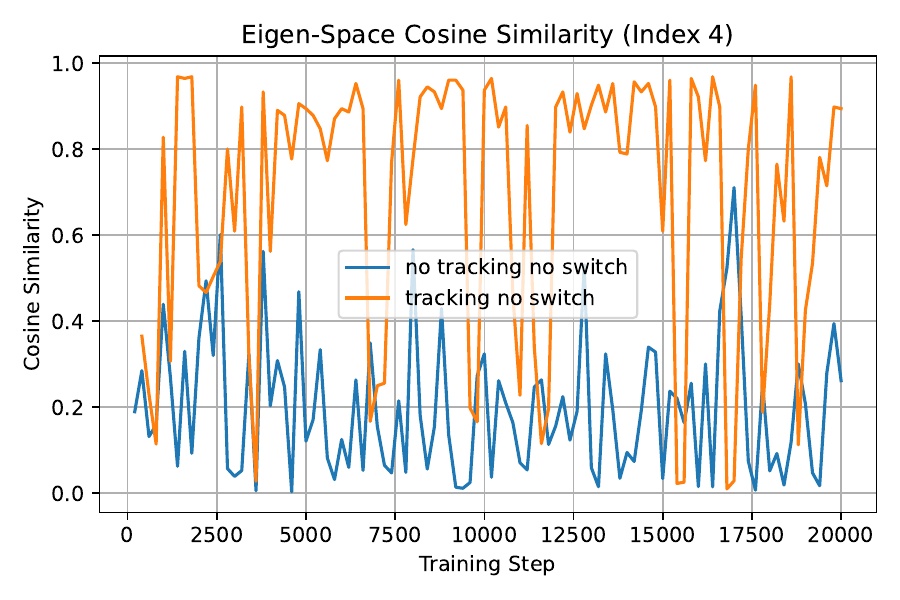}
    \label{fig:cossim index 4}
}\hfill
\subfigure[Index 8]{
\centering
    \includegraphics[scale=0.45]{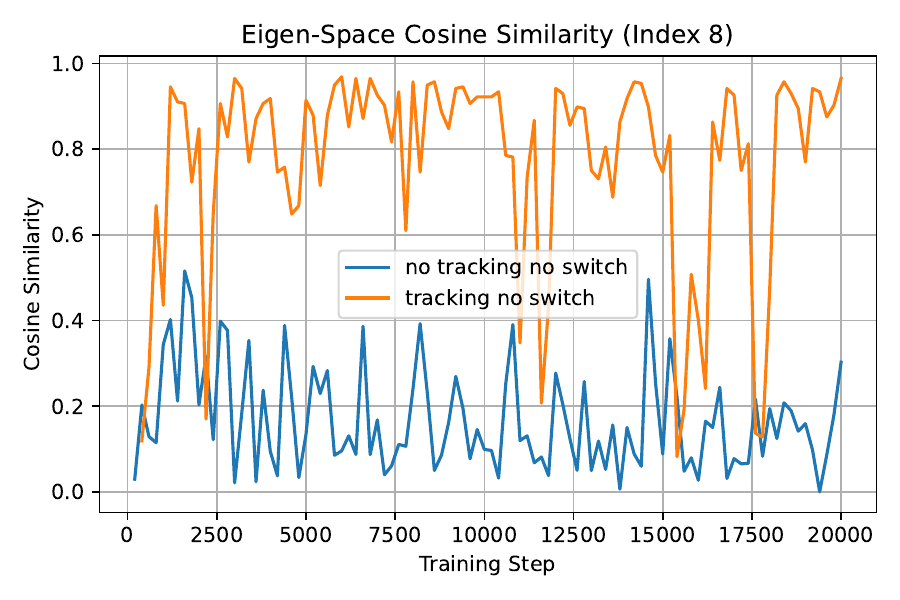}
    \label{fig:cossim index 8}
}
    \caption{The cosine similarity between eigenvectors per 200 steps. Since the update interval of subspace is $200$ steps, this essentially compare the similarity before and after updating the projection $\mU$ with a certain index. We always arrange the eigenvectors in a descending order based on the eigenvalues. }
    \label{fig: eigenspace cosine similiary}
\end{figure}

\end{document}